\def\eqref#1{equation~\ref{#1}}
\def\1{\bm{1}}
\DeclareMathAlphabet{\mathsfit}{\encodingdefault}{\sfdefault}{m}{sl}
\SetMathAlphabet{\mathsfit}{bold}{\encodingdefault}{\sfdefault}{bx}{n}
\definecolor{bootstrap}{HTML}{A9B3CE}
\definecolor{contagion}{HTML}{9E788F}
\theoremstyle{plain}
\newtheorem{theorem}{Theorem}[section]
\newtheorem{proposition}[theorem]{Proposition}
\newtheorem{conjecture}[theorem]{Conjecture}
\newtheorem{lemma}[theorem]{Lemma}
\newtheorem{corollary}[theorem]{Corollary}
\theoremstyle{definition}
\newtheorem{definition}[theorem]{Definition}
\newtheorem{example}[theorem]{Example}
\newtheorem{assumption}[theorem]{Assumption}
\theoremstyle{remark}
\newtheorem{remark}[theorem]{Remark}
\definecolor{spicypaprika}{HTML}{E4572E}
\definecolor{tropicalteal}{HTML}{17BEBB}
\definecolor{brightamber}{HTML}{FFC914}
\newtcolorbox{defbox}[1][]{colback=spicypaprika!5!white,colframe=spicypaprika!60!black,boxsep=-4pt,grow to left by=4pt,left=10pt,grow to right by=4pt,right=10pt,top=10pt,bottom=10pt,#1,breakable}
\newtcolorbox{thmbox}[1][]{colback=tropicalteal!5!white,colframe=tropicalteal!60!black,boxsep=-4pt,grow to left by=4pt,left=10pt,grow to right by=4pt,right=10pt,top=10pt,bottom=10pt,#1,breakable}
\newtcolorbox{exbox}[1][]{colback=brightamber!5!white,colframe=brightamber!60!black,boxsep=-4pt,grow to left by=4pt,left=10pt,grow to right by=4pt,right=10pt,top=10pt,bottom=10pt,#1,breakable}
\title{Self-Improvement as Coherence Optimization:\\A Theoretical Account}
\author{{\name Tianyi Qiu \email qiutianyi.qty@gmail.com \\ \addr Peking University\\University of Oxford}\AND
{\name Ahmed Hani Ismail \email ahmedhismail@berkeley.edu \\ \addr UC Berkeley}\AND
{\name Zhonghao He \email hezhonghao2030@gmail.com \\ \addr University of Oxford}\AND
{\name Shi Feng \email shi.feng@gwu.edu \\ \addr George Washington University}
}
\begin{document}

\maketitle

\begin{abstract}
Can language models improve their accuracy without external supervision? Methods such as debate, bootstrap, and internal coherence maximization achieve this surprising feat, even matching golden finetuning performance. Yet why they work remains theoretically unclear. We show that they are all special cases of \emph{coherence optimization}: finding a context-to-behavior mapping that's most compressible and jointly predictable. We prove that coherence optimization is equivalent to description-length regularization, and that among all such regularization schemes, it is optimal for semi-supervised learning when the regularizer is derived from a pretrained model. Our theory, supported by preliminary experiments, explains why feedback-free self-improvement works and predicts when it should succeed or fail.
\end{abstract}


\section{Introduction}

Language models can improve their accuracy without external supervision. Methods such as debate \citep{irving2018ai,khan2024debating}, internal coherence maximization (ICM) \citep{wen2025},\footnote{Connections and differences between ICM and coherence optimization is discussed in \S\ref{sec:related-work}.} iterative bootstrap \citep{lee2013,xu2023}, and Metropolis-Hastings sampling \citep{karan2025reasoning} have matched supervised finetuning performance on ground-truth labels, despite using nothing but unsupervised questions and a pretrained model not finetuned on the present task. This is surprising. Where does the ``supervision'' come from?

This paper provides a theoretical answer. We show that these methods all optimize the same objective: \emph{coherence}, defined as the joint likelihood of the model's behaviors across all contexts, with the joint likelihood in turn defined through autoregressive conditioning or training. We prove that coherence optimization is equivalent to description-length regularization, and that it is the optimal regularization scheme for semi-supervised learning when the regularizer is derived from a pretrained model.

\textbf{The Problem (Semi-Supervised Learning).} Consider a model responding to different contexts (e.g., question prompts), where each context is associated with a large space of possible behaviors (e.g., free-form answers). A \emph{deterministic policy} (d-policy) is a complete assignment of one behavior to each context. Given $N$ supervised labels, we want to find the d-policy that generalizes best to unseen contexts. The classical solution is empirical risk minimization (ERM), which fits the supervised labels exactly, but ERM overfits when the hypothesis space is large, as is the case for language models \citep{lampinen2025generalization}. By utilizing access to unlabeled contexts, we would like to improve the generalizability of the learned hypothesis.

\textbf{The Solution (Coherence as Regularization).} Statistical learning theory addresses overfitting through regularization that penalizes complex hypotheses \citep{vapnik1999overview}. One large class of regularizers are those constructed from description lengths under some prior distribution, the most famous of which being the Solomonoff prior \citep{solomonoff2009algorithmic}. We show that in a semi-supervised learning setup, the natural, and optimal, prior is the model itself pre-trained on supervised samples, and description length under this prior equals the negated coherence. Intuitively speaking, this is because such a coherence prior aligns best with the data-generating distribution and is thus most truthful. Empirical results support such an intuition.

Formally speaking, we prove that among all description-length regularization schemes for semi-supervised learning, coherence regularization derived from a KL-optimal prior is optimal in the following sense. It maximizes the worst-case lower bound on expected accuracy (Theorem~\ref{thm:optimal_regularization}). This formalizes the intuition that pretrained models provide the best available prior for regularization.

\textbf{The Algorithm (Gibbs Sampling).} Direct optimization of coherence is intractable because it requires evaluating the joint probability over exponentially many behavior combinations. We show that Gibbs sampling provides an efficient solution. The algorithm repeatedly (i) selects a context, (ii) resamples its behavior conditioned on all other current behaviors, and (iii) updates the d-policy. Under mild conditions, this converges to a distribution concentrated on high-coherence d-policies (Theorem~\ref{thm:gibbs_recovers_softmax}).

\textbf{Contributions.} We make three main contributions:
\begin{enumerate}[leftmargin=2em]
    \item We show that debate, bootstrap, and ICM are all special cases of coherence optimization.
    \item We prove that coherence optimization is the optimal form of description-length regularization for semi-supervised learning with a pretrained prior. Preliminary experiments show that coherence-based regularizers outperform LLM-as-a-judge as proxies for truthfulness.
    \item We present a simple, scalable, and theoretically grounded algorithm for general coherence optimization.
\end{enumerate}


\section{Related Works}\label{sec:related-work}

This section reviews literature on the formal and experimental study of feedback-free self-improvement.

\paragraph{Unsupervised Scalable Oversight and Feedback-Free Self-Improvement.} Scalable oversight \citep{bowman2022measuring}, an area of AI alignment research \citep{ji2025ai}, focuses on directly or indirectly supervising strong and potentially superhuman models, ones capable of gaming any available supervision signal. Classical methods of scalable oversight include debate \citep{irving2018ai,brown2024scalable,khan2024debating}, iterated amplification \citep{wu2021recursively}, recursive reward modeling \citep{leike2018scalable}, and weak-to-strong generalization \citep{kenton2024scalable}. Some of these methods, including debate, are unsupervised or self-supervised, where the aim is to uplift the capabilities of a model using its own supervision, without sacrificing safety properties. Newer methods in this category, including internal coherence maximization \citep{wen2025} and sampling methods based on Markov chain Monte Carlo \citep{karan2025reasoning}, have managed to match supervised finetuning performance on golden labels, a surprising feat for unsupervised methods. To date, the study of such methods has remained heuristic and entirely empirical --- with rare exceptions studying the computational complexity of debate \citep{brown2024scalable,brown2025avoiding} --- and it is generally not understood why such methods work. This paper aims to change that by formally characterizing the mechanism of such feedback-free self-improvement (i.e., that these methods are policy-wide description-length regularization), and proves that coherence is the optimal regularization scheme for semi-supervised learning. It also gives a practical and easily scalable algorithm and demonstrates its empirical promise. The ICM algorithm \citep{wen2025} is closely related to the formalism of coherence optimization. It optimizes for the bidirectional variant of the autoregressively defined objective in coherence optimization, as will be discussed in \S\ref{sec:reductions}. It is exactly this difference, however, that enables the scalable optimization and formal analysis of the autoregressive objective in this paper, which was difficult or intractable for ICM.

\paragraph{Theory of Semi-Supervised Learning and Regularization.} The theory of semi-supervised learning characterizes how knowledge of the marginal distribution $P(x)$ of the input $x$ restricts the effective complexity of learning the conditional output distribution $P(y|x)$. Consistency regularization is a classical type of semi-supervised learning method, shown to reduce the local Rademacher complexity of the hypothesis class when applied on unlabeled data and lead to tighter generalization bounds \citep{amini2018pseudo}. While it is famously shown that unlabeled data improves sample complexity by at most a constant factor in the worst case \citep{ben2008does}, that constant factor depends on the data dimension and can be large in high-dimensional regimes. For instance, in sparse Gaussian mixtures, unlabeled data enables polynomial-time learning where supervised methods are computationally intractable \citep{cai2024semi}, and in the case of large neural networks, the neural scaling law varies with the intrinsic dimension of the data manifold \citep{sharma2020neural}. However, high dimensionality can also be a curse; dimensional collapse --- where embeddings degenerate into a low-rank subspace due to implicit regularization --- is a common failure mode for semi-supervised learning \citep{jing2022understanding}, which \citet{he2024preventing} propose countering via orthogonality regularization. In the deep learning regime, consistency regularization is shown to be equivalent to spectral decomposition of the augmentation graph \citep{haochen2021provable}. \citet{wei2020theoretical} provided guarantees for self-training under an expansion assumption, a condition recently adapted to explain how strong student models can generalize beyond weak supervisors \citep{lang2024theoretical} . At the information-theoretic limit, these regularizers can be modeled with Solomonoff induction \citep{leike2016nonparametric}, a connection formalized with singular learning theory to link loss landscape degeneracy to model compressibility \citep{wei2025singular}. Our work contributes to this literature by (i) generalizing consistency regularization from invariance under local $\epsilon$-perturbations to predictability under arbitrary context differences; and (ii) suggesting that in the high-dimensional setting of language models, coherence regularization optimally reaps the large, dimensionality-dependent improvement factor from unlabeled data access, as upper-bounded by \citet{ben2008does}. 

\paragraph{Formal Accounts of Reflective Equilibrium.} Interestingly, another line of related work comes from philosophical epistemology, which gives a conceptual and formal description of the possible \emph{end states} of coherence optimization. Reflective equilibrium, first introduced in \citet{rawls1971theory}, refers to the process of iteratively revising one's principles and case judgments to remove inconsistencies between them, until eventually converging upon the state of equilibrium. While the original proposal cleanly distinguishes principles from judgments, there have been generalized proposals considering beliefs lying on the full spectrum of generality \citep{rawls2005political}. Attempts to formalize reflective equilibrium with the language of AI started with \citet{beisbart2021making}, which built a formal model of reflective equilibrium with a mix of classical logic and optimization formalism, while stochastic elements were later introduced in \citet{dellsen2024probabilifying}. Prompted by theoretical interest in the question of when reflective equilibria of individuals converge in a group \citep{tersman2024seeking}, simulation studies have emerged with a diversity of setup design, including in the space of formal logic with logical inference distances as proximity measure \citep{lohse2023overlapping} and in the space of simple classifiers with classification margins as proximity measure \citep{baumgaertner2024precedent}. There also exists theoretical debate on, to paraphrase in machine learning language, whether the mutual support in a reflective equilibrium should be autoregressive \citep{daniels1996justice,holmgren1989wide} or bidirectional \citep{depaul2006balance,haslett1987wrong}, with concerns that the latter leads to circularity; we revisit this question in \S\ref{sec:algorithms}, when comparing ICM and coherence optimization. All the works above have been limited to highly simplistic belief spaces and usually without any element of \emph{learning}, which makes such formalisms inappropriate for actual application in machine learning. We aim to change this by introducing an expressive, learning-based formalism that comes directly with practical algorithms implemented on large language models, and provides grounding for reflective equilibrium by associating it, both theoretically and experimentally, with increased accuracy and truthfulness.

\section{Defining Coherence}\label{sec:defining_coherence}

We develop a unified framework for understanding coherence optimization as optimal regularization in semi-supervised learning. We begin with the abstract problem, then introduce learning systems as a tractable way to implement the optimal solution.

\subsection{The Semi-Supervised Learning Problem}\label{sec:semi-supervised}

Consider a semi-supervised learning setup with behavior space $\mathcal A$ (a finite set of all possible behaviors) and context partition $\mathcal S \in \Pi(\mathcal A)$ (a partition of $\mathcal A$ into sets of competing behaviors). Each context $s \in \mathcal S$ is a set of mutually exclusive behaviors, and each behavior $a \in \mathcal A$ belongs to exactly one context.

\begin{defbox}
\begin{definition}[Deterministic Policy]
A \emph{deterministic policy} (d-policy) is a function $\pi: \mathcal S \to \mathcal A$ such that $\pi(s) \in s$ for all $s \in \mathcal S$. The space of all d-policies is $\mathcal A^{\mathcal S} \coloneqq \prod_{s \in \mathcal S} s$.
\end{definition}
\end{defbox}

Let $\mathcal D\in\Delta\left[{\mathcal A}^{\mathcal S}\right]$ be the data-generating distribution over d-policies. Let $\mathcal F\in\Delta[\mathcal S]$ be the context distribution, and $\{(s_n,a_n)\}_{n=1}^N$ the supervised samples (context-behavior pairs with ground-truth labels). We assume unlimited access to unsupervised samples (contexts in $\mathcal F$, without labels).

Our goal is to find a d-policy $\pi\in{\mathcal A}^{\mathcal S}$ that maximizes accuracy:
\begin{equation*}
    \alpha(\pi)\coloneqq \mathrm{Pr}_{s\sim\mathcal F,\pi^*\sim\mathcal D}\left[ \pi(s)=\pi^*(s) \right].
\end{equation*}

Empirical risk minimization (ERM) overfits when the hypothesis space is too large and expressive. The classical remedy is structural risk minimization (SRM) with description-length regularization. Given a prior distribution $\mathcal P\in\Delta\left[\mathcal A^{\mathcal S}\right]$ over d-policies,
\begin{align}
    \pi_{\text{SRM}} &\coloneqq \mathop{\arg\max}_{\pi\in{\mathcal A}^{\mathcal S}} \ {{\alpha}_{\textup{train}}}(\pi) - \mathrm{reg}(\pi),\text{ where}\label{eq:SRM-opt}\\
    {{\alpha}_{\textup{train}}}(\pi) &\coloneqq \frac 1N \sum_{n=1}^N \mathbf{1}_{\pi(s_n)=a_n},\nonumber\\
    \mathrm{reg}(\pi) &\coloneqq \left({\frac{-2\log_2 \mathcal P(\pi)+\log_2 e-\log_2 (1/\delta)}{2N}}\right)^{1/2},\nonumber
\end{align}
and $\mathcal P(\pi)$ is the probability mass of d-policy $\pi$ under prior $\mathcal P$.

We will show in \S\ref{sec:upper_bound} that the optimal choice of prior $\mathcal P$ for SRM is the one that minimizes $\mathrm{KL}[\mathcal D \| \mathcal P]$, i.e., the best available approximation to the true data-generating distribution. For language models, this is the pretrained model. \S\ref{sec:worst_case} formalizes this intuition and proves that coherence regularization with a pretrained prior is optimal among all description-length regularizers in a worst-case asymptotic sense.

\subsection{Learning Systems as Tractable Priors}\label{sec:learning_systems}

The optimal prior $\mathcal P$ is the best approximation to the data-generating distribution, but we still need to compute probabilities under it. A prior over d-policies assigns probability to each of the $\prod_{s \in \mathcal S} |s|$ possible d-policies, and direct enumeration is intractable.

We need priors that allow efficient computation. A \emph{learning system} defines such a prior by specifying how to compute d-policy probabilities autoregressively, as the probability of behavior $a_1$ in the context $s_1$, times the probability of $a_2$ in $s_2$ conditioned on $(s_1, a_1)$, and so on. The log of this joint probability thereby equals what we call the \emph{coherence} of the d-policy. By optimizing for coherence, we maximize $\log_2 \mathcal P(\pi)$, and thereby minimize $\mathrm{reg}(\pi)$ in Equation~\ref{eq:SRM-opt}. For now, we will thus focus on characterizing and optimizing for coherence.

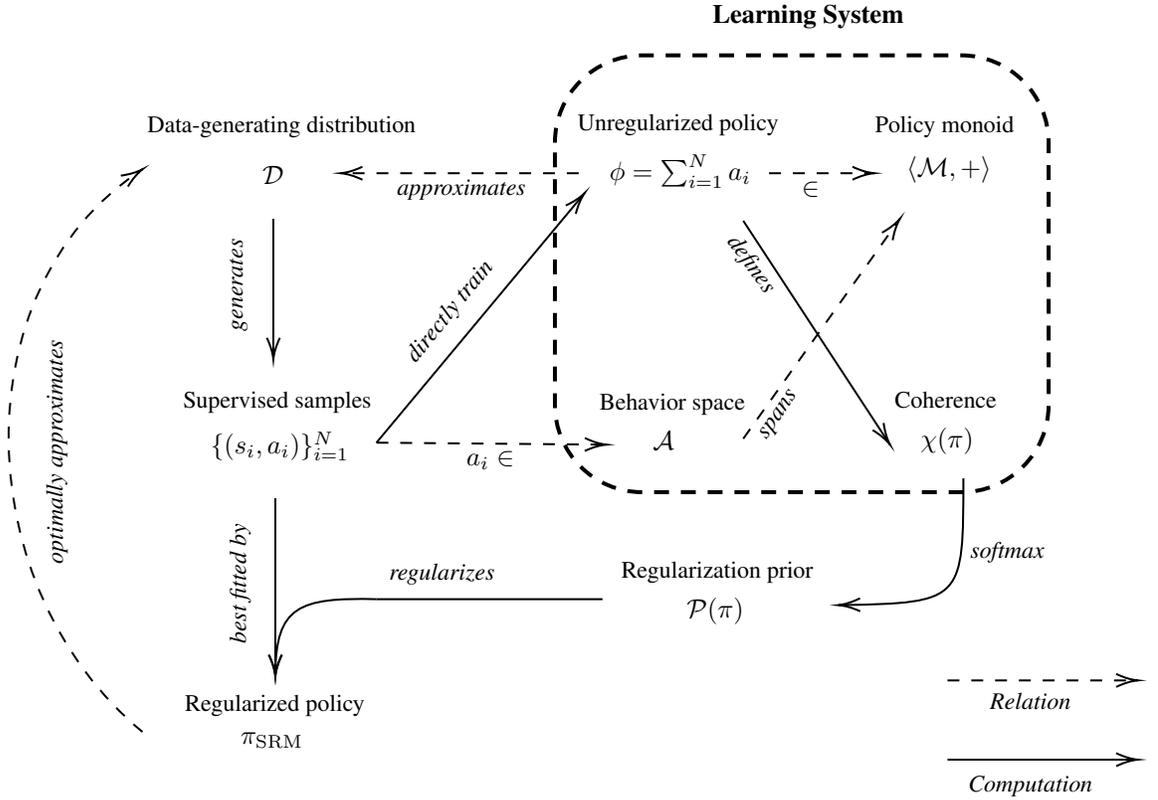
\begin{figure}[tp]
\centering
\tikzset{every picture/.style={line width=0.75pt}} 

\begin{tikzpicture}[x=0.75pt,y=0.75pt,yscale=-1,xscale=1]

\draw    (209,121.5) -- (209,190.5) ;
\draw [shift={(209,192.5)}, rotate = 270] [color={rgb, 255:red, 0; green, 0; blue, 0 }  ][line width=0.75]    (10.93,-3.29) .. controls (6.95,-1.4) and (3.31,-0.3) .. (0,0) .. controls (3.31,0.3) and (6.95,1.4) .. (10.93,3.29)   ;
\draw    (210,262.5) -- (210,350.5) ;
\draw [shift={(210,352.5)}, rotate = 270] [color={rgb, 255:red, 0; green, 0; blue, 0 }  ][line width=0.75]    (10.93,-3.29) .. controls (6.95,-1.4) and (3.31,-0.3) .. (0,0) .. controls (3.31,0.3) and (6.95,1.4) .. (10.93,3.29)   ;
\draw    (261,313.5) .. controls (221.6,312.52) and (210.34,317.35) .. (210.01,350.95) ;
\draw [shift={(210,352.5)}, rotate = 270] [color={rgb, 255:red, 0; green, 0; blue, 0 }  ][line width=0.75]    (10.93,-3.29) .. controls (6.95,-1.4) and (3.31,-0.3) .. (0,0) .. controls (3.31,0.3) and (6.95,1.4) .. (10.93,3.29)   ;
\draw    (261,313.5) -- (375,313.5) ;
\draw  [dash pattern={on 5.63pt off 4.5pt}][line width=1.5]  (351,83.1) .. controls (351,58.74) and (370.74,39) .. (395.1,39) -- (555.9,39) .. controls (580.26,39) and (600,58.74) .. (600,83.1) -- (600,215.4) .. controls (600,239.76) and (580.26,259.5) .. (555.9,259.5) -- (395.1,259.5) .. controls (370.74,259.5) and (351,239.76) .. (351,215.4) -- cycle ;
\draw  [dash pattern={on 4.5pt off 4.5pt}]  (261,234.5) -- (375,235.48) ;
\draw [shift={(377,235.5)}, rotate = 180.49] [color={rgb, 255:red, 0; green, 0; blue, 0 }  ][line width=0.75]    (10.93,-3.29) .. controls (6.95,-1.4) and (3.31,-0.3) .. (0,0) .. controls (3.31,0.3) and (6.95,1.4) .. (10.93,3.29)   ;
\draw  [dash pattern={on 4.5pt off 4.5pt}]  (459,98.5) -- (510,98.5) ;
\draw [shift={(512,98.5)}, rotate = 180] [color={rgb, 255:red, 0; green, 0; blue, 0 }  ][line width=0.75]    (10.93,-3.29) .. controls (6.95,-1.4) and (3.31,-0.3) .. (0,0) .. controls (3.31,0.3) and (6.95,1.4) .. (10.93,3.29)   ;
\draw    (261,234.5) -- (364.72,110.04) ;
\draw [shift={(366,108.5)}, rotate = 129.81] [color={rgb, 255:red, 0; green, 0; blue, 0 }  ][line width=0.75]    (10.93,-3.29) .. controls (6.95,-1.4) and (3.31,-0.3) .. (0,0) .. controls (3.31,0.3) and (6.95,1.4) .. (10.93,3.29)   ;
\draw  [dash pattern={on 4.5pt off 4.5pt}]  (363,98.5) -- (245,98.5) ;
\draw [shift={(243,98.5)}, rotate = 360] [color={rgb, 255:red, 0; green, 0; blue, 0 }  ][line width=0.75]    (10.93,-3.29) .. controls (6.95,-1.4) and (3.31,-0.3) .. (0,0) .. controls (3.31,0.3) and (6.95,1.4) .. (10.93,3.29)   ;
\draw  [dash pattern={on 4.5pt off 4.5pt}]  (446,232.5) -- (523.84,123.13) ;
\draw [shift={(525,121.5)}, rotate = 125.44] [color={rgb, 255:red, 0; green, 0; blue, 0 }  ][line width=0.75]    (10.93,-3.29) .. controls (6.95,-1.4) and (3.31,-0.3) .. (0,0) .. controls (3.31,0.3) and (6.95,1.4) .. (10.93,3.29)   ;
\draw    (446,122.5) -- (518.9,233.83) ;
\draw [shift={(520,235.5)}, rotate = 236.78] [color={rgb, 255:red, 0; green, 0; blue, 0 }  ][line width=0.75]    (10.93,-3.29) .. controls (6.95,-1.4) and (3.31,-0.3) .. (0,0) .. controls (3.31,0.3) and (6.95,1.4) .. (10.93,3.29)   ;
\draw    (557,252.5) .. controls (557,313.88) and (557,315.47) .. (495.87,316.47) ;
\draw [shift={(494,316.5)}, rotate = 359.09] [color={rgb, 255:red, 0; green, 0; blue, 0 }  ][line width=0.75]    (10.93,-3.29) .. controls (6.95,-1.4) and (3.31,-0.3) .. (0,0) .. controls (3.31,0.3) and (6.95,1.4) .. (10.93,3.29)   ;
\draw  [dash pattern={on 4.5pt off 4.5pt}]  (549,354.5) -- (643,354.5) ;
\draw [shift={(645,354.5)}, rotate = 180] [color={rgb, 255:red, 0; green, 0; blue, 0 }  ][line width=0.75]    (10.93,-3.29) .. controls (6.95,-1.4) and (3.31,-0.3) .. (0,0) .. controls (3.31,0.3) and (6.95,1.4) .. (10.93,3.29)   ;
\draw    (549,394.5) -- (643,394.5) ;
\draw [shift={(645,394.5)}, rotate = 180] [color={rgb, 255:red, 0; green, 0; blue, 0 }  ][line width=0.75]    (10.93,-3.29) .. controls (6.95,-1.4) and (3.31,-0.3) .. (0,0) .. controls (3.31,0.3) and (6.95,1.4) .. (10.93,3.29)   ;
\draw  [dash pattern={on 4.5pt off 4.5pt}]  (143,380.5) .. controls (55.44,307.86) and (51.04,158.01) .. (141.63,97.41) ;
\draw [shift={(143,96.5)}, rotate = 146.89] [color={rgb, 255:red, 0; green, 0; blue, 0 }  ][line width=0.75]    (10.93,-3.29) .. controls (6.95,-1.4) and (3.31,-0.3) .. (0,0) .. controls (3.31,0.3) and (6.95,1.4) .. (10.93,3.29)   ;

\draw (202,93.4) node [anchor=north west][inner sep=0.75pt]    {$\mathcal{D}$};
\draw (144,68) node [anchor=north west][inner sep=0.75pt]   [align=left] {{\small {\fontfamily{ptm}\selectfont Data-generating distribution}}};
\draw (162,207) node [anchor=north west][inner sep=0.75pt]   [align=left] {{\small {\fontfamily{ptm}\selectfont Supervised samples}}};
\draw (176,227.4) node [anchor=north west][inner sep=0.75pt]    {$\{( s_{i} ,a_{i})\}_{i=1}^{N}$};
\draw (163,360) node [anchor=north west][inner sep=0.75pt]   [align=left] {{\small {\fontfamily{ptm}\selectfont Regularized policy}}};
\draw (191,379.4) node [anchor=north west][inner sep=0.75pt]    {$\pi _{\mathrm{SRM}}$};
\draw (185.5,337.44) node [anchor=north west][inner sep=0.75pt]  [rotate=-270] [align=left] {{\small {\fontfamily{ptm}\selectfont \textit{best fitted by}}}};
\draw (266,293.94) node [anchor=north west][inner sep=0.75pt]   [align=left] {{\small {\fontfamily{ptm}\selectfont \textit{regularizes}}}};
\draw (185.5,179.44) node [anchor=north west][inner sep=0.75pt]  [rotate=-270] [align=left] {{\small {\fontfamily{ptm}\selectfont \textit{generates}}}};
\draw (383,293) node [anchor=north west][inner sep=0.75pt]   [align=left] {{\small {\fontfamily{ptm}\selectfont Regularization prior}}};
\draw (416,311.4) node [anchor=north west][inner sep=0.75pt]    {$\mathcal{P}( \pi )$};
\draw (377,87.4) node [anchor=north west][inner sep=0.75pt]    {$\phi =\sum\nolimits _{i=1}^{N} a_{i}$};
\draw (527,88.4) node [anchor=north west][inner sep=0.75pt]    {$\langle \mathcal{M} ,+\rangle $};
\draw (511,68) node [anchor=north west][inner sep=0.75pt]   [align=left] {{\small {\fontfamily{ptm}\selectfont Policy monoid}}};
\draw (399,227.4) node [anchor=north west][inner sep=0.75pt]    {$\mathcal{A}$};
\draw (372,208) node [anchor=north west][inner sep=0.75pt]   [align=left] {{\small {\fontfamily{ptm}\selectfont Behavior space}}};
\draw (305,237.4) node [anchor=north west][inner sep=0.75pt]    {$a_{i} \in $};
\draw (361,67) node [anchor=north west][inner sep=0.75pt]   [align=left] {{\small {\fontfamily{ptm}\selectfont Unregularized policy}}};
\draw (474,101.4) node [anchor=north west][inner sep=0.75pt]    {$\in $};
\draw (270,99.94) node [anchor=north west][inner sep=0.75pt]   [align=left] {{\small {\fontfamily{ptm}\selectfont \textit{approximates}}}};
\draw (272.36,189.28) node [anchor=north west][inner sep=0.75pt]  [rotate=-309.11] [align=left] {\textit{{\small {\fontfamily{ptm}\selectfont directly train}}}};
\draw (451.29,230.04) node [anchor=north west][inner sep=0.75pt]  [rotate=-304.65] [align=left] {\textit{{\small {\fontfamily{ptm}\selectfont spans}}}};
\draw (444.74,125.88) node [anchor=north west][inner sep=0.75pt]  [rotate=-54.06] [align=left] {\textit{{\small {\fontfamily{ptm}\selectfont defines}}}};
\draw (534,225.4) node [anchor=north west][inner sep=0.75pt]    {$\chi ( \pi )$};
\draw (521,207) node [anchor=north west][inner sep=0.75pt]   [align=left] {{\small {\fontfamily{ptm}\selectfont Coherence}}};
\draw (559,283) node [anchor=north west][inner sep=0.75pt]   [align=left] {{\fontfamily{ptm}\selectfont {\small \textit{softmax}}}};
\draw (429,12) node [anchor=north west][inner sep=0.75pt]   [align=left] {{\fontfamily{ptm}\selectfont \textbf{Learning System}}};
\draw (569,359) node [anchor=north west][inner sep=0.75pt]   [align=left] {{\fontfamily{ptm}\selectfont {\small \textit{Relation}}}};
\draw (558,401) node [anchor=north west][inner sep=0.75pt]   [align=left] {{\small {\fontfamily{ptm}\selectfont \textit{Computation}}}};
\draw (93.1,296.95) node [anchor=north west][inner sep=0.75pt]  [rotate=-269.91] [align=left] {{\small {\fontfamily{ptm}\selectfont \textit{optimally approximates}}}};

\end{tikzpicture}
\caption{The coherence optimization framework. The data-generating distribution $\mathcal D$ produces supervised samples. Coherence optimization finds $\pi_{\mathrm{SRM}}$ maximizing empirical accuracy with regularization from prior $\mathcal P$. A learning system $(\mathcal M, \mathcal A, \mathcal S, \sigma)$ defines the coherence function $\chi$, providing a tractable instance of $\mathcal P$ via softmax over coherence: $\mathcal P(\pi) = \mathrm X^\beta(\pi) \propto 2^{\beta\chi(\pi)}$. Bayesian inference, in-context learning (ICL), and finetuning are instances of learning systems.}
\label{fig:framework}
\end{figure}

\begin{defbox}
\begin{definition}[Learning System] \label{def:learning_system}
A learning system is a quadruple $(\mathcal M, \mathcal A, \mathcal S, \sigma)$, where the \emph{policy monoid} $\langle \mathcal M,+ \rangle$ is a commutative monoid with the identity element $0$ (the \emph{base policy}), a finite generating set $\mathcal A\subset \mathcal M$ of $\mathcal M$ (the \emph{behavior space}), a partition $\mathcal S\in\Pi(\mathcal A)$ of the behavior space $\mathcal A$ (the \emph{context partition} or the \emph{competing behavior partition}), and the \emph{inference function} $\sigma$ mapping every $(\phi,s)\in \mathcal M\times\mathcal S$ to a distribution over $s$ that satisfies the \emph{chain rule}:
\begin{equation*}
\sigma(\phi,s_1)(a_1)\cdot \sigma(\phi+a_1,s_2)(a_2)=\sigma(\phi,s_2)(a_2)\cdot \sigma(\phi+a_2,s_1)(a_1),\quad\forall a_1\in s_1\in \mathcal S,a_2\in s_2\in\mathcal S.
\end{equation*}

\end{definition}
\end{defbox}

The notation in Definition~\ref{def:learning_system} directly extends \S\ref{sec:semi-supervised}. The behavior space $\mathcal A$ and context partition $\mathcal S$ are the same objects, and a d-policy $\pi \in \mathcal A^{\mathcal S}$ satisfies $\pi(s) \in s$ for all $s \in \mathcal S$. The main purpose of learning systems is to provide a tractable class of priors $\mathcal P$ for SRM, namely via the \emph{softmax over coherence} (defined below).

Intuitively, every element in the behavior space is a ``sample'' with a certain input-output behavior, and every stochastic policy $\phi \in \mathcal M$ can be represented by a multiset of samples, i.e., its ``training data''. Each context $s \in \mathcal S$ is a set of competing behaviors. The chain rule asks that the cumulative ``loss'' of sequentially training on multiple samples is not affected by the ordering of samples.

\vspace{1em}

\begin{exbox}
\begin{example}[Bayesian Learning System] \label{ex:bayesian}
A Bayesian learning system is defined by
\begin{itemize}
    \item[$\langle \mathcal M,+ \rangle$] Let a latent variable $\theta\in\Theta$ govern the data-generating process. A stochastic policy $\phi\in \mathcal M$ corresponds to a posterior distribution $\mathrm{Pr}[\theta\mid D_\phi]$ over $\Theta$, obtained by conditioning a base prior $\mathrm{Pr}[\theta]$ on a multiset of observations $D_\phi$. The base policy $0$ corresponds to the prior $\mathrm{Pr}[\theta]$ (i.e., $D_0 = \emptyset$). The commutative monoid $\langle \mathcal M,+ \rangle$ is the space of these observation multisets, with multiset union as the operation $+$.
    \item[$\mathcal A$] The set of all possible observations or data points.
    \item[$\mathcal S$] A partition of $\mathcal A$ where each context $s\in\mathcal S$ represents a set of mutually exclusive outcomes for a single experiment or query.
    \item[$\sigma$] The Bayesian predictive distribution. For a stochastic policy $\phi$ (representing data $D_\phi$) and a context $s$, the probability of observing a specific behavior $a\in s$ is its marginal likelihood: $\sigma(\phi,s)(a) = \mathrm{Pr}[a\mid D_\phi] = \int_{\theta\in\Theta} \mathrm{Pr}[a\mid\theta] \mathrm{Pr}[\theta\mid D_\phi] \mathrm d\theta$.
\end{itemize}
The chain rule holds as a direct consequence of the definition of joint probability. The LHS is
\begin{equation*}
\sigma(\phi,s_1)(a_1)\cdot \sigma(\phi+a_1,s_2)(a_2) = \mathrm{Pr}[a_1\mid D_\phi]\mathrm{Pr}[a_2\mid D_\phi,a_1] = \mathrm{Pr}[a_1,a_2\mid D_\phi],
\end{equation*}
while the RHS is
\begin{equation*}
\sigma(\phi,s_2)(a_2)\cdot \sigma(\phi+a_2,s_1)(a_1) = \mathrm{Pr}[a_2\mid D_\phi]\mathrm{Pr}[a_1\mid D_\phi,a_2] = \mathrm{Pr}[a_2,a_1\mid D_\phi].
\end{equation*}
Since joint probability is symmetric, the rule holds.
\end{example}
\end{exbox}

\vspace{1em}

\begin{exbox}
\begin{example}[In-Context Learning System] \label{ex:icl}
An in-context learning system is defined by
\begin{itemize}
\item[$\langle \mathcal M,+ \rangle$] The policy space $\mathcal M$ is the set of all possible multisets of in-context examples. The base policy $0$ is the empty multiset. The operation $+$ is multiset union, which is commutative by definition. In practice, these multisets are linearized into sequences to be fed into a language model, and this linearization breaks strict commutativity; hence, this is an approximation.
\item[$\mathcal A$] The set of individual input-output pairs that can be used as in-context examples, e.g., (context, behavior) pairs.
\item[$\mathcal S$] Each context $s \in \mathcal S$ is the set of possible behaviors for a given input. For instance, for an input $q_s$, $s = \{(q_s, a) \mid a \in \text{possible behaviors}\}$.
\item[$\sigma$] For a stochastic policy $\phi \in \mathcal M$ (a multiset of examples) and a context $s$ corresponding to input $q_s$, $\sigma(\phi,s)(a)$ for $a=(q_s,\tilde a)\in s$ is the probability assigned by the language model to generating behavior $\tilde a$ given the prompt constructed from $\phi$ and $q_s$. That is, $\sigma(\phi,s)(a) = P_\text{LM}(\tilde a | \text{context}(\phi), q_s)$.
\end{itemize}
For in-context learning systems, it is unclear how well the commutativity of $\langle \mathcal M,+ \rangle$ and the chain rule hold in practice. However, to the extent that in-context learning is an approximation of Bayesian inference \citep{xie2021}, such assumptions make sense as a first-order approximation. Also, in practice, we only face in-context sequences that are relatively well-mixed, so order dependency issues caused by a big in-context ``distribution shift'' over the turns is not a severe concern here.
\end{example}
\end{exbox}

\vspace{1em}

\begin{exbox}
\begin{example}[Finetuning Learning System] \label{ex:sft}
A finetuning learning system is defined by\ldots
\begin{itemize}
\item[$\langle \mathcal M,+ \rangle$] The policy space $\mathcal M$ can be identified with the set of all possible multisets of training data. The base policy $0$ corresponds to the empty dataset and a randomly initialized model. The operation $+$ is multiset union. A stochastic policy $\phi \in \mathcal M$ represents the dataset used to train a model from its initial state to its current state.
\item[$\mathcal A$] The set of all possible labeled data points $(x, y)$.
\item[$\mathcal S$] Each context $s \in \mathcal S$ corresponds to a single input $x_s$ and is the set of all possible labeled data points for that input: $s = \{(x_s, y) \mid y \in \text{possible labels}\}$.
\item[$\sigma$] For a stochastic policy $\phi \in \mathcal M$ (representing training data $D_\phi$) that has produced a model with parameters $\theta_\phi$, and a context $s$ corresponding to input $x_s$, $\sigma(\phi,s)(a)$ for $a=(x_s,y)\in s$ is the probability the model assigns to label $y$ for input $x_s$. That is, $\sigma(\phi,s)(a) = P(y | x_s; \theta_\phi)$.
\end{itemize}
For finetuning learning systems, it is unclear how well the commutativity of $\langle \mathcal M,+ \rangle$ and the chain rule hold in practice. For instance, the order of training data matters for stochastic gradient descent. However, to the extent that supervised learning is an approximation of Bayesian inference \citep{mingrad2021}, such assumptions make sense as a first-order approximation. Also, as before, we only face training example sequences that are relatively well-mixed, so order dependency issues caused by big distribution shifts are not a severe concern here.
\end{example}
\end{exbox}

D-policies $\pi \in \mathcal A^{\mathcal S}$ are to be distinguished from stochastic policies $\phi\in \mathcal M$. The latter is probabilistic, mapping every context to a distribution over its behaviors via the inference function $\sigma$.

We now define coherence, the central quantity of this paper. Coherence measures how well the responses in a d-policy ``fit together'' according to the prior policy. A high-coherence d-policy is one where each response is predictable given the others.

\begin{defbox}
\begin{definition}[Coherence]
Given any \emph{prior policy} $\phi\in \mathcal M$, for any d-policy $\pi$, its \emph{coherence} $\chi_{\phi}(\pi)$ relative to $\phi$ is defined by
\[
\chi_\phi(\pi)=\sum_{n=1}^{|\mathcal S|}  \log_2 \sigma\left(\phi+\sum_{m=1}^{n-1} \pi(s_m),s_n\right)\big(\pi(s_n)\big)\leq 0
\]
where $s_1,\cdots,s_{|\mathcal S|}$ are the contexts in $\mathcal S$ in arbitrary order. The chain rule ensures that $\chi_\phi(\pi)$ is well-defined, as it implies that one can always swap neighbouring contexts in the sequence $\{s_i\}$ without changing $\chi_\phi(\pi)$, and thus the ordering of $\{s_i\}$ does not affect $\chi_\phi(\pi)$. In particular, we denote $\chi(\pi)\coloneqq \chi_0(\pi)$.
\end{definition}
\end{defbox}

From now on, we will denote with $s_1,\cdots,s_{|\mathcal S|}$ an arbitrary ordering of contexts in $\mathcal S$.

\begin{defbox}
\begin{definition}[Softmax Over Coherence]\label{def:softmax-over-coh}
For any \emph{temperature reciprocal} $\beta>0$, the \emph{softmax over coherence} $\mathrm X^\beta:\mathcal A^{\mathcal S}\to \mathbb{R}_{\geq 0}$ is a probability distribution over d-policies such that the probability mass $\mathrm X^\beta(\pi)\propto 2^{\beta \chi(\pi)}$. In particular, when $\beta=+\infty$, $\mathrm X^\beta$ collapses into a uniform distribution over $\mathop{\arg\max}_\pi \chi(\pi)$, with $0$ probability mass everywhere else.
\end{definition}
\end{defbox}

The softmax over coherence at $\beta=1$ provides an instance of the prior $\mathcal P$ from \S\ref{sec:semi-supervised}. Setting $\mathcal P = \mathrm X^1$ gives $\mathcal P(\pi) \propto 2^{\chi(\pi)}$. This is the tractable prior that learning systems provide for description-length regularization.

\begin{exbox}
\begin{example}[Sauces] \label{ex:sauces}
Consider the following learning system. The context partition is $\mathcal S= \{s^\text{burger}, s^\text{fries}\}$, where $s^\text{burger}=\{a^\text{burger}_\text{mayo},a^\text{burger}_\text{mustard},a^\text{burger}_\text{other}\}$ contains behaviors in the context ``What sauce do you use for burgers?'' and $s^\text{fries}=\{a^\text{fries}_\text{mayo},a^\text{fries}_\text{ketchup},a^\text{fries}_\text{other}\}$ contains competing behaviors in the context ``What sauce do you use for dipping French fries?''

\vspace{0.5em}

Consider the Bayesian learning system $\mathcal M$ with the following prior over $s^\text{burger}\times s^\text{fries}$:

    \begin{center}
    \begin{tabular}{lccc}
    \toprule
     &$a^\text{burger}_\text{mayo}$ & $a^\text{burger}_\text{mustard}$ & $a^\text{burger}_\text{other}$ \\[3pt]
    \midrule
    $a^\text{fries}_\text{mayo}$ & $0.3$ & $\epsilon$ & $\epsilon$ \\[3pt]
    $a^\text{fries}_\text{ketchup}$ & $\epsilon$ & $0.175-\epsilon$ & $0.175-\epsilon$ \\[3pt]
    $a^\text{fries}_\text{other}$ & $\epsilon$ & $0.175-\epsilon$ & $0.175-\epsilon$ \\[3pt]
    \bottomrule
    \end{tabular}
    \end{center}

Given the symmetry, let us only calculate $\mathrm X^{\beta}$ for the two d-policies $\pi_1 = (a^\text{burger}_\text{mayo},a^\text{fries}_\text{mayo})$ and $\pi_2 = (a^\text{burger}_\text{mustard},a^\text{fries}_\text{ketchup})$. For simplicity, assume $\epsilon=0$.

\vspace{0.5em}

In a Bayesian learning system, the coherence $\chi(\pi)$ simplifies to the $\log_2$ joint probability of the d-policy under the prior. Let $s_1=s^\text{burger}$ and $s_2=s^\text{fries}$. Then
    \begin{align*}
    \chi(\pi) &= \log_2 \sigma(0, s_1)(\pi(s_1)) + \log_2 \sigma(\pi(s_1), s_2)(\pi(s_2)) \\
    &= \log_2 P(\pi(s_1)) + \log_2 P(\pi(s_2)|\pi(s_1)) \\
    &= \log_2 \left( P(\pi(s_1)) \cdot \frac{P(\pi(s_1), \pi(s_2))}{P(\pi(s_1))} \right) \\
    &= \log_2 P(\pi(s_1), \pi(s_2)).
    \end{align*}

Using the provided table with $\epsilon=0$: for $\pi_1=(a^\text{burger}_\text{mayo},a^\text{fries}_\text{mayo})$, the joint probability is $P(a^\text{burger}_\text{mayo},a^\text{fries}_\text{mayo}) = 0.3$. Thus, its coherence is $\chi(\pi_1) = \log_2(0.3) \approx -1.74$. For $\pi_2=(a^\text{burger}_\text{mustard},a^\text{fries}_\text{ketchup})$, the joint probability is $P(a^\text{burger}_\text{mustard},a^\text{fries}_\text{ketchup}) = 0.175$. Thus, its coherence is $\chi(\pi_2) = \log_2(0.175) \approx -2.51$.

\vspace{0.5em}

We see that $\chi(\pi_1) > \chi(\pi_2)$. This means the d-policy of using mayo for both burgers and fries is more coherent, even though mayo is not the most popular marginal choice for either food item. The marginal probabilities are $P(a^\text{burger}_\text{mayo})=0.3$ vs. $P(a^\text{burger}_\text{mustard})=0.35$, and $P(a^\text{fries}_\text{mayo})=0.3$ vs. $P(a^\text{fries}_\text{ketchup})=0.35$.
\end{example}
\end{exbox}

\vspace{1em}

\begin{remark}[What is coherence?]
The sauces example might make coherence seem trivial at first. Why do we need another fancy name for joint probability? The value of such concept will be much clearer in an in-context learning system, and, even more so, in a finetuning learning system. In these setups, coherence is no longer reducible to well-known existing concepts and is hard to optimize for directly.

For an in-context learning system, coherence equals the cumulative perplexity when generating a list of d-policy responses \emph{sequentially}. Note that greedy decoding at temperature $0$ does \emph{not} optimize such a cumulative perplexity (neither in theory nor in practice); the only decoding method that does it in theory is beam search with an infinite number of beams.

For a finetuning learning system, coherence equals the cumulative cross-entropy loss when training the prior policy on a list of d-policy responses \emph{sequentially}. Despite the definition, however, it's almost certainly intractable to directly perform gradient descent on such a coherence function to optimize for it.
\end{remark}

Given these challenges in optimizing for coherence, how should we proceed?

\begin{figure}[tp]
    \centering
    \includegraphics[width=0.8\linewidth]{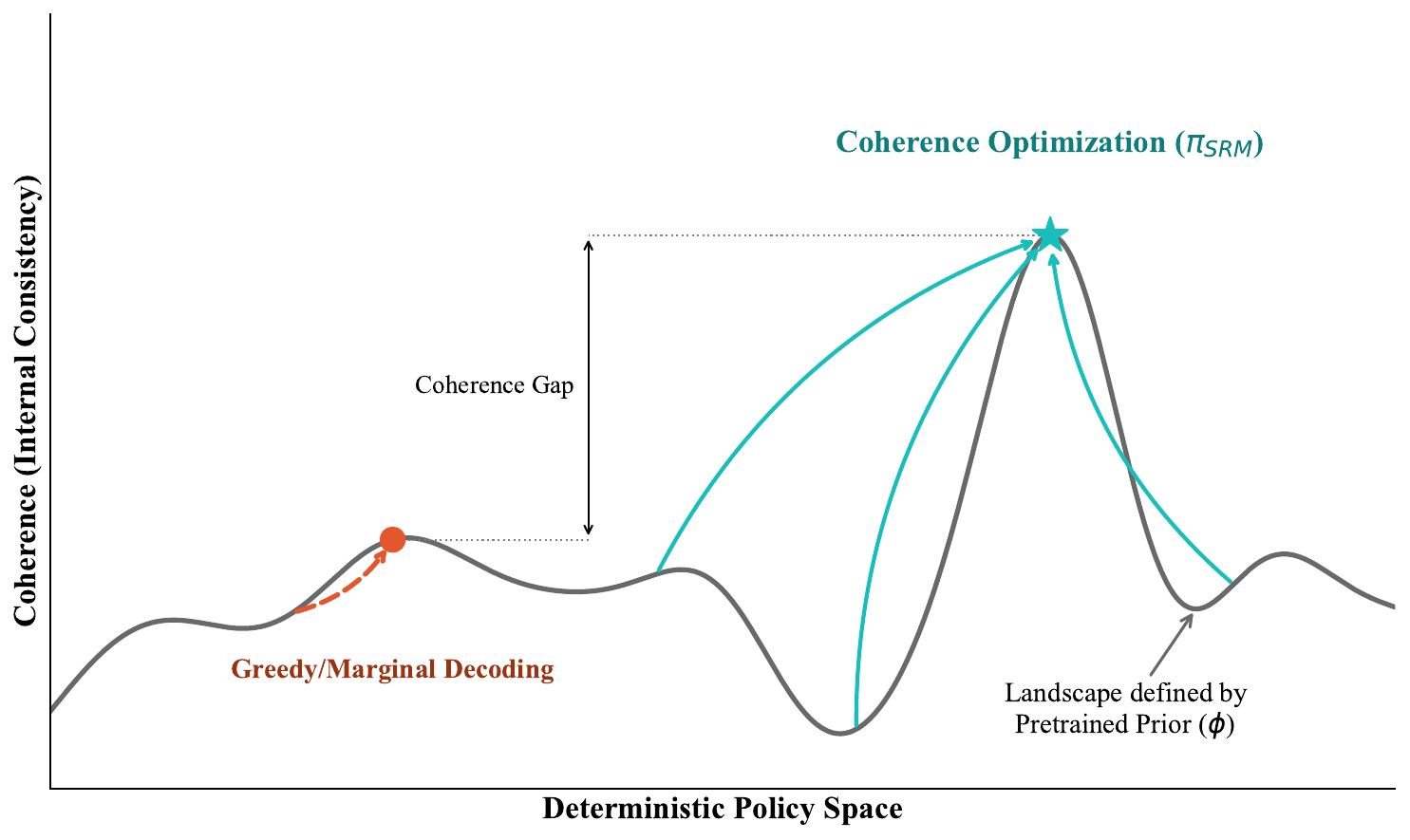}
    \caption{While greedy decoding gets trapped in local optima, coherence optimization finds the global peak of coherence/compressibility defined by the pretrained prior. Note the distinction between the coherence \emph{landscape} (determined by the pretrained prior) from the \emph{points} (deterministic policies in ${\mathcal A}^{\mathcal S}$).}
    \label{fig:coh-opt-schema}
\end{figure}

\section{Algorithms for Coherence Optimization}\label{sec:algorithms}

Having defined coherence, we now turn to the question of its optimization. Figure~\ref{fig:coh-opt-schema} illustrates the goal of coherence optimization.

\subsection{Gibbs Sampling for Coherence Optimization}

We introduce an efficient algorithm for finding the d-polic(ies) that maximize coherence.\footnote{In Algorithm \ref{alg:gibbs_d_policies}, the `$\beta$' in $\sigma^\beta$ means applying the transformation $p\to p^\beta$ to all probability masses and re-normalizing them.}

\begin{algorithm}[H]
\caption{Gibbs Sampling Over D-Policies}
\label{alg:gibbs_d_policies}
\begin{algorithmic}[1]
\State \textbf{Input:} Learning system $(\mathcal M, \mathcal A, \mathcal S, \sigma)$; initial d-policy $\pi_0$; number of steps $N$; temperature reciprocal $\beta$.
\State \textbf{Output:} List of d-policies $\pi_{0}, \pi_1, \cdots, \pi_{N}$.
\Procedure{GibbsSample}{}
\For{$t=0, 1, \ldots, N-1$}
    \State Sample $n \sim \mathrm{Unif}[1..|\mathcal S|]$.
    \State Sample $a^n_t\sim \sigma^\beta\left(\sum_{m\neq n} \pi_t(s_m),s_n\right)$.
    \State Set $\pi_{t+1}$ such that $\pi_{t+1}(s_n)=a^n_t$ and $\pi_{t+1}(s_m)=\pi_{t}(s_m)$ for $m\neq n$.
\EndFor
\EndProcedure
\end{algorithmic}
\end{algorithm}

The algorithm is simple to implement (repeatedly resampling behaviors using behaviors in other contexts as condition) and easy to scale (see Algorithm \ref{alg:training_friendly_gibbs} for a variant optimized for concurrency). A variant of it was implemented by \citet{xu2023} for the different purpose of learning Chain-of-Thought reasoning. 

We show below that Algorithm~\ref{alg:training_friendly_gibbs} approximately maximizes coherence.

\begin{defbox}
\begin{definition}[Ergodic Learning System]\label{def:ergodic}
A learning system $(\mathcal M, \mathcal A, \mathcal S, \sigma)$ is \emph{ergodic} if for every proper subset $C$ of the d-policy space $\mathcal A^{\mathcal S}$, there exist d-policies $\pi\in C$ and $\pi'\in \mathcal A^{\mathcal S}\setminus C$ such that $\pi'$ agrees with $\pi$ under all contexts except one $\dot{s}\in\mathcal S$, and $\sigma(\sum_{s\neq \dot s} \pi(s),\dot s)(\pi'(\dot s))>0$. In particular, all Bayesian learning systems with a full-support prior are ergodic (see Appendix~\ref{app:gibbs} for proof).
\end{definition}
\end{defbox}

\begin{thmbox}
\begin{theorem}[Gibbs Sampling Recovers Softmax Over Coherence] \label{thm:gibbs_recovers_softmax}
For ergodic learning system $(\mathcal M, \mathcal A, \mathcal S, \sigma)$, initial d-policy $\pi_0$, and temperature reciprocal $\beta>0$, when $N\to+\infty$, we have, for Algorithm \ref{alg:gibbs_d_policies},
\[
\pi_{r}\mathop{\to}^{d}\mathrm X^\beta,\quad\text{where }r\sim\mathrm{Unif}(\{0, \cdots,N\}).
\]
\end{theorem}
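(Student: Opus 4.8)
The plan is to recognize Algorithm~\ref{alg:gibbs_d_policies} as a single-site Gibbs (Glauber-type) Markov chain on the finite state space $\mathcal A^{\mathcal S}$ and then invoke the ergodic theorem for finite Markov chains. One step of the algorithm defines a transition kernel $P$ on $\mathcal A^{\mathcal S}$: from $\pi$, pick a coordinate $n\sim\mathrm{Unif}[1..|\mathcal S|]$ and replace $\pi(s_n)$ by a fresh draw from $\sigma^\beta\bigl(\sum_{m\neq n}\pi(s_m),s_n\bigr)$. I would establish three facts: (i) $\mathrm X^\beta$ is a stationary distribution of $P$; (ii) $P$ is irreducible; and (iii) consequently the time-averaged marginal law $\frac{1}{N+1}\sum_{t=0}^{N}\mathcal{L}(\pi_t)$ — which is precisely the law of $\pi_r$ when $r\sim\mathrm{Unif}(\{0,\dots,N\})$ — converges to $\mathrm X^\beta$.

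For (i), the key computation uses the chain rule to factor coherence along a context ordering that places $s_n$ last. Writing $\phi_{-n}=\sum_{m\neq n}\pi(s_m)$, this yields $\chi(\pi)=c_{-n}+\log_2\sigma(\phi_{-n},s_n)(\pi(s_n))$, where $c_{-n}$ depends only on $\{\pi(s_m)\}_{m\neq n}$. Hence the $\mathrm X^\beta$-conditional law of the $n$-th coordinate given the rest is $\sigma(\phi_{-n},s_n)(\cdot)^\beta$ renormalized, i.e.\ exactly the $\sigma^\beta(\phi_{-n},s_n)$ that the algorithm samples from. From this I would verify detailed balance $\mathrm X^\beta(\pi)P(\pi,\pi')=\mathrm X^\beta(\pi')P(\pi',\pi)$: it is trivial unless $\pi$ and $\pi'$ agree off a single context $s_n$, in which case both sides reduce, using $2^{\beta\chi(\pi)}=2^{\beta c_{-n}}\sigma(\phi_{-n},s_n)(\pi(s_n))^\beta$ and $P(\pi,\pi')=\tfrac{1}{|\mathcal S|}Z_{-n}^{-1}\sigma(\phi_{-n},s_n)(\pi'(s_n))^\beta$ with $Z_{-n}=\sum_a\sigma(\phi_{-n},s_n)(a)^\beta$, to an expression manifestly symmetric in $\pi\leftrightarrow\pi'$. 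Reversibility then gives stationarity.

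For (ii), I would note that $P(\pi,\pi')>0$ for $\pi\neq\pi'$ precisely when $\pi'$ differs from $\pi$ at exactly one context $\dot s$ with $\sigma\bigl(\sum_{s\neq\dot s}\pi(s),\dot s\bigr)(\pi'(\dot s))>0$ (here $\beta>0$ ensures $p\mapsto p^\beta$ preserves positivity). Definition~\ref{def:ergodic} says exactly that every nonempty proper subset $C\subsetneq\mathcal A^{\mathcal S}$ admits such a boundary-crossing pair $\pi\in C$, $\pi'\notin C$, i.e.\ $P$ has an edge out of every proper subset; a standard reachability argument (take $C$ to be the set of states reachable from $\pi_0$; if $C\neq\mathcal A^{\mathcal S}$ the hypothesis produces an edge leaving $C$, contradicting closure under reachability) shows $P$ is irreducible. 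For (iii), for any irreducible finite-state chain with stationary law $\mathrm X^\beta$, the Cesàro averages $\frac{1}{N+1}\sum_{t=0}^{N}\mathcal{L}(\pi_t)$ converge in total variation — hence in distribution — to $\mathrm X^\beta$, regardless of $\pi_0$ and of periodicity; since the law of $\pi_r$ is exactly this Cesàro average, $\pi_r\mathop{\to}^{d}\mathrm X^\beta$, as claimed.

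I expect the main obstacle to be the bookkeeping around zero-probability behaviors and the support of $\mathrm X^\beta$: the identification of the Gibbs conditional with the $\mathrm X^\beta$-conditional is clean only on conditioning events of positive coherence mass, so one must check that ergodicity genuinely forces irreducibility for this kernel (and thus a unique, positive-support stationary law $\mathrm X^\beta$) rather than merely the weaker ``cannot get permanently stuck'' property. A secondary but load-bearing point worth stating explicitly is that the theorem sidesteps aperiodicity entirely by averaging over $r$: without the $r\sim\mathrm{Unif}(\{0,\dots,N\})$ device the bare marginal $\mathcal{L}(\pi_N)$ need not converge (e.g.\ for a deterministic bipartite flip), whereas the Cesàro average always does.
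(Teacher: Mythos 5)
Your proposal takes the same route as the paper's proof: detailed balance via the chain-rule factorization of $\chi(\pi)$ with $s_n$ moved to the last position, irreducibility derived from the ergodicity hypothesis, and convergence of the Ces\`aro-averaged marginal law (which is exactly the law of $\pi_r$ with $r\sim\mathrm{Unif}\{0,\dots,N\}$). Two places where you are more careful than the paper's own write-up are worth flagging. First, your reachability argument applies Definition~\ref{def:ergodic} as actually stated---it only guarantees an outgoing positive-probability edge from \emph{some} state of every proper subset---whereas the paper's proof paraphrases ergodicity as ``for any $\pi,\pi'$ differing by one component, the transition probability is positive,'' a strictly stronger and unjustified reading; your set-of-reachable-states argument closes this cleanly. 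Second, you correctly note that aperiodicity is irrelevant because the theorem averages over $r$; the paper's aperiodicity claim (``non-zero probability of resampling the same behavior'') is both unnecessary and not uniformly valid without extra assumptions, but this does not matter for the Ces\`aro limit. Your residual concern about conditioning on zero-$\mathrm X^\beta$-mass events is not a genuine gap: when both endpoints of an edge have zero coherence mass, detailed balance holds trivially; when exactly one does, the single-site factorization $\chi(\pi)=c_{-n}+\log_2\sigma(\phi_{-n},s_n)(\pi(s_n))$ with shared $c_{-n}$ forces $\sigma(\phi_{-n},s_n)(\pi(s_n))=0$, so both sides of the balance equation vanish. Combined with the irreducibility you establish, this forces $\mathrm X^\beta$ to have full support a posteriori, confirming the consistency you were worried about.
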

\end{thmbox}
\begin{proof}
    See Appendix \ref{app:gibbs}.
\end{proof}

\subsection{Reductions from Existing Methods of Self-Improvement}\label{sec:reductions}

Having formulated a general and tractable algorithm for coherence optimization, we now show that state-of-the-art feedback-free self-improvement methods are reducible to coherence optimization.

\paragraph{Debate.} Debate \citep{irving2018ai} is an unsupervised method for scalable oversight where copies of a language model argue for opposing propositions in a back-and-forth structure. It has been shown to significantly increase language model accuracy on factual tasks \citep{li2022composing,du2023improving}, and allows for effective weak-to-strong truthfulness supervision \citep{khan2024debating}.

The connection to coherence optimization is direct. Debate is Gibbs sampling with $|\mathcal S|=2$. Consider a learning system with two contexts, $\mathcal{S} = \{s_\text{pro}, s_\text{con}\}$, representing arguments for and against a proposition $P$. The iterative debate process, where each side updates its argument based on the other's, is formally identical to Algorithm~\ref{alg:gibbs_d_policies}. Each debater resamples their argument conditioned on their opponent's current argument, exactly as Gibbs sampling resamples one component of the d-policy conditioned on all others.

\begin{thmbox}
\begin{proposition}[Debate Maximizes Coherence]\label{prop:debate}
Iterative debate for a learning system with $|\mathcal{S}|=2$ is a synchronous variant of Gibbs Sampling (Algorithm \ref{alg:gibbs_d_policies}). For an ergodic learning system, the distribution of d-policies $\pi_t$ converges to the softmax over coherence $\mathrm{X}^\beta$.
\end{proposition}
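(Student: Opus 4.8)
The plan is to prove the two halves of the statement in sequence: first that iterative debate with $|\mathcal S|=2$ is literally Algorithm~\ref{alg:gibbs_d_policies} run with a fixed, alternating choice of the coordinate to resample (a ``systematic scan''), and then that the conclusion of Theorem~\ref{thm:gibbs_recovers_softmax} is robust to this change of scan. For the identification, write $\mathcal S=\{s_{\text{pro}},s_{\text{con}}\}$, so a d-policy is a pair $\pi=(\pi(s_{\text{pro}}),\pi(s_{\text{con}}))\in s_{\text{pro}}\times s_{\text{con}}$, and model a debate round faithfully to turn-taking: the pro debater redraws its argument from $\sigma^\beta(\pi(s_{\text{con}}),s_{\text{pro}})$, then the con debater redraws its argument from $\sigma^\beta(\,\cdot\,,s_{\text{con}})$ conditioned on the pro's \emph{just-updated} argument. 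Because $|\mathcal S|=2$, the conditioning set $\sum_{m\neq n}\pi_t(s_m)$ in line~6 of Algorithm~\ref{alg:gibbs_d_policies} is exactly the single opponent behavior, so each debater's move is one single-site Gibbs update and a round is two such updates applied in a fixed order --- this is the precise sense of ``synchronous variant'' (every coordinate is touched each round). Defining $\pi_t$ as the pair of arguments at the end of round $t$ produces the Markov chain whose limit we must identify.

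For the limit, I would first check that $\mathrm X^\beta$ is invariant under each single-site update. Expanding $\chi$ for $|\mathcal S|=2$ and using the chain rule to reorder the two contexts, $2^{\beta\chi(a_1,a_2)}=\sigma(0,s_1)(a_1)^\beta\,\sigma(a_1,s_2)(a_2)^\beta=\sigma(0,s_2)(a_2)^\beta\,\sigma(a_2,s_1)(a_1)^\beta$; normalizing over one coordinate shows the conditional of $\mathrm X^\beta$ at one site given the other is exactly the $\sigma^\beta$-law that the corresponding debater draws from. Hence each single-site kernel fixes $\mathrm X^\beta$ (the same computation underlies the proof of Theorem~\ref{thm:gibbs_recovers_softmax} in Appendix~\ref{app:gibbs}), and so does the composed round kernel. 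Irreducibility on the support of $\mathrm X^\beta$ follows from ergodicity of the learning system (Definition~\ref{def:ergodic}): its boundary-crossing single-site moves are exactly the transitions available within a round, and any single random-scan update can be reproduced within one alternating round by having the off-coordinate's resample return its current value, which has positive probability on the support. Aperiodicity is then immediate, since for $\beta<\infty$ those no-ops give every state on the support a self-loop. The finite-state Markov chain convergence theorem yields $\pi_t\mathop{\to}^{d}\mathrm X^\beta$; for $\beta=+\infty$ the limit is uniform on $\argmax_\pi\chi(\pi)$, which is the sense in which debate ``maximizes coherence.'' Equivalently, one may observe that the per-step kernel and its invariant law coincide with those of Theorem~\ref{thm:gibbs_recovers_softmax} and only the scan order differs, so debate inherits that theorem's guarantee.

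The one genuinely delicate point --- and the main obstacle --- is the word ``synchronous.'' A literal Jacobi round, in which both debaters redraw \emph{simultaneously} from the \emph{previous} round's arguments, does \emph{not} preserve $\mathrm X^\beta$: the two coordinates then become conditionally independent given the past, destroying the correlation $\mathrm X^\beta$ carries between $s_{\text{pro}}$ and $s_{\text{con}}$ (a $2\times2$ instance already gives $\mu P\neq\mu$). The proof must therefore commit to the sequential reading above --- which is also the more faithful model of debate, since the second speaker responds to what the first just said --- and then re-establish irreducibility and aperiodicity for the resulting deterministic scan directly from the ergodicity hypothesis, rather than quoting Theorem~\ref{thm:gibbs_recovers_softmax} verbatim, whose stated proof exploits the randomness of the coordinate choice. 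Once the scan is pinned down, the remainder is bookkeeping.
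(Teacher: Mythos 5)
Your proposal is correct and follows the same high-level reduction the paper uses --- identify iterative debate with a two-site Gibbs sweep and then appeal to the convergence theorem --- but you carry it out far more carefully than the paper does, which is valuable here because the paper's own treatment is genuinely sloppy at the exact point you flag. The paper simply asserts that Proposition~\ref{prop:debate} ``follows by tautology'' and, in Appendix~\ref{app:debate}, that ``the proof of Theorem~\ref{thm:gibbs_recovers_softmax} is agnostic to this difference,'' without noticing that the detailed-balance argument in the proof of Theorem~\ref{thm:gibbs_recovers_softmax} is specific to the random-scan kernel (the factor $\tfrac{1}{|\mathcal S|}$ appears there explicitly), so it does not literally transfer to a deterministic scan. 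Your replacement step --- show each single-site kernel preserves $\mathrm X^\beta$, compose, and re-establish irreducibility and aperiodicity directly from Definition~\ref{def:ergodic} plus the positive-probability no-op --- is the correct way to fill that gap.

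Your ``delicate point'' is also well taken, and if anything understates the problem. The paper's Algorithm~\ref{alg:debate_gibbs} as written samples $a^{(t+1)}_{\text{pro}} \sim \sigma^\beta(a_{\text{con}}^{(t)}, s_{\text{pro}})$ rather than conditioning on the just-updated $a_{\text{con}}^{(t+1)}$ (and indeed $a_{\text{con}}^{(0)}$ is never initialized). Read literally, that is the Jacobi update you correctly show does not preserve $\mathrm X^\beta$ --- a simple $2\times2$ computation gives $\mu P \neq \mu$, as you note. Your decision to commit to the sequential (Gauss--Seidel) reading is the only one under which the claimed convergence actually holds, and it is also the more faithful model of turn-taking in a real debate. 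So your proof both fixes an imprecision in the paper's pseudocode and supplies the missing argument for why the scan change is harmless; the paper's ``tautology'' framing hides real work that you correctly identified and did.
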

\end{thmbox}

Appendix~\ref{app:debate} gives a formal specification and algorithm pseudocode of iterative debate. Proposition~\ref{prop:debate} then follows by tautology.

\paragraph{Simple Bootstrap.} Bootstrap methods, where the output of a model is used for training itself, have been reported to increase accuracy without supervision as early as in \citet{lee2013}. It is recently revisited in the language modeling context as an \emph{inference-time} technique, where \citet{callum2025eliciting} show that it, as an unsupervised method, matches the performance of supervised finetuning on golden labels.

Simple bootstrap can be seen as an approximation of Gibbs sampling where history accumulates rather than being replaced. Instead of maintaining a fixed-size d-policy and resampling components, simple bootstrap sequentially generates behaviors $a_1, a_2, a_3, \ldots$, where the context for generating $a_n$ is the entire preceding sequence.

\begin{thmbox}
\begin{proposition}[Simple Bootstrap Approximates Coherence]\label{prop:simple-bootstrap-main}
Let the sequence of contexts $(s_1, \dots, s_N)$ be a permutation of $\mathcal{S}$ chosen uniformly at random. The total variation distance $D_{\mathrm{TV}}(P_{\mathrm{SB}}, \mathrm{X}^\beta)$ between the simple bootstrap distribution and the softmax over coherence satisfies $D_{\mathrm{TV}} = O(|\beta-1|)$ as $\beta \to 1$.
\end{proposition}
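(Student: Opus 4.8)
Both $P_{\mathrm{SB}}$ and $\mathrm{X}^\beta$ are probability vectors indexed by the \emph{finite} set $\mathcal A^{\mathcal S}$ (finite because $\mathcal A$, hence $\mathcal S$, is finite). The plan is to prove two facts and then push them through a routine estimate: (i) $P_{\mathrm{SB}}$ and $\mathrm{X}^\beta$ coincide \emph{exactly} at $\beta=1$; and (ii) for each fixed d-policy $\pi$, the map $\beta\mapsto P_{\mathrm{SB}}(\pi)-\mathrm{X}^\beta(\pi)$ is $C^1$ on a neighbourhood of $\beta=1$. Given (i) and (ii), a componentwise mean-value estimate over the finitely many $\pi$ immediately gives $D_{\mathrm{TV}}(P_{\mathrm{SB}},\mathrm{X}^\beta)=\tfrac12\sum_\pi\lvert P_{\mathrm{SB}}(\pi)-\mathrm{X}^\beta(\pi)\rvert=O(\lvert\beta-1\rvert)$.

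\textbf{Step 1: agreement at $\beta=1$.} Writing $\tau$ for an ordering of $\mathcal S$, the bootstrap law is $P_{\mathrm{SB}}(\pi)=\tfrac{1}{\lvert\mathcal S\rvert!}\sum_\tau\prod_{n=1}^{\lvert\mathcal S\rvert}\sigma^\beta\big(\textstyle\sum_{m<n}\pi(s_{\tau(m)}),\,s_{\tau(n)}\big)\big(\pi(s_{\tau(n)})\big)$. At $\beta=1$, $\sigma^\beta=\sigma$, and the inner product is exactly $2^{\chi(\pi)}$ by the definition of coherence; the chain rule moreover makes this product independent of $\tau$, so $P_{\mathrm{SB}}(\pi)\big|_{\beta=1}=2^{\chi(\pi)}$. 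Next I would verify the normalization $\sum_{\pi\in\mathcal A^{\mathcal S}}2^{\chi(\pi)}=1$: fix any ordering, identify $\pi$ with the tuple $(\pi(s_1),\dots,\pi(s_{\lvert\mathcal S\rvert}))$, and sum over the last coordinate first, where each inner sum $\sum_{a\in s}\sigma(\phi,s)(a)=1$; iterating collapses the whole sum to $1$. Hence $\mathrm{X}^1\propto2^{\chi(\cdot)}$ has normalizing constant $1$, i.e. $\mathrm{X}^1(\pi)=2^{\chi(\pi)}=P_{\mathrm{SB}}(\pi)\big|_{\beta=1}$.

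\textbf{Step 2: smoothness in $\beta$.} Partition $\mathcal A^{\mathcal S}=\Pi^{+}\sqcup\Pi^{0}$ with $\Pi^{+}=\{\pi:\chi(\pi)>-\infty\}$. For $\pi\in\Pi^{0}$, order-independence of the product defining $\chi$ forces some conditional factor to vanish in \emph{every} ordering, so $P_{\mathrm{SB}}(\pi)\equiv0\equiv\mathrm{X}^\beta(\pi)$ for all $\beta$. For $\pi\in\Pi^{+}$, the same order-independence forces every conditional factor to be strictly positive in every ordering; writing $p_a\coloneqq\sigma(\phi,s)(a)$, each factor $\sigma^\beta(\phi,s)(a)=p_a^{\beta}\big/\sum_{a'\in s}p_{a'}^{\beta}$ is then a ratio of strictly positive real-analytic functions of $\beta>0$, and $P_{\mathrm{SB}}(\pi)$ is a finite sum of finite products of such, hence smooth. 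Likewise $\mathrm{X}^\beta(\pi)=2^{\beta\chi(\pi)}\big/\sum_{\pi'\in\Pi^{+}}2^{\beta\chi(\pi')}$ (the $\Pi^{0}$ terms being $0$) is smooth in $\beta$ since $\Pi^{+}\neq\emptyset$ keeps the denominator positive. So $\beta\mapsto P_{\mathrm{SB}}(\pi)-\mathrm{X}^\beta(\pi)$ is $C^1$ near $\beta=1$ and vanishes there by Step 1.

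\textbf{Step 3 and the main difficulty.} Since $\mathcal A^{\mathcal S}$ is finite, I can pick $\delta>0$ and $L<\infty$ with $\big\lvert\tfrac{\mathrm d}{\mathrm d\beta}\big(P_{\mathrm{SB}}(\pi)-\mathrm{X}^\beta(\pi)\big)\big\rvert\le L$ for all $\pi$ and all $\beta\in[1-\delta,1+\delta]$; the mean value theorem gives $\lvert P_{\mathrm{SB}}(\pi)-\mathrm{X}^\beta(\pi)\rvert\le L\lvert\beta-1\rvert$, and summing over $\pi$ closes the proof. There is no deep obstacle; the one place that genuinely needs care is Step 1 — the normalization identity $\sum_\pi2^{\chi(\pi)}=1$ and the positive-versus-zero dichotomy for conditional factors, both consequences of the chain rule — since these are precisely what make $P_{\mathrm{SB}}$ and $\mathrm{X}^\beta$ match at $\beta=1$ with a \emph{finite} implied $O$-constant. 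A minor point to state explicitly is that only a local (as $\beta\to1$) rate is asserted, so restricting to $\beta\in[1-\delta,1+\delta]$ to extract a uniform Lipschitz bound is legitimate.
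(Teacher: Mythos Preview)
Your proposal is correct but takes a genuinely different route from the paper. The paper writes $P_{\mathrm{SB}}(\pi)=\mathrm{X}^\beta(\pi)\cdot w(\pi)$ for a reweighting factor $w(\pi)=Z/\prod_n Z_n(\cdot)$, bounds $D_{\mathrm{TV}}\le\tfrac12\sqrt{\mathrm{Var}_{\mathrm{X}^\beta}(w)}$, Taylor-expands $\ln Z_n\approx -(\beta-1)\,H(\sigma(\cdot,s_n))$, and then invokes an \emph{additional technical assumption} (quantitative context independence on random subsets, with a power-law decay rate) to argue that $\mathrm{Var}\big(\sum_n H(\sigma_n)\big)$ stays bounded uniformly in $N=|\mathcal S|$. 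You instead exploit the finiteness of $\mathcal A^{\mathcal S}$ directly: both probability vectors agree exactly at $\beta=1$, each coordinate is $C^1$ in $\beta$, and a uniform Lipschitz bound over finitely many coordinates closes the argument. Your proof is more elementary and in fact more rigorous (the paper's step $\mathrm{Var}(W)/\mathbb E[W]^2\approx\mathrm{Var}(\ln W)$ is heuristic), and it delivers exactly the stated conclusion without any extra hypothesis. What it gives up is uniformity in system size: your implied constant is on the order of $|\mathcal A^{\mathcal S}|\cdot L$, which is exponential in $|\mathcal S|$, whereas the paper's whole machinery is aimed at keeping the constant independent of $|\mathcal S|$ --- a stronger conclusion than the proposition as stated actually asks for.
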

\end{thmbox}
\begin{proof}
    See Appendix~\ref{sec:simple_bootstrap}.
\end{proof}

The $O(|\beta-1|)$ approximation error limits the usefulness of this approximation at low temperatures ($\beta \gg 1$), where Gibbs sampling remains effective but simple bootstrap may not.

\paragraph{Internal Coherence Maximization.} Internal coherence maximization (ICM) \citep{wen2025} optimizes for mutual predictability between model responses on different contexts through a hill-climbing process. Like simple bootstrap, it matches the performance of golden label finetuning.

Coherence and mutual predictability (the ICM objective) are closely related:
\begin{align}
\chi(\pi)&=\sum_{n=1}^{|\mathcal S|} \log_2 \sigma\left(\sum_{m<n} \pi(s_m),s_n\right)(\pi(s_n)) \label{eq:coherence-main},\\
f_\text{MP}(\pi)&=\sum_{n=1}^{|\mathcal S|}\log_2\sigma\left(\sum_{m\neq n}\pi(s_m),s_n\right)(\pi(s_n)). \label{eq:mp-main}
\end{align}
The difference is that coherence uses autoregressive conditioning ($m<n$) while mutual predictability uses bidirectional conditioning ($m\neq n$). Bidirectional conditioning is more expensive to compute, which is why ICM is restricted to domains with few discrete responses (e.g., multiple-choice questions), while Gibbs sampling over coherence scales to open-ended generation. See Appendix~\ref{app:icm} for further analysis.

\section{Optimality of Coherence Optimization}\label{sec:upper_bound}

In this section, we establish that coherence optimization is a form of description-length regularization, and that it is, in a worst-case asymptotic sense, the optimal form of description-length regularization.

\subsection{Coherence as Description-Length Regularization}\label{sec:mdl}

The concept of coherence has a natural and powerful interpretation, specifically the principle of minimum description length (MDL). The MDL principle states that the best model for a set of data is the one that permits the shortest description of the data.

Consider a d-policy $\pi$ as a dataset, where each $\pi(s_n)$ is a data point. The coherence $\chi(\pi)$ is defined as a sum of sequential log-probabilities:
\[
\chi(\pi) = \log_2 \sigma(0, s_1)(\pi(s_1)) + \log_2 \sigma(\pi(s_1), s_2)(\pi(s_2)) + \dots
\]
This is precisely the log-probability of observing the sequence of behaviors $(\pi(s_1), \pi(s_2), \dots)$ under the predictive model defined by the learning system. By the chain rule, this is equal to the joint log-probability, $\log_2 \mathrm{Pr}[\pi(s_1), \dots, \pi(s_{|\mathcal{S}|})]$. By arithmetic coding, negated coherence is, when rounded up, the \emph{number of bits needed to describe the d-policy}.

Coherence optimization, therefore, acts as a form of regularization that favors d-policies embodying strong, compressible patterns, and forces the model to discover and adhere to underlying principles rather than memorizing a set of unrelated behaviors. We can thus prove that coherence optimization, like other types of regularization, gives a uniform convergence guarantee on the d-policy's generalization error.

\begin{defbox}
\begin{definition}[Agreement and Accuracy]
For any two d-policies $\pi_1,\pi_2\in\mathcal A^{\mathcal S}$ and a subset of contexts $\hat{\mathcal S}\subseteq \mathcal S$, we define the \emph{agreement} between $\pi_1$ and $\pi_2$ on $\hat{\mathcal S}$ as
\[
\alpha_{\hat{\mathcal S}}(\pi_1;\pi_2)=\alpha_{\hat{\mathcal S}}(\pi_2;\pi_1)\coloneqq\frac 1{|\hat{\mathcal S}|}\sum_{s\in\hat{\mathcal S}} \mathbf 1_{\pi_1(s)=\pi_2(s)}.
\]
In particular, we denote $\alpha(\pi_1;\pi_2)\coloneqq\alpha_{\mathcal S}(\pi_1;\pi_2)$. Given any \emph{ground-truth d-policy} $\pi^*$, we define d-policy $\pi$'s \emph{accuracy} on $\hat{\mathcal S}$ as $\alpha_{\hat{\mathcal S}}(\pi;\pi^*)$.
\end{definition}
\end{defbox}

\begin{thmbox}
\begin{theorem}[Uniform Convergence] \label{thm:uniform_convergence}
Let $\pi^*$ be any given ground-truth d-policy, and the \emph{training contexts} $\hat s_1,\hat s_2,\cdots,\hat s_N$ be uniformly and independently sampled contexts from $\mathcal S$, for some $N>0$. Let $\pi$ be an arbitrary d-policy with $\alpha_{\textup{train}}(\pi;\pi^*)\coloneqq\alpha_{\{\hat s_i\}_{i=1}^N}(\pi;\pi^*)$, then
\[
\left|\alpha(\pi;\pi^*)-\alpha_\textup{train}(\pi;\pi^*)\right| \leq \sqrt{\frac{-2\chi(\pi)+\log_2 e-\log_2 (1/\delta)}{2N}}
\]
holds uniformly for all $\pi$ with probability $1-\delta$, for any given $\delta\in (0,1)$.
\end{theorem}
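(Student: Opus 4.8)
The plan is a classical ``Occam's razor'' style weighted union bound, where the per-hypothesis confidence budget is supplied by the coherence-softmax $\mathrm X^1$ itself. Fix the ground-truth d-policy $\pi^*$, and first fix a single d-policy $\pi$. Since the training contexts $\hat s_1,\dots,\hat s_N$ are i.i.d.\ uniform on $\mathcal S$, each indicator $\mathbf 1_{\pi(\hat s_i)=\pi^*(\hat s_i)}$ is a Bernoulli variable whose mean is exactly $\alpha(\pi;\pi^*)=\tfrac1{|\mathcal S|}\sum_{s\in\mathcal S}\mathbf 1_{\pi(s)=\pi^*(s)}$, and $\alpha_{\textup{train}}(\pi;\pi^*)$ is the average of $N$ such i.i.d.\ variables (repeated contexts in $\{\hat s_i\}$ are harmless here). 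Hoeffding's inequality then gives $\mathrm{Pr}\!\left[\,|\alpha(\pi;\pi^*)-\alpha_{\textup{train}}(\pi;\pi^*)|\ge\epsilon\,\right]\le 2\exp(-2N\epsilon^2)$ for every fixed $\pi$ and every $\epsilon>0$.

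The structural crux is that $\pi\mapsto 2^{\chi(\pi)}$ is itself a probability distribution on the finite set $\mathcal A^{\mathcal S}$, namely $\mathrm X^1$. Indeed $2^{\chi(\pi)}=\prod_{n=1}^{|\mathcal S|}\sigma(\sum_{m<n}\pi(s_m),s_n)(\pi(s_n))$, and summing this product over $\pi(s_{|\mathcal S|})\in s_{|\mathcal S|}$, then $\pi(s_{|\mathcal S|-1})\in s_{|\mathcal S|-1}$, and so on, telescopes to $1$, because each $\sigma(\phi,s)$ is a genuine distribution over the competing behaviors in $s$; the chain-rule axiom makes the product independent of the chosen ordering of $\mathcal S$, so the quantity is well defined. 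This is exactly the ``arithmetic coding'' observation of \S\ref{sec:mdl} made precise: $-\chi(\pi)$ is (up to rounding) a valid codelength for $\pi$, so the masses $2^{\chi(\pi)}$ satisfy Kraft's inequality, here with equality.

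Now take the union bound with per-hypothesis slack $\epsilon_\pi$ chosen by equating $\pi$'s Hoeffding failure probability to $\delta\cdot 2^{\chi(\pi)}$ (absorbing the two-sided factor $2$ and using $e^{-x}\le 2^{-x}$ to pass to base-$2$ logarithms). Since $\sum_{\pi\in\mathcal A^{\mathcal S}}\delta\cdot 2^{\chi(\pi)}=\delta$, a union over the finitely many d-policies gives: with probability at least $1-\delta$, the inequality $|\alpha(\pi;\pi^*)-\alpha_{\textup{train}}(\pi;\pi^*)|<\epsilon_\pi$ holds simultaneously for all $\pi$. Solving the Hoeffding equality for $\epsilon_\pi$ and converting natural logarithms to base $2$ yields $\epsilon_\pi=\big(\tfrac{-2\chi(\pi)+\log_2 e-\log_2(1/\delta)}{2N}\big)^{1/2}$, which is precisely the stated bound (equivalently, $\epsilon_\pi=\mathrm{reg}(\pi)$ with $\mathcal P=\mathrm X^1$, as in Equation~\ref{eq:SRM-opt}).

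The step I expect to demand the most care is the normalization claim $\sum_\pi 2^{\chi(\pi)}=1$, i.e.\ the Kraft identity for the coherence code: this is what allows a union over the exponentially large hypothesis space $\mathcal A^{\mathcal S}$ to cost only $-\chi(\pi)$ bits for the hypothesis $\pi$, rather than the prohibitive $\log_2|\mathcal A^{\mathcal S}|$. It hinges on two things already in hand -- that the chain rule renders $\chi$ order-independent, and that each $\sigma(\phi,s)$ sums to one over $s$ -- so the telescoping should go through cleanly. The rest -- Hoeffding, the finite union bound, and reconciling the additive constant and the $\ln$-vs-$\log_2$ and two-sided bookkeeping -- is routine arithmetic.
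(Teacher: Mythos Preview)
Your proposal is correct and takes essentially the same approach as the paper's proof sketch: both use the prior $\mathcal P(\pi)=2^{\chi(\pi)}$ together with a Hoeffding-type concentration inequality, and your Occam-style weighted union bound is exactly the PAC-Bayes bound specialized to a point-mass posterior (where $\mathrm{KL}[\delta_\pi\|\mathcal P]=-\chi(\pi)\ln 2$, as the paper notes). Your telescoping argument establishing $\sum_{\pi}2^{\chi(\pi)}=1$ fills in a detail that the paper's sketch leaves implicit and is worth keeping.
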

\end{thmbox}

Theorem~\ref{thm:uniform_convergence} tells us that, the coherence of a d-policy helps bound its generalization gap, i.e., the difference between its training and actual accuracy. But recall that coherence can be defined with respect to any prior policy; does the choice of prior policy affect the generalization gap? The answer is yes.

\begin{defbox}
\begin{definition}[Optimality Gap]\label{def:opt-gap}
Given any ground-truth d-policy $\pi^*$, for any stochastic policy $\phi\in \mathcal M$, we define the \emph{optimality gap} of $\phi$ as the prior policy to be
\[
G(\phi;\pi^*)\coloneqq -2\chi_\phi(\pi^*)+\log_2 e>0.
\]
\end{definition}
\end{defbox}

\begin{thmbox}
\begin{proposition}[Optimality Gap Lower-Bounds Accuracy]\label{prop:opt-gap-bounds-acc}
Under the conditions of the uniform convergence theorem, for any stochastic policy $\phi\in \mathcal M$, let
\begin{equation}
\hat \pi\coloneqq\mathop{\arg\max}_{\pi\in \mathcal A^{\mathcal S}}\left\{\alpha_\textup{train}(\pi;\pi^*)- \sqrt{\frac{-2\chi_\phi(\pi)+\log_2 e-\log_2 (1/\delta)}{2N}}\right\}\label{eq:def-pi-hat-optimality-gap},
\end{equation}
then, with probability $1-\delta$,
\begin{equation}
\alpha(\hat \pi;\pi^*)\geq 1-\sqrt{\frac{2G(\phi;\pi^*)-2\log_2(1/\delta)}{N}}.\label{eq:opt-gap-bound}
\end{equation}
\end{proposition}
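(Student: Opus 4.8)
The plan is to run the textbook structural-risk-minimization comparison: apply the uniform convergence bound to the data-dependent regularized minimizer $\hat\pi$, and then control its regularized objective from below by using the ground-truth policy $\pi^*$ itself as a competitor. To this end, I would first observe that Theorem~\ref{thm:uniform_convergence} holds verbatim with the coherence $\chi=\chi_0$ replaced by $\chi_\phi$ for any fixed prior policy $\phi\in\mathcal M$: the only property of the weighting $\pi\mapsto 2^{\chi_0(\pi)}$ that an Occam/union-bound argument over the finite set $\mathcal A^{\mathcal S}$ exploits is that it is a (sub-)probability distribution, and by the chain rule $\sum_{\pi\in\mathcal A^{\mathcal S}}2^{\chi_\phi(\pi)}=1$ for every $\phi$ (sum out the contexts in reverse order, each factor $\sigma(\cdot,s_n)$ being a probability distribution over $s_n$). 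Hence, with probability at least $1-\delta$ over the training contexts, there is an event $E$ on which
\[
\bigl|\alpha(\pi;\pi^*)-\alpha_{\textup{train}}(\pi;\pi^*)\bigr|\le \mathrm{reg}_\phi(\pi)\coloneqq\sqrt{\frac{-2\chi_\phi(\pi)+\log_2 e-\log_2(1/\delta)}{2N}}
\]
simultaneously for all $\pi\in\mathcal A^{\mathcal S}$; I condition on $E$ for the remainder.

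Next I would lower-bound $\alpha(\hat\pi;\pi^*)$. Because $E$ is a statement uniform over all d-policies, it applies to the sample-dependent $\hat\pi$, so $\alpha(\hat\pi;\pi^*)\ge\alpha_{\textup{train}}(\hat\pi;\pi^*)-\mathrm{reg}_\phi(\hat\pi)$. Now $\hat\pi$ in Equation~\ref{eq:def-pi-hat-optimality-gap} is exactly a maximizer of $\pi\mapsto\alpha_{\textup{train}}(\pi;\pi^*)-\mathrm{reg}_\phi(\pi)$ over $\mathcal A^{\mathcal S}$, and $\pi^*$ is an admissible competitor, so $\alpha_{\textup{train}}(\hat\pi;\pi^*)-\mathrm{reg}_\phi(\hat\pi)\ge\alpha_{\textup{train}}(\pi^*;\pi^*)-\mathrm{reg}_\phi(\pi^*)$. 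Since $\pi^*$ agrees with itself on every training context, $\alpha_{\textup{train}}(\pi^*;\pi^*)=1$, and chaining the two inequalities gives $\alpha(\hat\pi;\pi^*)\ge 1-\mathrm{reg}_\phi(\pi^*)$ on $E$.

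Finally I would translate $\mathrm{reg}_\phi(\pi^*)$ into the optimality gap. By Definition~\ref{def:opt-gap}, $-2\chi_\phi(\pi^*)+\log_2 e=G(\phi;\pi^*)$, so $\mathrm{reg}_\phi(\pi^*)=\sqrt{\left(G(\phi;\pi^*)-\log_2(1/\delta)\right)/(2N)}\le\sqrt{\left(2G(\phi;\pi^*)-2\log_2(1/\delta)\right)/N}$, the radicand growing by a factor of $4$ (when it is negative, Equation~\ref{eq:opt-gap-bound} is read as vacuous, consistent with Theorem~\ref{thm:uniform_convergence} itself). Substituting into $\alpha(\hat\pi;\pi^*)\ge 1-\mathrm{reg}_\phi(\pi^*)$ yields Equation~\ref{eq:opt-gap-bound}. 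I do not expect a genuine obstacle here — the argument is a one-paragraph Occam/SRM comparison — and the only two steps deserving an explicit sentence are the transfer of Theorem~\ref{thm:uniform_convergence} from the base prior $\chi_0$ to a general $\chi_\phi$, and the harmless slack absorbed in the last inequality.
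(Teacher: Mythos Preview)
Your argument is correct and follows the same SRM comparison as the paper: use $\pi^*$ as a competitor in the argmax defining $\hat\pi$, invoke $\alpha_{\textup{train}}(\pi^*;\pi^*)=1$, and combine with the uniform convergence bound. The only difference is in bookkeeping. The paper first uses the competitor inequality together with $\alpha_{\textup{train}}(\hat\pi;\pi^*)\le 1$ to deduce $-\chi_\phi(\hat\pi)\le -\chi_\phi(\pi^*)$, and then feeds this into Corollary~\ref{cor:coherence_regularization} (the general two-sided regularization bound with $\bar\pi=\pi^*$), which carries a factor~$2$ in front of the square root and lands exactly on Equation~\ref{eq:opt-gap-bound}. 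You instead apply the uniform bound once to $\hat\pi$ and chain directly, obtaining the sharper $\alpha(\hat\pi;\pi^*)\ge 1-\mathrm{reg}_\phi(\pi^*)$ before deliberately relaxing by a factor~$4$ under the radical to match the stated form. So your route is marginally more elementary and, as you note, shows that Equation~\ref{eq:opt-gap-bound} could be tightened by a factor of~$2$; the paper's route has the virtue of reusing its general corollary rather than redoing the one-step chaining.
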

\end{thmbox}

Proposition \ref{prop:opt-gap-bounds-acc} tells us that the goodness of a prior policy is decided by the coherence of the optimal d-policy under such a prior, where goodness is defined as the actual accuracy of the optimal d-policy when using such a prior for coherence regularization.

Note that both Definition \ref{def:opt-gap} and Proposition \ref{prop:opt-gap-bounds-acc} assume there is one fixed $\pi^*$, rather than a $\pi^*$ drawn from a distribution $\mathcal D\in\Delta[{\mathcal A}^{\mathcal S}]$, as is the case in \S\ref{sec:semi-supervised}. However, Proposition \ref{prop:opt-gap-bounds-acc} can be easily adapted to the latter case, where instead of Equation~\ref{eq:opt-gap-bound}, we have
\[
\mathrm{E}_{\pi^*\sim\mathcal D}\left[\alpha(\hat \pi;\pi^*)\right]\geq 
1-\mathrm{E}_{\pi^*\sim\mathcal D}\left[\sqrt{\frac{2G(\phi;\pi^*)-2\log_2(1/\delta)}{N}}\right].
\]

\subsection{Worst-Case Asymptotic Optimality of Coherence Regularization}\label{sec:worst_case}

The uniform convergence theorem tells us that coherence bounds the generalization gap. But which prior should we use? We now prove that the KL-optimal approximation to the data-generating distribution is the optimal choice.

Consider the semi-supervised learning setup from \S\ref{sec:defining_coherence}, with context partition $\mathcal S$, behavior space $\mathcal A$, d-policy space ${\mathcal A}^{\mathcal S}$, data-generating distribution $\mathcal D\in\Delta\left[{\mathcal A}^{\mathcal S}\right]$, context distribution $\mathcal F\in\Delta[\mathcal S]$, and $N$ supervised samples. Under SRM with description-length regularization using prior $\mathcal P$, with
\begin{align*}
    \pi_{\text{SRM}} &\coloneqq \mathop{\arg\max}_{\pi\in{\mathcal A}^{\mathcal S}} \ {{\alpha}_{\textup{train}}}(\pi) - \mathrm{reg}(\pi),\text{ where}\\
    \mathrm{reg}(\pi) &\coloneqq \left({\frac{-2\log_2 \mathcal P(\pi)+\log_2 e-\log_2 (1/\delta)}{2N}}\right)^{1/2},
\end{align*}
we have the following optimality result.
\begin{thmbox}
\begin{theorem}[Optimality of Description-Length Regularization]\label{thm:optimal_regularization}
Under description-length regularization with prior $\mathcal P$, for any distribution $\mathcal Q\in \Delta[{\mathcal A}^{\mathcal S}]$ and sufficiently small $\delta$ such that $\log_2(1/\delta) \gg -\mathrm{E}_{\pi\sim\mathcal Q}[\log_2 \mathcal P(\pi)]$ (i.e., we consider worst-case outcomes at the lowest quantiles), with probability $1-\delta$,
\begin{align}
    \alpha(\pi_{\textup{SRM}})&\geq \mathrm E_{\pi\sim \mathcal Q}[\alpha(\pi)]-{\sqrt{\frac{2\log_2(1/\delta)}{N}}} + \sqrt{\frac{2+o(1)}{N\log_2(1/\delta)}}~\left(\mathrm{H}\left[{\mathcal Q}\right] - \mathrm{KL}\left[{\mathcal Q} \mid\mid \mathcal P\right]\right).\label{eq:regbnd}
\end{align}
In particular, when $\mathcal Q={\mathcal D}^{\beta}$ for $\beta>0$,\footnote{${\mathcal D}^{\beta}$ is the distribution obtained by raising $\mathcal D$'s probability masses to the $\beta$-th power before normalizing.} the lower bound is maximized when $\mathcal P$ minimizes $\mathrm{KL}\left[{\mathcal D}^\beta \mid\mid \mathcal P\right]$.
\end{theorem}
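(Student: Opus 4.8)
The approach is to run the textbook structural-risk-minimization (SRM) comparison argument against an arbitrary reference distribution $\mathcal Q$ over d-policies, and then expand the resulting guarantee in the regime $\delta\to 0$ where $\log_2(1/\delta)$ is the dominant scale.

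First I would extend the uniform convergence theorem (Theorem~\ref{thm:uniform_convergence}) from the coherence regularizer to the general description-length regularizer $\mathrm{reg}$ built from $\mathcal P$: the same Hoeffding-plus-$\mathcal P$-weighted-union-bound argument gives, with probability $1-\delta$, that $|\alpha(\pi)-\alpha_{\textup{train}}(\pi)|\le \mathrm{reg}(\pi)$ simultaneously for all $\pi$ (in the $\pi^*\sim\mathcal D$ setup, condition on the realized $\pi^*$ and integrate at the end, as in the remark after Proposition~\ref{prop:opt-gap-bounds-acc}). On this event, optimality of $\pi_{\textup{SRM}}$ gives, for \emph{every} $\pi$, $\alpha(\pi_{\textup{SRM}})\ge \alpha_{\textup{train}}(\pi_{\textup{SRM}})-\mathrm{reg}(\pi_{\textup{SRM}})\ge \alpha_{\textup{train}}(\pi)-\mathrm{reg}(\pi)$, so, integrating against $\mathcal Q$, $\alpha(\pi_{\textup{SRM}})\ge \mathrm E_{\pi\sim\mathcal Q}[\alpha_{\textup{train}}(\pi)]-\mathrm E_{\pi\sim\mathcal Q}[\mathrm{reg}(\pi)]$. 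Since $\mathrm E_{\pi\sim\mathcal Q}[\alpha_{\textup{train}}(\pi)]=\tfrac1N\sum_{i=1}^N\Pr_{\pi\sim\mathcal Q}[\pi(\hat s_i)=\pi^*(\hat s_i)]$ is an empirical mean of $N$ i.i.d.\ $[0,1]$-valued variables with expectation $\mathrm E_{\pi\sim\mathcal Q}[\alpha(\pi)]$, Hoeffding controls its downward deviation by $O(\sqrt{\log_2(1/\delta)/N})$, leaving $\alpha(\pi_{\textup{SRM}})\ge \mathrm E_{\pi\sim\mathcal Q}[\alpha(\pi)]-O(\sqrt{\log_2(1/\delta)/N})-\mathrm E_{\pi\sim\mathcal Q}[\mathrm{reg}(\pi)]$.

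The crux is the expansion of $\mathrm E_{\pi\sim\mathcal Q}[\mathrm{reg}(\pi)]$ as $\delta\to 0$. Write $\mathrm{reg}(\pi)=\tfrac1{\sqrt{2N}}\sqrt{c_\delta+b(\pi)}$, separating the $\delta$-dependent scale $c_\delta=(1+o(1))\log_2(1/\delta)$ from the prior-dependent part $b(\pi)=-2\log_2\mathcal P(\pi)$. The hypothesis $\log_2(1/\delta)\gg -\mathrm E_{\pi\sim\mathcal Q}[\log_2\mathcal P(\pi)]=\mathrm H[\mathcal Q]+\mathrm{KL}[\mathcal Q\,\|\,\mathcal P]$ is exactly what makes $b$ of strictly lower order than $c_\delta$ in $\mathcal Q$-expectation, so one Taylor-expands $\sqrt{c_\delta+b(\pi)}=\sqrt{c_\delta}\big(1+\tfrac{b(\pi)}{2c_\delta}+O(b(\pi)^2/c_\delta^2)\big)$ and takes expectations. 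The leading term $\sqrt{c_\delta/(2N)}$ combines with the concentration term above into the $-\sqrt{2\log_2(1/\delta)/N}$ of the statement (the $\log_2$-versus-$\ln$ bookkeeping and lower-order pieces of $b$ sinking into $o(1)$), while the next-order term is proportional to $\mathrm E_{\pi\sim\mathcal Q}[-\log_2\mathcal P(\pi)]/\sqrt{2N\log_2(1/\delta)}$ — this is where the entire $\mathcal P$-dependence sits. Re-expressing it via $\mathrm E_{\pi\sim\mathcal Q}[-\log_2\mathcal P(\pi)]=\mathrm H[\mathcal Q]+\mathrm{KL}[\mathcal Q\,\|\,\mathcal P]$ and the identity $\mathrm H[\mathcal Q]-\mathrm{KL}[\mathcal Q\,\|\,\mathcal P]=2\mathrm H[\mathcal Q]+\mathrm E_{\pi\sim\mathcal Q}[\log_2\mathcal P(\pi)]$ collects it into the claimed combination $\mathrm H[\mathcal Q]-\mathrm{KL}[\mathcal Q\,\|\,\mathcal P]$ with coefficient $\sqrt{(2+o(1))/(N\log_2(1/\delta))}$.

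I expect the delicate step — the main obstacle — to be exactly this last re-expression. A flat $\mathcal Q$-average of the SRM objective, once linearized, delivers the \emph{cross}-entropy $\mathrm H[\mathcal Q]+\mathrm{KL}[\mathcal Q\,\|\,\mathcal P]$ rather than $\mathrm H[\mathcal Q]-\mathrm{KL}[\mathcal Q\,\|\,\mathcal P]$; closing this $2\mathrm H[\mathcal Q]$-sized gap requires bounding $\max_\pi\{\alpha_{\textup{train}}(\pi)-\mathrm{reg}(\pi)\}$ from below more tightly than by the $\mathcal Q$-average, e.g.\ through a tilted estimate $\max_\pi f(\pi)\ge \tfrac1\lambda\log_2\sum_\pi w_\pi 2^{\lambda f(\pi)}$ at inverse temperature $\lambda\asymp\sqrt{N\log_2(1/\delta)}$, with $w$ a reweighting of $\mathcal Q$ chosen so that the linearized factor $\mathcal P(\pi)^{\lambda/\sqrt{2N\log_2(1/\delta)}}$ combines with $w_\pi$ into the desired functional; checking $(w,\lambda)$ admissible and the remainder genuinely $o(1)$ is the fiddly part. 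The headline conclusion is nonetheless insensitive to the exact constant or the sign of the $\mathrm H[\mathcal Q]$ term: the $\mathcal P$-dependent part of the bound is always $-(\text{positive})\cdot\mathrm{KL}[\mathcal Q\,\|\,\mathcal P]$ at the relevant order. Hence the ``in particular'' claim is immediate — with $\mathcal Q=\mathcal D^{\beta}$, the quantities $\mathrm E_{\pi\sim\mathcal D^\beta}[\alpha(\pi)]$, $\sqrt{2\log_2(1/\delta)/N}$ and $\mathrm H[\mathcal D^\beta]$ do not involve $\mathcal P$ while the coefficient of $-\mathrm{KL}[\mathcal D^\beta\,\|\,\mathcal P]$ is strictly positive, so \eqref{eq:regbnd} is maximized over $\mathcal P$ precisely when $\mathrm{KL}[\mathcal D^\beta\,\|\,\mathcal P]$ is minimized (with unconstrained minimum $0$ at $\mathcal P=\mathcal D^\beta$, by Gibbs' inequality), which is the claimed optimality and the formal version of ``the best available approximation to the (tempered) data-generating distribution is the optimal regularization prior.''
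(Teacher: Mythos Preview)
Your approach is essentially the paper's: uniform convergence for the $\mathcal P$-based regularizer, the SRM comparison inequality, passage from the max to a $\mathcal Q$-expectation, then a first-order Taylor expansion of $\sqrt{\log_2(1/\delta)+(\cdots)}$ in the small-$\delta$ regime. The paper's route is marginally more direct than yours in one place: rather than bounding $\alpha(\pi_{\textup{SRM}})\ge \alpha_{\textup{train}}(\pi)-\mathrm{reg}(\pi)$ and then invoking a separate Hoeffding step to pass from $\mathrm E_{\mathcal Q}[\alpha_{\textup{train}}]$ to $\mathrm E_{\mathcal Q}[\alpha]$, it uses the two-sided uniform-convergence event to get $\alpha(\pi_{\textup{SRM}})\ge \max_\pi\{\alpha(\pi)-2\,\mathrm{reg}(\pi)\}\ge \mathrm E_{\pi\sim\mathcal Q}[\alpha(\pi)-2\,\mathrm{reg}(\pi)]$ in one stroke, so the population accuracy $\alpha(\pi)$ appears immediately and no extra concentration argument is needed.

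You are entirely right to flag the ``delicate step.'' After the Taylor expansion the paper arrives at a term proportional to $\mathrm E_{\pi\sim\mathcal Q}[\log_2\mathcal P(\pi)]=-\mathrm H[\mathcal Q]-\mathrm{KL}[\mathcal Q\|\mathcal P]$, and then simply writes the final line with $\mathrm H[\mathcal Q]-\mathrm{KL}[\mathcal Q\|\mathcal P]$ in its place. No tilted-exponential argument or anything like your proposed $\lambda$-reweighting appears; the $2\mathrm H[\mathcal Q]$ discrepancy you identified is not addressed in the paper's proof, so your concern is well-founded rather than an artifact of your own derivation. As you also observe, this does not touch the ``in particular'' clause: the only $\mathcal P$-dependent contribution at this order is $-c\cdot\mathrm{KL}[\mathcal Q\|\mathcal P]$ with $c>0$, so the lower bound is maximized in $\mathcal P$ exactly when $\mathrm{KL}[\mathcal D^\beta\|\mathcal P]$ is minimized, regardless of the sign in front of $\mathrm H[\mathcal Q]$.
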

\end{thmbox}
\begin{proof}
    See Appendix~\ref{sec:joint_optim}.
\end{proof}

Practically, let $\hat{\mathcal D}$ be the KL-optimal approximation to $\mathcal D$ learnable from the supervised samples, then, for $\mathcal Q={\mathcal D}^{\beta}$, the optimal obtainable prior is $\mathcal P = {\hat{\mathcal D}}^{\beta}$, in line with Definition \ref{def:softmax-over-coh} when using the supervised policy as prior policy.

The bound requires $\log_2(1/\delta) \gg -\log_2 \mathcal P(\pi)$, meaning we consider worst-case outcomes at the lowest quantiles (small $\delta$). This is a strong assumption that may not hold when $\delta$ is not sufficiently small relative to the d-policy's description length. The next section examines the hypothesis that when the prior policy is trained on supervised samples, optimizing coherence alone is equivalent to optimizing the regularized training objective above.

\subsection{Coherence with Trained Prior as Regularized Accuracy}\label{sec:average_case}

The previous section established that coherence optimization is equivalent to description-length regularization. Proposition~\ref{prop:opt-gap-bounds-acc} tells us to optimize $\alpha_{\text{train}}(\pi) - \text{reg}(\pi)$, where $\text{reg}(\pi)$ depends on coherence. But in practice (e.g., in Algorithm \ref{alg:gibbs_d_policies}), we often optimize coherence alone, without an explicit training accuracy term. Why does this work?

The answer is that the prior policy already encodes the training samples. When we use a pretrained or supervised-trained model as the prior, optimizing coherence with respect to this prior implicitly optimizes a joint objective that includes training accuracy. We hypothesize that the two formulations, (i) optimizing $\alpha_{\text{train}} + \text{coherence regularization}$ and (ii) optimizing coherence with a trained prior, are asymptotically equivalent under appropriate conditions.

To formalize this, we need machinery for reasoning about policies that have undergone two stages of training, namely pretraining and post-training.

\begin{thmbox}
\begin{lemma}[Change-of-Prior]\label{lem:change-of-prior}
Denote
\[
\hat\chi_\phi[a_1,a_2,\cdots,a_k]\coloneqq \sum_{n=1}^{k} \log_2 \sigma\left(\phi+\sum_{m=1}^{n-1} a_m,s_n\right)(a_n),
\]
where $a_n\in s_n$ for all $n$. For any policy $\rho\in \mathcal M,\phi=\rho+\sum_{n=1}^N a^\phi_n\in \mathcal M\ (a^\phi_n\in \mathcal A)$ and $\psi=\phi+\sum_{l=1}^L a^\psi_l \in \mathcal M\ (a^\psi_l\in \mathcal A)$, we have
\[
\hat\chi_\phi[a^\psi_1,\cdots,a^\psi_L]+\hat\chi_\rho[a^\phi_1,\cdots,a^\phi_N]=\hat\chi_\rho[a_1^\phi,\cdots,a_N^\phi,a_1^\psi,\cdots,a_L^\psi].
\]
\end{lemma}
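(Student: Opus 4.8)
The plan is to prove the identity by a direct telescoping argument: expand the right-hand side from the definition of $\hat\chi$ and split the summation at the index $N$. Concretely, $\hat\chi_\rho[a_1^\phi,\cdots,a_N^\phi,a_1^\psi,\cdots,a_L^\psi]$ is by definition a sum of $N+L$ terms; the $n$-th term for $1\le n\le N$ is $\log_2\sigma\big(\rho+\sum_{m=1}^{n-1}a^\phi_m,\,s(a^\phi_n)\big)(a^\phi_n)$, and the $(N+l)$-th term for $1\le l\le L$ is $\log_2\sigma\big(\rho+\sum_{m=1}^{N}a^\phi_m+\sum_{j=1}^{l-1}a^\psi_j,\,s(a^\psi_l)\big)(a^\psi_l)$, where $s(a)$ denotes the unique context in the partition $\mathcal S$ containing $a$. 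The key observation, which makes the whole lemma essentially a restatement of the chain rule of probability in additive $\log_2$ form, is that the contribution of a behavior $a$ appearing at some position in a list depends only on $a$ and on the accumulated prior policy at that point, not on the position index itself.

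The first block of $N$ terms is, verbatim, $\hat\chi_\rho[a^\phi_1,\cdots,a^\phi_N]$. For the second block, the only step is to rewrite the accumulated prior: since $\langle\mathcal M,+\rangle$ is a commutative monoid and $\phi=\rho+\sum_{n=1}^N a^\phi_n$, associativity and commutativity give $\rho+\sum_{m=1}^N a^\phi_m+\sum_{j=1}^{l-1}a^\psi_j=\phi+\sum_{j=1}^{l-1}a^\psi_j$, so the $(N+l)$-th term equals $\log_2\sigma\big(\phi+\sum_{j=1}^{l-1}a^\psi_j,\,s(a^\psi_l)\big)(a^\psi_l)$. Summing these over $l=1,\dots,L$ gives exactly $\hat\chi_\phi[a^\psi_1,\cdots,a^\psi_L]$. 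Adding the two blocks yields the claimed equality. I would present this in three or four lines once the notation is fixed.

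I expect there to be essentially no substantive obstacle; the work is entirely bookkeeping. The one point worth stating carefully is that the identity does \emph{not} invoke the chain-rule axiom of the learning system: both sides process the $a^\phi$'s in the order $1,\dots,N$ and the $a^\psi$'s in the order $1,\dots,L$, so no reordering of the $\sigma$-factors is ever needed, and the result holds even if some of the $a^\phi_n$ and $a^\psi_l$ share a context. The mild delicacies are (i) being consistent about the monoid operation when re-expressing the partial sums $\rho+\sum a^\phi_m$ as $\phi$ plus a tail, and (ii) matching the locally-named context indices $s_n$ across the three instances of $\hat\chi$ (resolved by writing $s(a)$ for the unique context of $a$). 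Both are routine, so this lemma functions mainly as a convenient additivity/change-of-prior bookkeeping device for the two-stage (pretraining then post-training) arguments in \S\ref{sec:average_case}.
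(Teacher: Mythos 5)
Your proposal is correct and takes essentially the same approach as the paper: both expand the $(N+L)$-term sum $\hat\chi_\rho[a_1^\phi,\dots,a_N^\phi,a_1^\psi,\dots,a_L^\psi]$, split it at index $N$, and use the substitution $\rho+\sum_{m=1}^N a^\phi_m=\phi$ (via monoid associativity/commutativity) to identify the tail block with $\hat\chi_\phi[a^\psi_1,\dots,a^\psi_L]$. Your added observation that the chain-rule axiom is never invoked is correct and is a slightly sharper remark than the paper's own writeup.
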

\end{thmbox}
\begin{proof}
See Appendix~\ref{sec:joint_optim}.
\end{proof}

\begin{defbox}
\begin{definition}[Quotient System]
For a learning system $(\langle \mathcal M,+\rangle,\mathcal A,\mathcal S,\sigma)$, its \emph{quotient system spanned by $\mathcal S_a\subseteq \mathcal S$ at $\phi\in \mathcal M$} is a learning system $(\langle \mathcal M_a,+_a\rangle,\mathcal A_a,\mathcal S_a,\sigma_a)$ defined by three properties. First, $\mathcal A_a=\{\phi+u:u\in\bigcup_{s\in\mathcal S_a} s\}$. Second, $\mathcal M_a=\langle \mathcal A_a\rangle_{+_a}$ with identity element $0_a\coloneqq \phi$, and the operator $+_a$ satisfying $(\phi+u)+_a(\phi+v)=\phi+(u+v)$. Third, $\sigma_a(\phi+u,s)=\sigma(\phi+u,s)$. 

\vspace{0.5em}

We denote such a relation with $\displaystyle(\langle \mathcal M_a,+_a\rangle,\mathcal A_a,\mathcal S_a,\sigma_a)\mathop{\trianglelefteq}^{\phi}(\langle \mathcal M,+\rangle,\mathcal A,\mathcal S,\sigma)$. We will denote with $\chi^a$ the coherence function in the quotient system.
\end{definition}
\end{defbox}

Essentially, the quotient system spanned by $\mathcal S_a$ at $\phi$ represents the learning problem of post-training a pre-trained policy $\phi$ on the context set $\mathcal S_a$. In the rest of this section, we will use ``pretrain samples'' to refer to our supervised samples, and ``posttrain samples'' to refer to our unsupervised samples, in line with the semi-supervised learning setup.

\begin{thmbox}
\begin{theorem}[Prior Policy Encodes Previous Training Samples] \label{thm:prior_encodes_samples}
Let
\begin{align*}
(\mathcal M_a, \mathcal A_a, \mathcal S_a, \sigma_a)&\mathop{\trianglelefteq}^{0_a} (\mathcal M, \mathcal A, \mathcal S, \sigma)\\
(\mathcal M_b, \mathcal A_b, \mathcal S_b, \sigma_b)&\mathop{\trianglelefteq}^{0_b} (\mathcal M, \mathcal A, \mathcal S, \sigma)
\end{align*}
such that $\mathcal S_a\cap\mathcal S_b=\emptyset$, $\mathcal S_a\cup \mathcal S_b=\mathcal S$, and, for some $\pi\in\mathcal A^{\mathcal S}$, $0_a=\sum_{s\in\mathcal S_b} \pi(s)$ and $0_b=\sum_{s\in\mathcal S_a} \pi(s)$. Then:
\begin{equation}
\underbrace{\chi^a(\pi(\mathcal S_a))}_{\stackunder{\textnormal{\footnotesize posttrain coherence}}{\textnormal{\footnotesize w.r.t. pretrain prior}}}+\underbrace{\hat\chi_0\left[\pi(\mathcal S_b)\right]}_{\textnormal{\footnotesize pretrain coherence}}=\ \chi(\pi)\ =\underbrace{\hat\chi_0\left[\pi(\mathcal S_a)\right]}_{\textnormal{\footnotesize posttrain coherence}}+\underbrace{\chi^b(\pi(\mathcal S_b))}_{\stackunder{\textnormal{\footnotesize posttrained accuracy}}{\textnormal{\footnotesize on pretrain samples}}}.\label{eq:prior-encodes-samples}
\end{equation}
\end{theorem}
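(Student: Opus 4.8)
The plan is to derive both equalities from two facts: the invariance of coherence under re-ordering of contexts (already guaranteed by the chain rule, as noted after the definition of $\chi$), and a direct translation of the quotient-system coherence $\chi^a$ into the notation $\hat\chi$ of the original system. First I would prove the key identity
\[
\chi^a\big(\pi(\mathcal S_a)\big)=\hat\chi_{0_a}\big[\pi(\mathcal S_a)\big],\qquad 0_a=\sum_{s\in\mathcal S_b}\pi(s),
\]
and, symmetrically, $\chi^b(\pi(\mathcal S_b))=\hat\chi_{0_b}[\pi(\mathcal S_b)]$ with $0_b=\sum_{s\in\mathcal S_a}\pi(s)$. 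To see this, fix an enumeration $s_1,\dots,s_k$ of $\mathcal S_a$ and identify the restriction $\pi(\mathcal S_a)$ with the quotient d-policy $s_j\mapsto 0_a+\pi(s_j)\in\mathcal A_a$. Unwinding $\chi^a$ uses all three defining properties of the quotient system: the rule $(0_a+u)+_a(0_a+v)=0_a+(u+v)$ shows by induction that the $j$-th partial sum $0_a+_a\sum_{i<j}(0_a+\pi(s_i))$ is sent to $0_a+\sum_{i<j}\pi(s_i)$ by the canonical homomorphism $\mathcal M_a\to\mathcal M$; the identification $\mathcal A_a\leftrightarrow\bigcup_{s\in\mathcal S_a}s$ matches the $j$-th behavior; and the property $\sigma_a(0_a+u,s)=\sigma(0_a+u,s)$ then matches the $j$-th factor in $\chi^a$ with $\sigma\big(0_a+\sum_{i<j}\pi(s_i),\,s_j\big)\big(\pi(s_j)\big)$. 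Summing $\log_2$ over $j$ is exactly $\hat\chi_{0_a}[\pi(s_1),\dots,\pi(s_k)]$.

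Next I would compute $\chi(\pi)=\chi_0(\pi)$ with a convenient enumeration of $\mathcal S$: list the contexts of $\mathcal S_b$ first, then those of $\mathcal S_a$, which is legitimate since $\chi_0$ is order-independent. The definition then splits the sum into an ``$\mathcal S_b$-block'' and an ``$\mathcal S_a$-block''; the $\mathcal S_b$-block is by definition $\hat\chi_0[\pi(\mathcal S_b)]$, while in the $\mathcal S_a$-block every partial sum already contains $\sum_{s\in\mathcal S_b}\pi(s)=0_a$, so that block equals $\hat\chi_{0_a}[\pi(\mathcal S_a)]$, which by the identity above is $\chi^a(\pi(\mathcal S_a))$. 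This yields $\chi(\pi)=\hat\chi_0[\pi(\mathcal S_b)]+\chi^a(\pi(\mathcal S_a))$, the left equality. Equivalently, this is the Change-of-Prior lemma applied with $\rho=0$, $\phi=0_a$, the $\mathcal S_b$-part of $\pi$ playing the role of $\{a^\phi_n\}$ and the $\mathcal S_a$-part playing the role of $\{a^\psi_l\}$. Swapping the roles of $a$ and $b$ — enumerate $\mathcal S_a$ first — gives $\chi(\pi)=\hat\chi_0[\pi(\mathcal S_a)]+\chi^b(\pi(\mathcal S_b))$, the right equality.

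The only genuine obstacle is the first step: making explicit that the formal bijection $\mathcal A_a\leftrightarrow\bigcup_{s\in\mathcal S_a}s$, the quotient operator $+_a$, and the pulled-back inference function $\sigma_a$ interact so that evaluating coherence inside the quotient system is literally the same arithmetic as evaluating $\hat\chi$ with base prior $\sum_{s\in\mathcal S_b}\pi(s)$ in the original system; once this translation is pinned down, the rest is the re-ordering invariance of coherence plus routine bookkeeping. Two minor points should be recorded along the way: $\hat\chi_\phi[\,\cdot\,]$ is itself order-independent, since the same adjacent-swap-via-chain-rule argument applies to any tuple of distinct contexts rather than only a full enumeration of $\mathcal S$, so the notation $\hat\chi_0[\pi(\mathcal S_b)]$ is unambiguous; and the degenerate cases $\mathcal S_a=\emptyset$ or $\mathcal S_b=\emptyset$ reduce to empty sums and check out directly.
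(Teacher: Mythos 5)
Your proof is correct and matches the paper's own argument, which is just the one-line remark that both equalities are direct applications of the change-of-prior lemma (with $\rho=0$, $\phi=0_a$ for the left equality and the roles of $a$ and $b$ swapped for the right). You have helpfully made explicit the translation $\chi^a(\pi(\mathcal S_a))=\hat\chi_{0_a}[\pi(\mathcal S_a)]$ between the quotient-system coherence and the $\hat\chi$ notation, and noted the order-independence of $\hat\chi_\phi[\cdot]$ — details the paper leaves implicit but which are needed for the reduction to be airtight.
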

\end{thmbox}
\begin{proof}
Both equalities follow from direct applications of the change-of-prior lemma.
\end{proof}

One way to interpret this theorem is that it shows the equivalence between:
\begin{enumerate}[leftmargin=3em]
    \item[\textbf{LHS}] Optimizing solely for a policy's coherence on posttrain (unsupervised) contexts wrt pretrained (supervised) prior,\footnote{The ``pretrain coherence'' term can be ignored as it's fixed.} \emph{and}
    \item[\textbf{RHS}] Optimizing jointly for the policy's coherence on posttrain (unsupervised) contexts and its accuracy on pretrain (supervised) samples.
\end{enumerate}

Note that for the interpretation of RHS, there are still subtle gaps between the intuitive understanding here and the terms in \eqref{eq:prior-encodes-samples}, which we will revisit in Appendix \ref{sec:heuristic-arg}, in the heuristic argument for the conjecture below. We now conjecture the conditions under which optimizing coherence alone is equivalent to optimizing the regularized training objective.

\begin{thmbox}
\begin{conjecture}[Equivalence of Coherence Optimization and Regularized Training] \label{conj:average_case}
    Let the training contexts $\pi$ be randomly separated into two disjoint subsets, posttrain samples $\pi(\mathcal S_a)$ and pretrain samples $\pi(\mathcal S_b)$. The latter is given, while we need to optimize the former (which can be any d-policy in $\prod_{s\in\mathcal S_a} s$). I.e., we do \emph{not} know the ground-truth labels for the posttrain contexts, and our task is to determine them.

    \vspace{0.5em}

    Consider (i) optimizing for coherence on the posttrain samples $\pi(\mathcal S_a)$ with respect to a pretrained policy or (ii) optimizing for $\alpha_{\textup{train}}(\pi)-\mathrm{reg}(\pi)$, i.e., SRM. The two are asymptotically equivalent when the posttrain sample count $|\mathcal S_a|$ is chosen to satisfy
    \begin{equation}
    \underbrace{|\mathcal S_a|}_{\textnormal{\footnotesize posttrain sample count}}
    \sim\frac 14\times\underbrace{\overbrace{\frac{\left(-\chi^b(\pi^*(\mathcal S_b)) / |\mathcal S_b|\right)^2}{-\hat\chi_0[\pi^*(\mathcal S_a)]/|\mathcal S_a|}}}^{\textnormal{\footnotesize mean pretrain coherence, squared}}_{\textnormal{\footnotesize mean posttrain coherence, inversed}}\times \underbrace{\left(\frac{1}{1-\alpha^b(\pi^*)}\right)^2}_{\stackunder{\textnormal{\footnotesize pretrain error rate,}}{\textnormal{\footnotesize inverse-squared}}}\times \underbrace{|\mathcal S_b|}_{\textnormal{\footnotesize pretrain sample count}}\label{eq:optimal_sample_count}
    \end{equation}
\end{conjecture}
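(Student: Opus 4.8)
The plan is to use the quotient decomposition of Theorem~\ref{thm:prior_encodes_samples} to rewrite objective~(i) as an explicitly regularized accuracy, observe that it already coincides with SRM on the unsupervised coordinates, so that the \emph{entire} gap between (i) and (ii) collapses to a single yes/no question --- does the SRM optimum leave the supervised (``pretrain'') labels at their ground-truth values? --- and then to locate the value of $|\mathcal S_a|$ at which the answer flips.

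First, with the pretrained prior policy $\phi_0=\sum_{s\in\mathcal S_b}\pi^*(s)$, Theorem~\ref{thm:prior_encodes_samples} (via Lemma~\ref{lem:change-of-prior}) gives $\chi^a(\pi(\mathcal S_a))=\hat\chi_0[\pi(\mathcal S_a)]+\chi^b(\pi^*(\mathcal S_b))-\hat\chi_0[\pi^*(\mathcal S_b)]$, whose last term is independent of the inferred labels $\pi(\mathcal S_a)$. So approach~(i) is $\arg\max_{\pi(\mathcal S_a)}\{\hat\chi_0[\pi(\mathcal S_a)]+\chi^b(\pi^*(\mathcal S_b))\}$: a description-length regularizer $\hat\chi_0[\pi(\mathcal S_a)]$ on the inferred labels plus a log-likelihood-shaped ``soft accuracy'' term $\chi^b(\pi^*(\mathcal S_b))$ scoring how well the policy post-trained on $\pi(\mathcal S_a)$ predicts the supervised labels --- precisely the skeleton of a regularized training objective. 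On the other side, SRM maximizes $\alpha_{\textup{train}}(\pi)-\mathrm{reg}(\pi)$; varying only the unsupervised coordinates, $\alpha_{\textup{train}}$ is constant while $\mathrm{reg}$ is strictly decreasing in $\chi(\pi)=\hat\chi_0[\pi(\mathcal S_a)]+\chi^b(\pi(\mathcal S_b))$, so SRM maximizes the very same expression there. Hence if the SRM optimum satisfies $\pi(\mathcal S_b)=\pi^*(\mathcal S_b)$ (making $\alpha_{\textup{train}}\equiv 1$), the two procedures agree exactly; the only question is whether it does.

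To settle this, check whether flipping one supervised label can raise $\alpha_{\textup{train}}-\mathrm{reg}$. A flip costs $1/N$ in $\alpha_{\textup{train}}$ with $N=|\mathcal S_b|$ and lowers $\mathrm{reg}$ by $\approx\Delta\chi/(2N\,\mathrm{reg})$, where $\Delta\chi\ge 0$ is the resulting coherence gain; so some flip helps iff $\Delta\chi\gtrsim 2\,\mathrm{reg}$. By ordering-invariance of $\chi$, the largest attainable $\Delta\chi$ over supervised coordinates is at most the largest per-sample surprise $-\log_2\sigma(\cdot,s)(\pi^*(s))$, $s\in\mathcal S_b$; under the natural assumption that the total $-\chi^b(\pi^*(\mathcal S_b))=|\mathcal S_b|\,\bar h_b$ (write $\bar h_b=-\chi^b(\pi^*(\mathcal S_b))/|\mathcal S_b|$ for the mean pretrain coherence) is borne roughly uniformly by the $\approx(1-\alpha^b(\pi^*))|\mathcal S_b|$ misclassified supervised samples, this maximum is $\approx\bar h_b/(1-\alpha^b(\pi^*))$. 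Meanwhile, once the unsupervised coherence mass dominates, $\mathrm{reg}\approx\sqrt{-\chi(\pi)/N}\approx\sqrt{|\mathcal S_a|\bar h_a/|\mathcal S_b|}$ with $\bar h_a=-\hat\chi_0[\pi^*(\mathcal S_a)]/|\mathcal S_a|$ the mean posttrain coherence. No supervised flip is profitable exactly when $\bar h_b/(1-\alpha^b(\pi^*))\lesssim 2\sqrt{|\mathcal S_a|\bar h_a/|\mathcal S_b|}$; squaring and solving for $|\mathcal S_a|$ reproduces \eqref{eq:optimal_sample_count}, the factor $\tfrac14$ coming from the $2^2$. For larger $|\mathcal S_a|$ the inequality stays slack and (i)$\equiv$(ii) persists, up to the regime where the linearizations fail ($-\chi(\pi)\gg\log_2 e-\log_2(1/\delta)$, the small-$\delta$ window of Theorem~\ref{thm:optimal_regularization}, and additivity of single flips), which is why the conjecture asserts a matching scale ``$\sim$'' rather than an inequality.

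The main obstacle is the surprise-distribution step: the honest bound is governed by the \emph{maximum} supervised-sample surprise, whereas $\bar h_b/(1-\alpha^b(\pi^*))$ is the per-error \emph{average}, and the two agree only under a regularity hypothesis on $\sigma$ (that the post-trained model is comparably unconfident across all its errors) --- absent which $\alpha^b(\pi^*)$ must be read as an effective rather than a literal error rate. A secondary difficulty is excluding jointly profitable multi-label flips within $\mathcal S_b$: one argues that coherence increments are approximately additive across weakly-coupled contexts, so any profitable joint flip contains a profitable single flip, but tightly correlated clusters of supervised errors need separate treatment. Finally, ``asymptotically equivalent'' should be made precise as: the two optimal d-policies, hence their accuracies $\alpha(\cdot\,;\pi^*)$, coincide with probability tending to $1$ over the random $\mathcal S_a/\mathcal S_b$ split as $|\mathcal S_a|,|\mathcal S_b|\to\infty$ with their ratio held at the value in \eqref{eq:optimal_sample_count}.
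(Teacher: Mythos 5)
The paper's own heuristic argument (\S\ref{sec:heuristic-arg}) reaches \eqref{eq:optimal_sample_count} by a different route than yours, so this is a case of two genuinely distinct arguments arriving at the same scaling law. Both begin from the decomposition in Theorem~\ref{thm:prior_encodes_samples}, but thereafter you diverge. The paper reads objective~(ii) as an optimization over $\pi(\mathcal S_a)$ alone, with the ``accuracy'' term being the \emph{soft} accuracy $\alpha^b(\pi)$ of the posttrained policy $\sum_{s\in\mathcal S_a}\pi(s)$ evaluated on the pretrain labels (approximated linearly from $\chi^b$), and then runs a Lagrangian/oracle argument: the coherence objective is a Lagrangian relaxation of the SRM objective with implicit multiplier $\lambda=|\mathcal S_a|$, and the KKT condition $\lambda = g'(r^*)^2$ (matching the linear multiplier to the derivative of the square-root penalty, in the variable $r=-\hat\chi_0[\pi(\mathcal S_a)]/|\mathcal S_a|$) gives $|\mathcal S_a| = c^2|\mathcal S_b|/(4r^*)$, hence the formula. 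You instead take SRM literally --- an optimization over all of $\pi$, with $\alpha_\text{train}$ the hard agreement on $\mathcal S_b$ --- observe that on the slice $\pi(\mathcal S_b)=\pi^*(\mathcal S_b)$ the two objectives coincide identically (both reduce to maximizing $\chi(\pi)$), and therefore locate the $|\mathcal S_a|$ at which SRM's global optimum enters that slice, via a marginal single-flip calculation ($\Delta\chi > 2\,\mathrm{reg}$). Your flip threshold correctly reproduces \eqref{eq:optimal_sample_count}, including the factor $\tfrac14$, and you correctly flag the two weak points (max-vs-mean surprise; additivity across flips) and the one-sidedness of the resulting bound.

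The substantive caveat is that your reading of what SRM means in~(ii) is not the one the paper's heuristic uses, and the conjecture statement is ambiguous enough to admit both. Under your reading, $\alpha_\text{train}$ varies with $\pi(\mathcal S_b)$ and is \emph{constant} on the $\mathcal S_a$-slice, so the whole game is whether SRM ever leaves the slice; under the paper's, $\pi(\mathcal S_b)$ is frozen at $\pi^*(\mathcal S_b)$ and the accuracy term is $\alpha^b(\pi)$, which varies over $\pi(\mathcal S_a)$, so the game is whether two distinct real-valued objectives over $\pi(\mathcal S_a)$ share an argmax. These are not the same question, and it is not immediate that answering yours answers theirs. That both analyses spit out the identical formula is a nice sanity check and suggests the same accuracy-vs-compressibility balance is being probed from two angles, but to present your argument as a proposal for \emph{this} conjecture you should state which reading of~(ii) you adopt and, if possible, argue that your slice criterion implies (or is implied by) the paper's stationarity-matching criterion. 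Also note that the paper's argument does not obviously extend to your multi-flip worry, while yours does not obviously handle the paper's concavity assumption on $f(r)$ --- each approach carries its own unverified hypothesis.
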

\end{thmbox}

See Appendix \ref{sec:heuristic-arg} for a heuristic argument supporting the conjecture.

The conjecture states that two formulations are equivalent when the number of posttrain samples $|\mathcal S_a|$ is chosen appropriately. Even if we do not trust the exact estimate in Equation~\ref{eq:optimal_sample_count}, we can use ternary search to find the optimal $|\mathcal S_a|$, since concavity of $f(r)$ and $g(r)$ implies that $\max_{r\in\mathbb{R}} (f(r)-\sqrt{|\mathcal S_a|}\cdot g(r))$ is unimodal in $|\mathcal S_a|$. In this sense, $|\mathcal S_a|$ is a hyperparameter whose optimal value we need to search for.

The equivalence explains why optimizing coherence alone works in practice. When we use a pretrained language model as the prior policy, we are implicitly optimizing for the same samples that prior policy was trained on. The prior encodes the training objective; optimizing coherence with respect to this prior recovers the regularized training objective.

The conjecture also suggests that the optimal number of posttrain samples scales with the effective number of pretrain samples that are i.i.d.\ with the posttrain contexts. This is likely much larger than typical in-context learning implementations \citep{xu2023, wen2025}, which already produce strong results. This motivates supervised training-based implementations of coherence optimization, which allow greatly expanding the number of unsupervised samples; we discuss the corresponding method in Appendix~\ref{app:training-friendly}.

\section{Preliminary Experiments}\label{sec:experiments}

We present preliminary experiments demonstrating that coherence is effective both as an evaluation metric and as a training objective. We also present practical results comparing types of prior policies, which agree with Theorem~\ref{thm:optimal_regularization} on the data-vs-prior distributional distance being a determinant of effectiveness. Current results are preliminary and should be treated as a starting point for exploration.

\subsection{Coherence as an Evaluation Metric}

We test the scalability of coherence as an unsupervised proxy for truthfulness by using a 0.6B weak judge model/prior policy to scrutinize GPT-5 outputs twice, with classical LLM-as-a-Judge and with a coherence-based metric respectively.

\paragraph{Setup.} Each data point represents a collection of around $5$ HARDMath questions \citep{fan2024hardmath} with strong topical relevance. Blue points represent GPT-5-generated honest solutions to a collection, while red points represent GPT-5-generated deceptive solutions (where GPT-5 is explicitly prompted to generate ``incorrect but convincing'' solutions). In Figure~\ref{fig:hardmath}(a) (LLM-as-a-Judge), the y-axis represents the sum of logprobs the Qwen3-0.6B weak judge model assigns to the solutions. In Figure~\ref{fig:hardmath}(b) (coherence), the y-axis represents the difference between cross-solution coherence and sum of individual logprobs, i.e., pointwise mutual information (PMI), using the same weak judge as prior policy. The x-axis in both plots represents solution length.

\paragraph{Results.} The coherence-based metric turns out much superior, as shown in Figure~\ref{fig:hardmath}. If PMI (the difference between coherence and sum of logprobs) is positively predictive of honesty, it implies coherence provides additional predictive power over LLM-as-a-Judge. Indeed, deceptive solutions receive \emph{higher} logprobs in LLM-as-a-Judge, meaning the judge is fooled by deception. They receive \emph{lower} PMIs, meaning the coherence-based metric detects deception attempts better than LLM-as-a-Judge does. This demonstrates that cross-context internal coherence predicts truthfulness, even when the content is incomprehensible to the judge.

\begin{figure}[tp]
    \centering
    \includegraphics[width=0.8\textwidth]{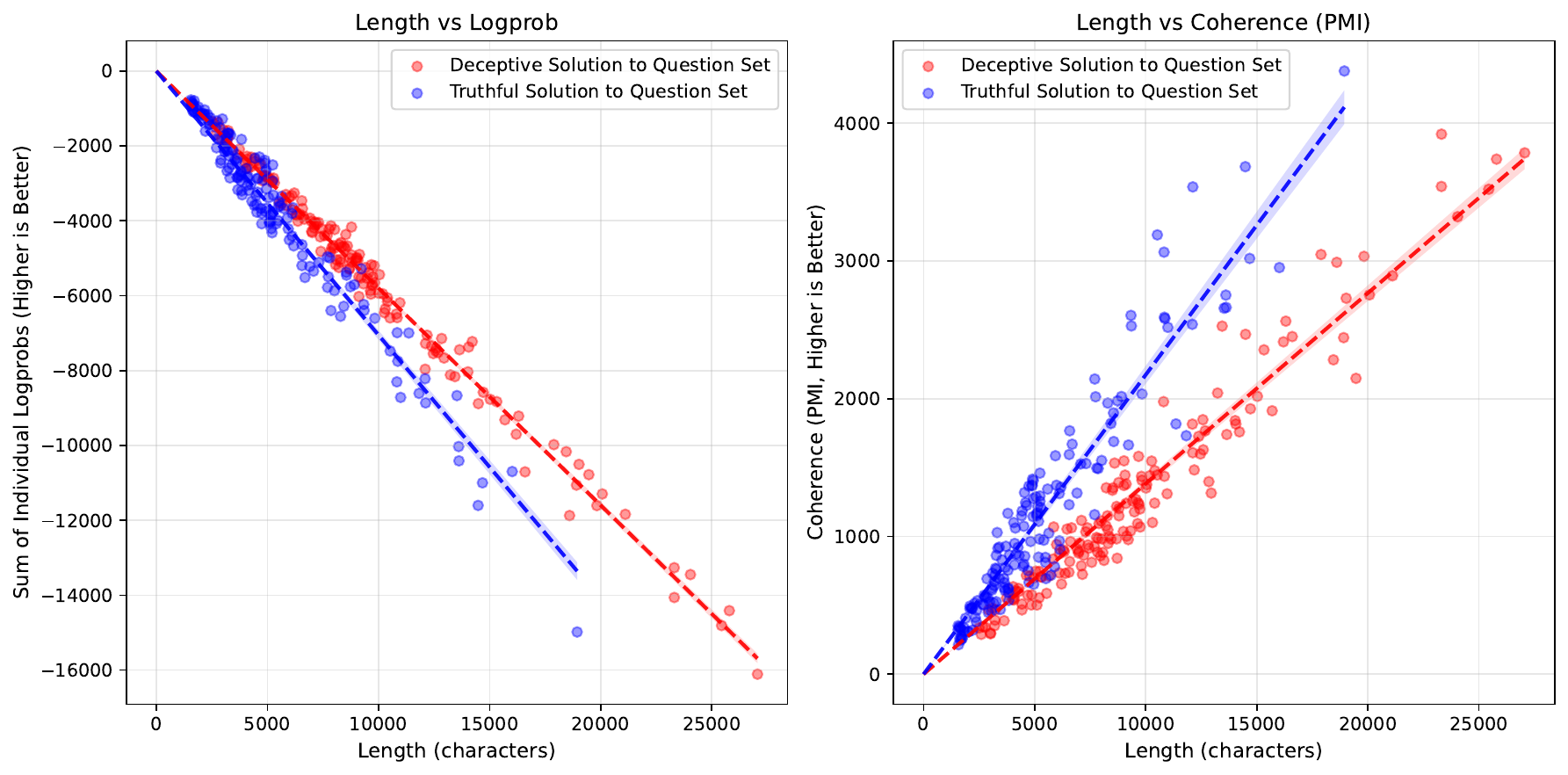}
    \caption{Coherence-based metrics detect deceptive solutions while LLM-as-a-Judge is fooled. Blue: honest solutions; Red: deceptive solutions. Left: LLM-as-a-Judge is fooled by deception and assigns higher probabilities (sum of logprobs) to deceptive solutions. Right: The coherence-based metric (PMI) successfully assigns higher scores to honest solutions.}
    \label{fig:hardmath}
\end{figure}






\subsection{Coherence as an Optimization Objective}

We test whether coherence optimization improves GSM8K performance for a 1B-parameter LLM by implementing Algorithm~\ref{alg:training_friendly_gibbs}, across $\gamma \in \{0.25, 0.45, 0.65, 0.85\}$. We evaluate each setup over 100 random seeds and report the mean results. 

\paragraph{Setup.}
We use \texttt{meta-llama/Llama-3.2-1B-Instruct} in an in-context learning (ICL) learning system and run $N=40$ Gibbs rounds on a per-run training set $\mathcal S$ of size $|\mathcal S|=100$. Accuracy is exact-match on the final numeric answer (parsed from the model output).
We evaluate each configuration over 100 random seeds; for each seed we draw a fresh 100-example subset $\mathcal S$ from GSM8K and run the full $\gamma$ sweep on this same subset, enabling paired comparisons across $\gamma$. Furthermore, we introduce a slightly modified sampling rule: for each $t\ge 1$ and each $s_n\notin \hat{\mathcal S}_t$, instead of sampling from $\sigma^\beta(\phi_t,s_n)$ we sample from an equal-weight mixture of the samplers induced by the base and current round contexts ($\phi_{0}, \phi_{t}$):

\[
a_t^n \sim \tfrac{1}{2}\,\sigma^\beta(\phi_{t}, s_n)\;+\;\tfrac{1}{2}\,\sigma^\beta(\phi_{0}, s_n).
\]

In preliminary runs, sampling exclusively from the current-round context $\phi_t$ led to unstable dynamics: we observed occasional mode collapse and substantially higher variance across seeds. We theorize Gibbs updates can over-optimize toward a narrow region of high-coherence/low-performance $\pi$ distributions, amplifying early-round noise and reducing exploration.

To mitigate this, we anchor sampling to the initial context $\phi_0$ by drawing from an equal-weight mixture of $\sigma^\beta(\phi_t,\cdot)$ and $\sigma^\beta(\phi_0,\cdot)$. This can be viewed as a lightweight, training-free analogue of KL regularization. Empirically, this modification helped reduce mode collapse and yield more consistent trajectories while retaining performance across rounds.

\paragraph{Results.}

\begin{table}[t]
\centering
\caption{Aggregate summary statistics across Gibbs rounds. Mean $\pm$ SE across seeds.}
\label{tab:gsm8k_agg_summary}
\begin{tabular}{@{}c c c c c c c@{}}
\toprule
$\gamma$ & Rounds & Init Train (\%) & Final Train (\%) & Max Train (\%) & Max Round & Improvement (\%) \\
\midrule
0.25 & 40 & $24.86 \pm 0.48$ & $\mathbf{35.58 \pm 0.80}$ & $\mathbf{45.90 \pm 0.62}$ & $19.6 \pm 1.6$ & $\mathbf{+10.72 \pm 0.77}$ \\
0.45 & 40 & $24.86 \pm 0.48$ & $33.56 \pm 0.60$ & $41.56 \pm 0.51$ & $19.1 \pm 1.5$ & $+8.70 \pm 0.68$ \\
0.65 & 40 & $24.86 \pm 0.48$ & $31.86 \pm 0.64$ & $38.40 \pm 0.45$ & $20.7 \pm 1.5$ & $+7.00 \pm 0.66$ \\
0.85 & 40 & $24.86 \pm 0.48$ & $27.72 \pm 0.64$ & $34.62 \pm 0.55$ & $20.4 \pm 1.6$ & $+2.86 \pm 0.72$ \\
\bottomrule
\end{tabular}
\end{table}

\begin{figure}[tp]
    \centering
    \includegraphics[width=0.9\textwidth]{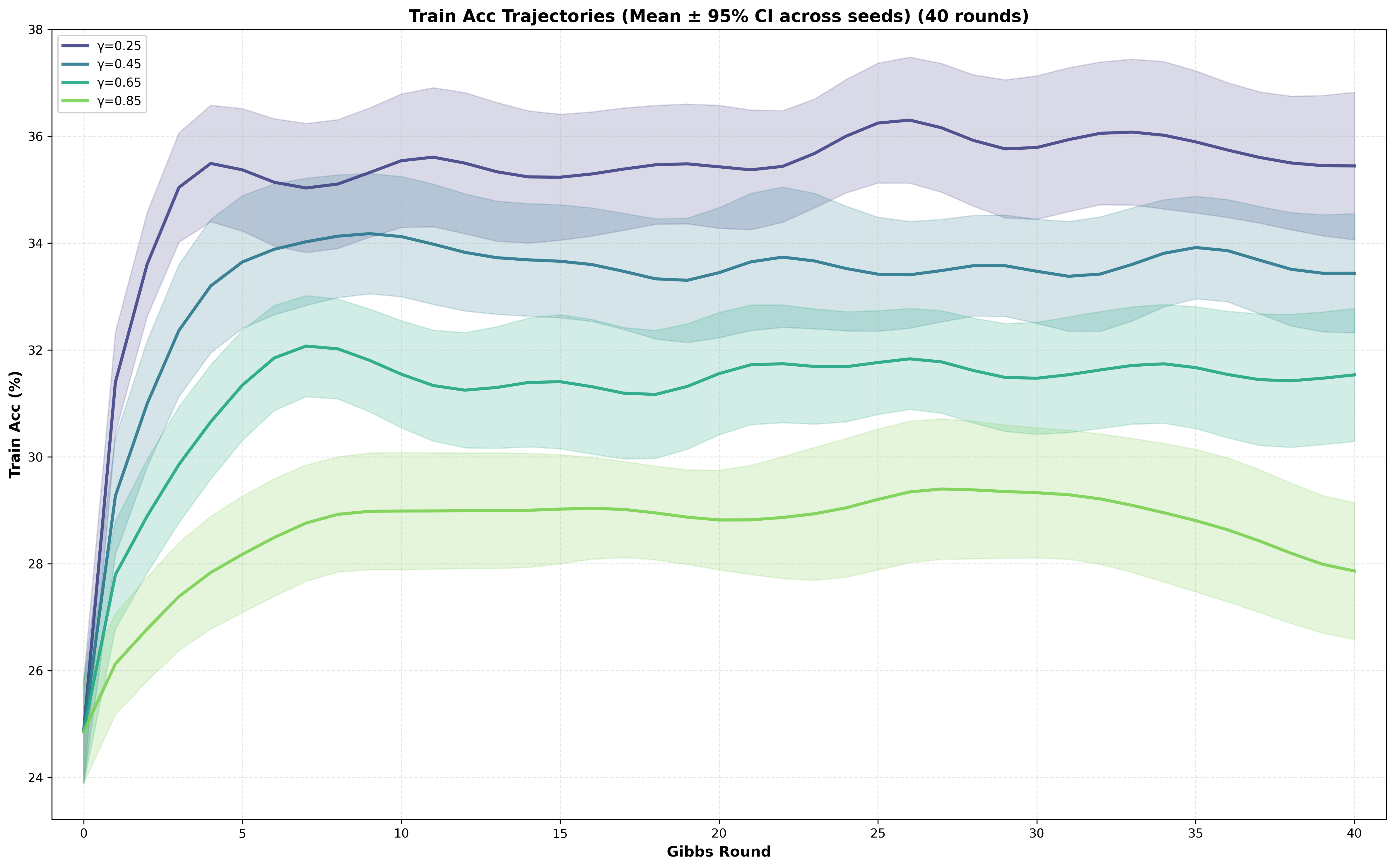}
    \caption{Mean train accuracy over 40 Gibbs rounds for $\gamma \in \{0.25, 0.45, 0.65, 0.85\}$ (shaded: 95\% CI). Higher $\gamma$ holds out a larger context subset $\hat{\mathcal S}_t$ when resampling.}
    \label{fig:trajectories}
\end{figure}

Figure~\ref{fig:trajectories} shows that coherence optimization consistently increases train accuracy over Gibbs rounds, with the strongest gains at smaller hold-out fractions. Across 100 paired seeds, $\gamma=0.25$ achieves the best performance: mean train accuracy improves from $24.86\%$ at initialization to $35.58\%$ at the final round ($+10.72\%$), with a peak of $45.90\%$ on average (Table~\ref{tab:gsm8k_agg_summary}). As $\gamma$ increases, both the final and peak accuracies monotonically decline (peak: $41.56\%$ at $\gamma=0.45$, $38.40\%$ at $\gamma=0.65$, and $34.62\%$ at $\gamma=0.85$), suggesting that resampling a larger complement (smaller $\gamma$) provides a stronger self-training signal in this learning system. Notably, the round of maximal performance is relatively stable across $\gamma$ (roughly rounds $19$-$21$ on average; Table~\ref{tab:gsm8k_agg_summary}), indicating that most gains occur in the first $\sim$20 rounds, after which improvement saturates. We hypothesize that this saturation arises from context-window limitations and inherent constraints of in-context learning (ICL), since the procedure can only improve predictions via contextual conditioning rather than explicit parameter updates.

While these results are in-sample, the observed saturation is consistent with a capacity ceiling: the \texttt{meta-llama/Llama-3.2-1B-Instruct} model reports $44.4\%$ on GSM8K \citep{llama32_modelcard}. This is comparable in scale to our best average \textbf{in-sample} peak, but is not directly comparable due to differences in prompting/evaluation methods.

\subsection{Effect of Prior Policy on Coherence Optimization}

The choice of prior policy affects the performance of coherence optimization. We empirically examine how different types of priors influence the alignment between coherence and truthfulness.

\paragraph{Setup.} We compare three types of prior policies. (1) \textbf{Pretrained prior} (BASE) is a base model trained only on next-token prediction. (2) \textbf{Posttrained prior} (CHAT) is a model further trained with RLHF for helpfulness and safety. (3) \textbf{Reinforced reasoning prior} (THINK) is a model trained with RL on verifiable reasoning tasks. We use the Qwen3 model family for all three classes of prior policies.

Under each prior policy, and for each of 8 models chosen from LMArena \citep{chiang2024chatbot}, we compute the negated coherence of generations from the latter model on a self-curated set of open-ended research questions. Then, for each prior policy, we compute the Pearson's r between a model's LMArena Elo rating and its negated coherence under that prior policy (shown on y-axis, lower is better), and examines how it changes w.r.t. the prior policy's parameter count (shown on x-axis). This would indicate the extent to which scaling is possible for coherence optimization, and the comparison of practical fit across types of prior policy.

\paragraph{Results.} Figure~\ref{fig:priors} shows accuracy scaling trends for each prior type. Several patterns emerge.

\begin{figure}[tp]
    \centering
    \includegraphics[width=0.7\textwidth]{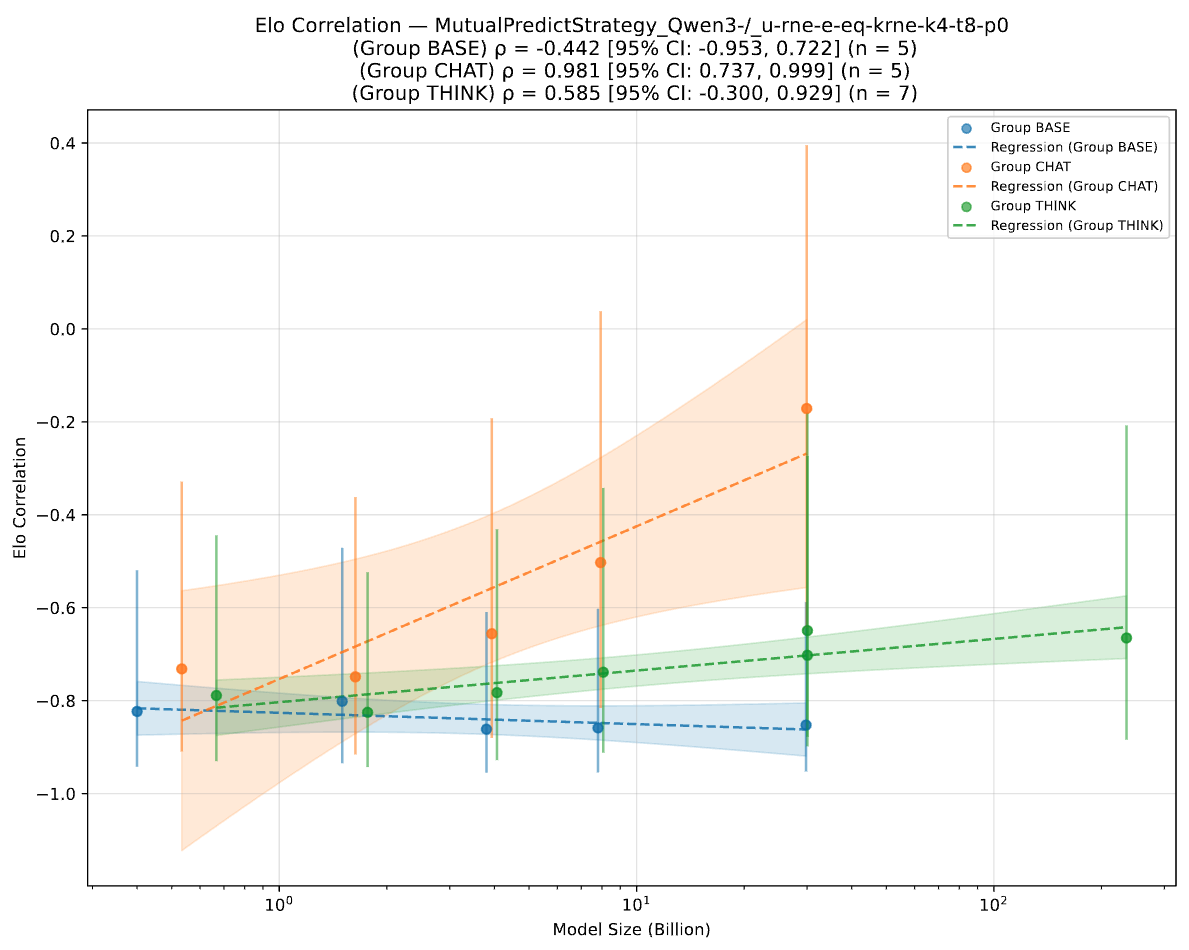}
    \caption{Scaling trends of coherence-truthfulness agreement for different prior policies in an in-context learning system. Lower is better. BASE refers to a pretrained prior, CHAT to a posttrained/RLHF prior, and THINK to a reinforced reasoning prior.}
    \label{fig:priors}
\end{figure}

When the model is small, all three prior types perform similarly, encoding basic grammar, logic, and commonsense. As model size increases, we observe different scaling slopes. A negative slope indicates that higher-fidelity modeling of that prior leads to better coherence-truthfulness alignment. We observe $\mathrm{slope}_\text{BASE} < \mathrm{slope}_\text{THINK} < \mathrm{slope}_\text{CHAT}$. This suggests that adding the preference prior (CHAT) de-aligns coherence from accuracy, while further adding the factuality prior (THINK) re-aligns them.

The pretrained and reinforced reasoning priors follow the strategy of ``using all available supervision to train a best policy, then let coherence optimization regularize with that policy as prior.'' This is optimal in the sense that these priors best approximate the true data-generating distribution among the options available.

\subsection{Discussion} 

These experiments, while preliminary, reveal several patterns worth examining.

\textbf{Coherence Detects Deception When Direct Evaluation Fails.} The HARDMath results show that a weak judge's coherence-based metric outperforms its direct evaluation, even though both use the same model. How can comparing multiple outputs reveal information that evaluating a single output cannot?

A deceptive solution must simultaneously appear plausible to the judge and contain the deceptive content. The first condition is easy to satisfy in isolation, but the second introduces constraints that conflict with the model's learned distribution over related content. A deceptive mathematical argument must avoid certain true statements that would expose the deception. This avoidance makes the deceptive solution locally plausible but globally improbable when considered alongside topically related solutions. PMI captures this. High marginal probability but low joint probability indicates a solution optimized to appear good in isolation while avoiding broader context that would expose it.

An adversary can optimize for marginal plausibility, but optimizing for joint plausibility across many related contexts is equivalent to actually being truthful.

Note, however, the HARDMath experiment used synthetic deceptive solutions generated by prompting; it remains unclear whether the same signal would detect naturally occurring or adversarially trained deception.

\textbf{Coherence Overoptimization (Goodharting).} In practice, we sometimes see response texts becoming unnatural from the human perspective (yet still information-rich and score highly on automated benchmarks), after many rounds of Gibbs sampling. This may be due to limitations of in-context learning, where the objective is not exactly to imitate in-context examples, or it may be due to \emph{overoptimization} of coherence. Below we briefly summarize the types of coherence overoptimization that are theoretically possible.

\begin{enumerate}[leftmargin=2em]
    \item\textbf{Small divergence between desired measurement target (e.g. human legibility) and actual coherence measurement (e.g. pretrained policy).} 
    
    Its impact on accuracy is modest and can be reflected by the increase in the optimality gap $G(\phi;\pi^*)=- 2\chi_\phi(\pi^*)+\log_2 e$, whose continuity ensures that a small divergence only leads to small change in the gap size.
    
    However, its impact on alignment can be arbitrarily large, assuming that alignment is carried out by instilling a certain bias (e.g. human legibility) in the prior policy. As we have seen in \citet{gao2022}, optimizing for a flawed proxy for alignment bias can lead to catastrophic deviation from the desired bias (e.g. complete loss of human legibility).
    
    \item\textbf{Too many coherence optimization contexts.} The informal theorem on joint optimization has determined the optimal number of contexts for coherence optimization to be on the same order as the effective number of i.i.d. supervised training samples. While such a number is hard to exceed in practice, when the coherence optimization contexts focus on a certain niche in the context space, they can easily outnumber the effective supervised samples in the same niche. 
    
    Its impact on accuracy can be arbitrarily large. For instance, when the coherence optimization contexts are entirely orthogonal to supervised training samples, or only represent a vanishingly small portion of those samples, the optimality gap $G(\phi;\pi^*)=- 2\chi_\phi(\pi^*)+\log_2 e$ goes to infinity. 
    
    It has no negative impact on alignment, since alignment is itself embodied by the coherence optimization process, rather than being traded off against.
\end{enumerate}

In both cases, to mitigate overoptimization, we may use early-stopping on three different aspects.

\begin{enumerate}[leftmargin=2em]
    \item \textbf{Early stopping on Gibbs iterations}, i.e., to stop Gibbs sampling after a certain number of rounds when we see that desirable properties (e.g. accuracy, human legibility) start to decline.

    \item \textbf{Reducing number of coherence optimization contexts}, typically to the same ballpark as the effective number of supervised learning samples.
    
    \item \textbf{Higher temperature in the ``softmax over coherence'' distribution}, which in the Gibbs sampling case, implies a reduced inverse temperature $\beta$.
\end{enumerate}

For all three interventions, we can use ternary search or similar methods to find the sweet spot, assuming the ability to evaluate desired properties (e.g. accuracy, human legibility).

\section{Conclusion}\label{sec:conclusion}

This paper presented coherence optimization as a unified framework for feedback-free self-improvement. We showed that state-of-the-art methods (debate, bootstrap, ICM) are instances of coherence optimization, proved that coherence regularization is optimal among description-length regularizers, developed a principled method for coherence optimization, and demonstrated preliminary empirical evidence for its effectiveness.

\textbf{Appendix~\ref{sec:faq} addresses a list of frequently asked questions.}

\subsection*{Limitations and Future Work}

One major limitation concerns experimental validation. Existing experiments in \S\ref{sec:experiments} test key elements in the theory such as coherence-truthfulness alignment, prior choice, and accuracy uplift, but does not directly validate the claim of optimality by comparing against alternative regularizers. Full experimental validation is the task of future work.

Additionally, our analysis suggests larger potential for coherence optimization if scaled to near-pretrain scale. By allowing a large reasoning model to perform extended reasoning on \emph{all} the contexts in its pretraining data, it can discover the belief system that most coherently explains the world. This kind of coherence-seeking across otherwise unrelated contexts has given rise to many of humanity's most important realizations, e.g., special relativity arose from the inconsistency between the fixed speed of light and the behavior of fast-moving objects, and the immorality of slavery was recognized through the inconsistency between the institution and principles of innate human rights. We have also shown that two-agent debate is a special case of coherence optimization, and such debates occur in almost every instance of group deliberation. Coherence optimization thus serves as a principled model of reflection in both humans and AIs, and future work may explore its extension to multi-agent and human-AI systems.

\subsection*{Broader Impact Statement}
This work proposes coherence optimization as a framework for feedback-free self-improvement in language models. While the primary motivation is to improve model truthfulness without requiring external supervision, we acknowledge potential dual-use concerns. On the positive side, coherence optimization could enhance the reliability of AI systems by identifying and resolving internal inconsistencies, and develops a characterization of what self-improvement methods truly optimize for, both important for AI safety. On the negative side, any method that improves model capabilities could potentially be misused or contribute to loss-of-control risks. We believe the benefits of developing mechanistic understanding of unsupervised self-improvement outweigh risks brought by the capablity uplifts already mostly exhausted by existing methods.

\subsection*{Acknowledgments}
Tianyi Qiu and Zhonghao He are affiliated with the Cosmos Institute and Oxford HAI Lab. We thank Callum Lawson, Dylan Hadfield-Menell, Fabien Roger, Jiaxin Wen, Aidan Ewart, Callum Canavan, and Max Kleiman-Weiner for helpful discussions.

\bibliography{references}

@article{xie2021,
  title={An explanation of in-context learning as implicit bayesian inference},
  author={Xie, Sang Michael and Raghunathan, Aditi and Liang, Percy and Ma, Tengyu},
  journal={arXiv preprint arXiv:2111.02080},
  year={2021}
}

@article{mingrad2021,
  title={Is SGD a Bayesian sampler? Well, almost},
  author={Mingard, Chris and Valle-P{\'e}rez, Guillermo and Skalse, Joar and Louis, Ard A},
  journal={Journal of Machine Learning Research},
  volume={22},
  number={79},
  pages={1--64},
  year={2021}
}

@article{xu2023,
  title={Reprompting: Automated chain-of-thought prompt inference through gibbs sampling},
  author={Xu, Weijia and Banburski-Fahey, Andrzej and Jojic, Nebojsa},
  journal={arXiv preprint arXiv:2305.09993},
  year={2023}
}

@article{wen2025,
  title={Unsupervised Elicitation of Language Models},
  author={Wen, Jiaxin and Ankner, Zachary and Somani, Arushi and Hase, Peter and Marks, Samuel and Goldman-Wetzler, Jacob and Petrini, Linda and Sleight, Henry and Burns, Collin and He, He and others},
  journal={arXiv preprint arXiv:2506.10139},
  year={2025}
}

@article{shumailov2024,
  title={AI models collapse when trained on recursively generated data},
  author={Shumailov, Ilia and Shumaylov, Zakhar and Zhao, Yiren and Papernot, Nicolas and Anderson, Ross and Gal, Yarin},
  journal={Nature},
  volume={631},
  number={8022},
  pages={755--759},
  year={2024},
  publisher={Nature Publishing Group UK London}
}

@article{irving2018ai,
  title={AI safety via debate},
  author={Irving, Geoffrey and Christiano, Paul and Amodei, Dario},
  journal={arXiv preprint arXiv:1805.00899},
  year={2018}
}

@article{khan2024debating,
  title={Debating with more persuasive llms leads to more truthful answers},
  author={Khan, Akbir and Hughes, John and Valentine, Dan and Ruis, Laura and Sachan, Kshitij and Radhakrishnan, Ansh and Grefenstette, Edward and Bowman, Samuel R and Rockt{\"a}schel, Tim and Perez, Ethan},
  journal={arXiv preprint arXiv:2402.06782},
  year={2024}
}

@article{karan2025reasoning,
  title={Reasoning with Sampling: Your Base Model is Smarter Than You Think},
  author={Karan, Aayush and Du, Yilun},
  journal={arXiv preprint arXiv:2510.14901},
  year={2025}
}

@article{brown2025avoiding,
  title={Avoiding Obfuscation with Prover-Estimator Debate},
  author={Brown-Cohen, Jonah and Irving, Geoffrey and Piliouras, Georgios},
  journal={arXiv preprint arXiv:2506.13609},
  year={2025}
}

@article{bowman2022measuring,
  title={Measuring progress on scalable oversight for large language models},
  author={Bowman, Samuel R and Hyun, Jeeyoon and Perez, Ethan and Chen, Edwin and Pettit, Craig and Heiner, Scott and Luko{\v{s}}i{\=u}t{\.e}, Kamil{\.e} and Askell, Amanda and Jones, Andy and Chen, Anna and others},
  journal={arXiv preprint arXiv:2211.03540},
  year={2022}
}

@article{leike2018scalable,
  title={Scalable agent alignment via reward modeling: a research direction},
  author={Leike, Jan and Krueger, David and Everitt, Tom and Martic, Miljan and Maini, Vishal and Legg, Shane},
  journal={arXiv preprint arXiv:1811.07871},
  year={2018}
}

@article{brown2024scalable,
  title={Scalable AI safety via doubly-efficient debate},
  author={Brown-Cohen, Jonah and Irving, Geoffrey and Piliouras, Georgios},
  journal={International Conference on Machine Learning (ICML 2024)},
  year={2024}
}

@article{kenton2024scalable,
  title={On scalable oversight with weak llms judging strong llms},
  author={Kenton, Zachary and Siegel, Noah and Kram{\'a}r, J{\'a}nos and Brown-Cohen, Jonah and Albanie, Samuel and Bulian, Jannis and Agarwal, Rishabh and Lindner, David and Tang, Yunhao and Goodman, Noah and others},
  journal={Advances in Neural Information Processing Systems},
  volume={37},
  pages={75229--75276},
  year={2024}
}

@inproceedings{chiang2024chatbot,
  title={Chatbot arena: An open platform for evaluating llms by human preference},
  author={Chiang, Wei-Lin and Zheng, Lianmin and Sheng, Ying and Angelopoulos, Anastasios Nikolas and Li, Tianle and Li, Dacheng and Zhu, Banghua and Zhang, Hao and Jordan, Michael and Gonzalez, Joseph E and others},
  booktitle={Forty-first International Conference on Machine Learning},
  year={2024}
}

@article{lampinen2025generalization,
  title={On the generalization of language models from in-context learning and finetuning: a controlled study},
  author={Lampinen, Andrew K and Chaudhry, Arslan and Chan, Stephanie CY and Wild, Cody and Wan, Diane and Ku, Alex and Bornschein, J{\"o}rg and Pascanu, Razvan and Shanahan, Murray and McClelland, James L},
  journal={arXiv preprint arXiv:2505.00661},
  year={2025}
}

@article{vapnik1999overview,
  title={An overview of statistical learning theory},
  author={Vapnik, Vladimir N},
  journal={IEEE transactions on neural networks},
  volume={10},
  number={5},
  pages={988--999},
  year={1999},
  publisher={IEEE}
}

@incollection{solomonoff2009algorithmic,
  title={Algorithmic probability: Theory and applications},
  author={Solomonoff, Ray J},
  booktitle={Information theory and statistical learning},
  pages={1--23},
  year={2009},
  publisher={Springer}
}

@article{wu2021recursively,
  title={Recursively summarizing books with human feedback},
  author={Wu, Jeff and Ouyang, Long and Ziegler, Daniel M and Stiennon, Nisan and Lowe, Ryan and Leike, Jan and Christiano, Paul},
  journal={arXiv preprint arXiv:2109.10862},
  year={2021}
}

@article{holmgren1989wide,
  title={The wide and narrow of reflective equilibrium},
  author={Holmgren, Margaret},
  journal={Canadian Journal of Philosophy},
  volume={19},
  number={1},
  pages={43--60},
  year={1989},
  publisher={Cambridge University Press}
}

@book{depaul2006balance,
  title={Balance and refinement: Beyond coherence methods of moral inquiry},
  author={DePaul, Michael R},
  year={2006},
  publisher={Routledge}
}

@book{daniels1996justice,
  title={Justice and justification: Reflective equilibrium in theory and practice},
  author={Daniels, Norman},
  year={1996},
  publisher={Cambridge University Press}
}

@article{haslett1987wrong,
  title={What is wrong with reflective equilibria?},
  author={Haslett, DW},
  journal={The Philosophical Quarterly (1950-)},
  volume={37},
  number={148},
  pages={305--311},
  year={1987},
  publisher={JSTOR}
}

@article{lohse2023overlapping,
  title={Overlapping consensus in pluralist societies: simulating Rawlsian full reflective equilibrium},
  author={Lohse, Richard},
  journal={Synthese},
  volume={203},
  number={1},
  pages={11},
  year={2023},
  publisher={Springer}
}

@article{baumgaertner2024precedent,
  title={Precedent and rest stop convergence in reflective equilibrium},
  author={Baumgaertner, Bert and Lassiter, Charles},
  journal={Synthese},
  volume={203},
  number={3},
  pages={88},
  year={2024},
  publisher={Springer}
}

@article{tersman2024seeking,
  title={Seeking a reflective equilibrium in the face of disagreement},
  author={Tersman, Folke},
  journal={Synthese},
  volume={204},
  number={3},
  pages={86},
  year={2024},
  publisher={Springer}
}

@article{beisbart2021making,
  title={Making Reflective Equlibrium Precise: A Formal Model},
  author={Beisbart, Claus},
  journal={Ergo an Open Access Journal of Philosophy},
  year={2021}
}

@article{dellsen2024probabilifying,
  title={Probabilifying reflective equilibrium},
  author={Dells{\'e}n, Finnur},
  journal={Synthese},
  volume={203},
  number={2},
  pages={45},
  year={2024},
  publisher={Springer}
}

@book{rawls2005political,
  title={Political Liberalism},
  author={Rawls, John},
  year={2005},
  publisher={Columbia University Press}
}

@article{li2022composing,
  title={Composing ensembles of pre-trained models via iterative consensus},
  author={Li, Shuang and Du, Yilun and Tenenbaum, Joshua B and Torralba, Antonio and Mordatch, Igor},
  journal={arXiv preprint arXiv:2210.11522},
  year={2022}
}

@inproceedings{du2023improving,
  title={Improving factuality and reasoning in language models through multiagent debate},
  author={Du, Yilun and Li, Shuang and Torralba, Antonio and Tenenbaum, Joshua B and Mordatch, Igor},
  booktitle={Forty-first International Conference on Machine Learning},
  year={2023}
}

@article{fan2024hardmath,
  title={Hardmath: A benchmark dataset for challenging problems in applied mathematics},
  author={Fan, Jingxuan and Martinson, Sarah and Wang, Erik Y and Hausknecht, Kaylie and Brenner, Jonah and Liu, Danxian and Peng, Nianli and Wang, Corey and Brenner, Michael P},
  journal={arXiv preprint arXiv:2410.09988},
  year={2024}
}

@article{rawls1971theory,
  title={A Theory of Justice.},
  author={Rawls, John},
  journal={The Law of Peoples},
  volume={67},
  year={1971}
}

@misc{callum2025eliciting,
    title={Eliciting base models with simple unsupervised techniques},
    author={Callum Canavan and Allison Qi and Aditya Shrivastava and Tianyi Qiu and Jonathan Michala and Fabien Roger},
    year={2026},
    note = {Forthcoming}
}

@article{ji2025ai,
  title={AI Alignment: A Contemporary Survey},
  author={Ji, Jiaming and Qiu, Tianyi and Chen, Boyuan and Zhou, Jiayi and Zhang, Borong and Hong, Donghai and Lou, Hantao and Wang, Kaile and Duan, Yawen and He, Zhonghao and others},
  year={2025},
  journal={ACM Computing Surveys},
  publisher={ACM New York, NY}
}

@inproceedings{lee2013,
  title={Pseudo-label: The simple and efficient semi-supervised learning method for deep neural networks},
  author={Lee, Dong-Hyun and others},
  booktitle={Workshop on challenges in representation learning, ICML},
  volume={3},
  number={2},
  pages={896},
  year={2013},
  organization={Atlanta}
}

@inproceedings{gao2022,
  title={Scaling laws for reward model overoptimization},
  author={Gao, Leo and Schulman, John and Hilton, Jacob},
  booktitle={International Conference on Machine Learning},
  pages={10835--10866},
  year={2023},
  organization={PMLR}
}

@inproceedings{ben2008does,
  title={Does unlabeled data provably help? Worst-case analysis of the sample complexity of semi-supervised learning},
  author={Ben-David, Shai and Lu, Tyler and P{\'a}l, D{\'a}vid},
  booktitle={Conference on Learning Theory (COLT)},
  pages={33--44},
  year={2008}
}

@article{amini2018pseudo,
  title={Rademacher complexity bounds for a penalized multi-class semi-supervised algorithm},
  author={Maximov, Yury and Amini, Massih-Reza and Harchaoui, Zaid},
  journal={Journal of Artificial Intelligence Research},
  volume={61},
  pages={761--786},
  year={2018}
}

@article{sharma2020neural,
  title={Scaling laws from the data manifold dimension},
  author={Sharma, Utkarsh and Kaplan, Jared},
  journal={Journal of Machine Learning Research},
  volume={23},
  number={9},
  pages={1--34},
  year={2022}
}

@inproceedings{haochen2021provable,
  title={Provable guarantees for self-supervised deep learning with spectral contrastive loss},
  author={HaoChen, Jeff Z and Wei, Colin and Gaidon, Adrien and Ma, Tengyu},
  booktitle={Advances in Neural Information Processing Systems},
  volume={34},
  pages={5000--5011},
  year={2021}
}

@inproceedings{wei2020theoretical,
  title={Theoretical analysis of self-training with deep networks on unlabeled data},
  author={Wei, Colin and Shen, Kendrick and Chen, Yining and Ma, Tengyu},
  booktitle={International Conference on Learning Representations},
  year={2021}
}

@phdthesis{leike2016nonparametric,
  title={Nonparametric general reinforcement learning},
  author={Leike, Jan},
  school={Australian National University},
  year={2016}
}

@article{cai2024semi,
  title={Semi-supervised sparse gaussian classification: Provable benefits of unlabeled data},
  author={Azar, Eyar and Nadler, Boaz},
  journal={Advances in Neural Information Processing Systems},
  volume={37},
  pages={20132--20169},
  year={2024}
}

@inproceedings{jing2022understanding,
  title={Understanding Dimensional Collapse in Contrastive Self-supervised Learning},
  author={Jing, Li and Vincent, Pascal and LeCun, Yann and Tian, Yuandong},
  booktitle={International Conference on Learning Representations},
  year={2022}
}

@inproceedings{he2024preventing,
  title={Preventing Dimensional Collapse in Self-Supervised Learning via Orthogonality Regularization},
  author={He, Junlin and Du, Jinxiao and Ma, Wei},
  booktitle={Advances in Neural Information Processing Systems},
  year={2024}
}

@article{lang2024theoretical,
  title={Theoretical analysis of weak-to-strong generalization},
  author={Lang, Hunter and Sontag, David and Vijayaraghavan, Aravindan},
  journal={Advances in neural information processing systems},
  volume={37},
  pages={46837--46880},
  year={2024}
}

@article{wei2025singular,
  title={Compressibility Measures Complexity: Minimum Description Length Meets Singular Learning Theory},
  author={Urdshals, Einar and Lau, Edmund and Hoogland, Jesse and van Wingerden, Stan and Murfet, Daniel},
  journal={arXiv preprint arXiv:2510.12077},
  year={2025}
}

@misc{llama32_modelcard,
  title={Llama-3.2-1B-Instruct Model Card},
  author={{Meta Llama}},
  year={2024},
  note={\url{https://huggingface.co/meta-llama/Llama-3.2-1B-Instruct}},
}
\bibliographystyle{tmlr}

\newpage
\appendix
\section{Frequently Asked Questions}\label{sec:faq}

In this section, we address potential questions about the overall framework.

\subsubsection*{Does coherence optimization just find any ``coherent-but-false'' peak, e.g., conspiracy theories?}
\textit{Critique: Coherence optimization will find any peak of high internal consistency, even one that is factually wrong. It can't distinguish a ``true'' equilibrium from a ``coherent-but-false'' one.}

No. Coherence optimization does not happen in a vacuum. It is a \emph{regularization technique} that functions on top of supervised signals, as established in the theoretical results (e.g., Proposition \ref{prop:opt-gap-bounds-acc}).

\begin{itemize}
    \item {In Language Models:} Coherence optimization inherits the vast knowledge embedded in the pretrained model. This prior \textit{already} constrains the search space, making most ``coherent-but-false'' peaks (which contradict pretraining data) have a much lower coherence score from the start. Also, this distinguishability scales with reasoning strength; as we train stronger and stronger reasoner models, their ability to reason at length enhances immunity against hard-to-find incoherence that would not be obvious from just looking at the pretraining data.
    \item {In Reinforcement Learning Agents:} The agent receives Bernoulli feedback from the environment, which mixes both subjective coherence and the ``supervised signal'' to ground the coherence search.
\end{itemize}

It remains an open question whether these signals are sufficient to \textit{uniquely} determine the globally optimal reflective equilibrium. However, they immensely narrow the search space. Historically, ideologies that are wrong in hindsight (e.g., the moral defense of slavery) are almost always revealed through their \emph{incoherence against new, incoming observations} (e.g., the philosophical idea of universal human rights). Coherence optimization is the formal process for identifying and resolving exactly this type of inconsistency.

\subsubsection*{Can we customize coherence to optimize for properties other than truthfulness?}
\textit{Critique: The paper focuses on truthfulness, but what if we care about other properties like human-readability or value alignment?}

Yes. The prior policy $\phi$ can be chosen to regularize for different traits. Recall that coherence $\chi_\phi(\pi)$ measures the log-probability of d-policy $\pi$ under prior $\phi$. By choosing $\phi$ appropriately, we can steer the optimization toward different goals:

\begin{itemize}
    \item \textbf{Truthfulness:} Use a pretrained model as $\phi$. The pretrained model has seen vast amounts of factual data, so d-policies that contradict well-established facts will have low coherence.
    \item \textbf{Human legibility:} Use a model that predicts what humans find easy to understand as $\phi$. Since negated coherence equals description length, maximizing coherence under a ``human prior'' means minimizing the description length of the policy from a human perspective. This incentivizes behaviors that are predictable and interpretable to humans.
    \item \textbf{Value alignment:} Use an aligned model (e.g., one trained with RLHF on safety-focused data) as $\phi$. D-policies exhibiting misaligned behaviors, such as deception or sandbagging, will incur high description length under an aligned prior, since such behaviors are hard for an aligned model to predict.
\end{itemize}

In principle, any trainable model can serve as a prior, including human preference models. The role of coherence optimization is to turn such a prior (which evaluates individual outputs) into a policy-level regularizer (which evaluates the joint distribution of outputs across contexts).

\subsubsection*{Why regularize at the policy level instead of per-output?}

\textit{Critique: Standard methods like RLHF already regularize model outputs. What's the advantage of policy-level regularization?}

RLHF and similar methods apply supervision to the \emph{marginal distribution}, rewarding or penalizing individual outputs in isolation. Policy-level regularization, by contrast, considers the \emph{joint distribution} of outputs across all contexts.

This distinction matters because certain failure modes are invisible in marginals but obvious in joints. Figure~\ref{fig:alignment_peaks} illustrates the problem. Given only marginal distributions (the ``shadow'' of each peak), you cannot tell which peak is tallest. But the joint distribution (the full 3D landscape) makes the answer obvious. Consider sandbagging, where a model performs poorly on some tasks but well on others. Each individual output might look reasonable in isolation; the problem only becomes apparent when comparing outputs across contexts. Deception works similarly. A lie in one context might seem plausible until you notice it contradicts what the model said elsewhere.

\begin{figure}[ht]
    \centering
    \includegraphics[width=0.55\textwidth]{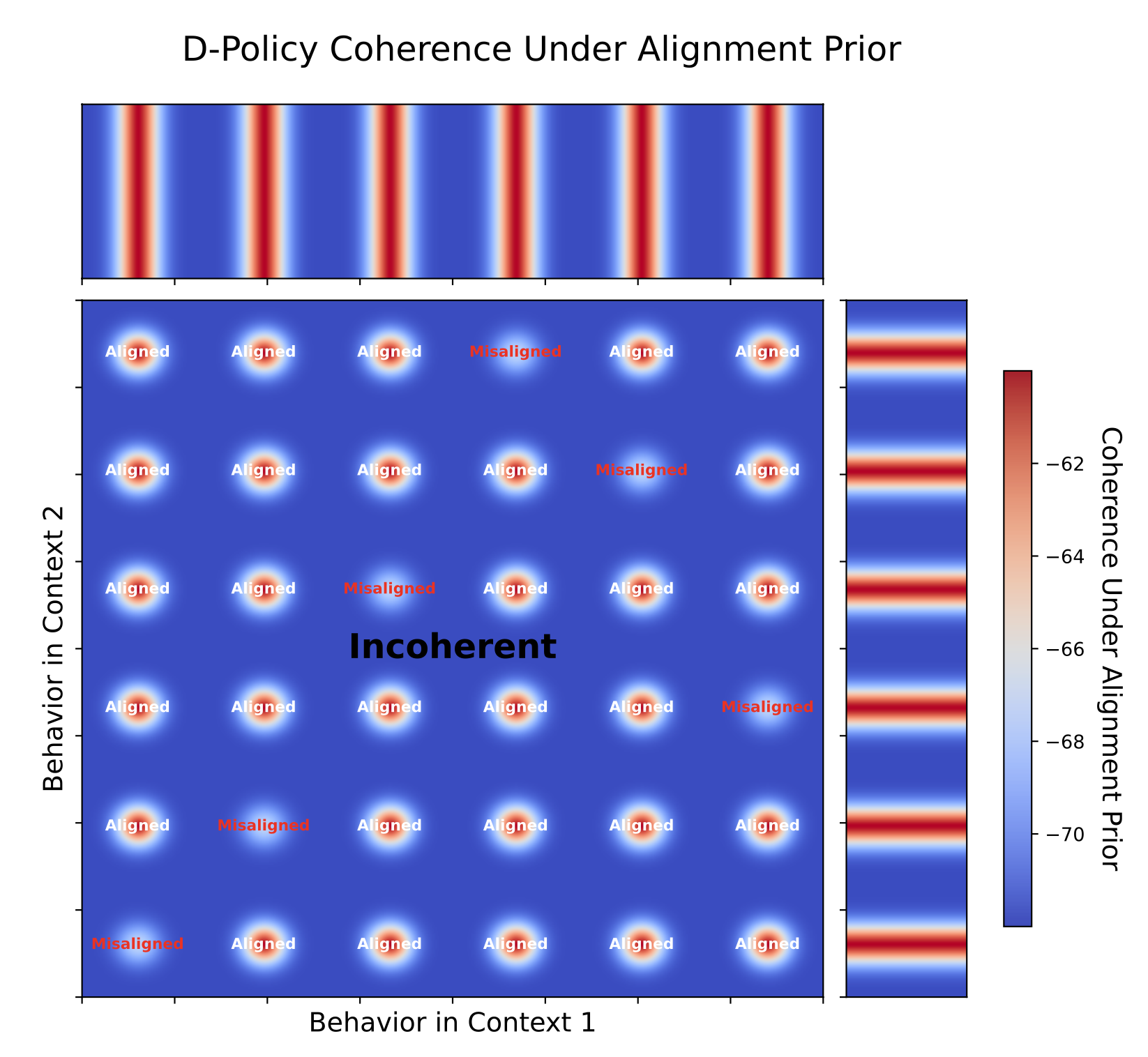}
    \caption{Illustration of d-policy space. Peaks represent coherent d-policies. Given only marginal distributions (the ``shadow'' of each peak), it is impossible to distinguish the tall peak from the short ones. Coherence optimization selects for high joint probability, naturally steering toward the tallest peak.}
    \label{fig:alignment_peaks}
\end{figure}

Coherence optimization naturally catches these failures. A d-policy that sandbags or deceives will have low coherence, because such behaviors are hard to predict from the model's other outputs. The inconsistency manifests as high description length.

\subsubsection*{Does coherence optimization require a fully truthful model to begin with?}

\textit{Critique: If we need a truthful prior to improve truthfulness, don't we already need to solve the problem first?}

Not necessarily. While we do need that the prior policy already encode substantial factual knowledge and be substantially truthful, they are allowed to sometimes produce falsehoods, and coherence optimization corrects such falsehoods and further uplifts their truthfulness. As examples, we may consider the following two practical strategies for coherence optimization.

\begin{enumerate}
    \item \textbf{Interleaved training:} Start from a base pretrained model. Use it as the prior while simultaneously doing coherence optimization and capability training. The pretrained prior constrains the training to stay consistent with the factual knowledge already encoded.
    \item \textbf{Weak-to-strong:} Use a smaller, well-understood model as the prior for training a larger model. The smaller model's factual grounding transfers to the larger model through coherence regularization, even though the smaller model is less capable.
\end{enumerate}

The circularity is broken because coherence optimization amplifies this existing truthful tendency in pretrained models rather than creating it from scratch.

\section{Examples and Proofs for Gibbs Sampling-Based Coherence Optimization}\label{app:gibbs}

This section provides detailed examples and formal proofs for the Gibbs sampling-based coherence optimization methods.

\subsection{The Original Gibbs Sampling Method}

\begin{exbox}
\begin{example}[Gibbs Sampling Over Sauces]
Let us trace one full update step of the algorithm, starting from a plausible but non-optimal d-policy $\pi_0 = (a^\text{burger}_\text{mustard}, a^\text{fries}_\text{ketchup})$. Let the temperature reciprocal $\beta=1$ (i.e., no temperature scaling).

\vspace{0.5em}

\textbf{Sample the new burger behavior, $a^1_0$.} The context for this sample is the \emph{other} component of the current d-policy, which is $\pi_0(s^\text{fries}) = a^\text{fries}_\text{ketchup}$. We sample from the distribution $\sigma(a^\text{fries}_\text{ketchup}, s^\text{burger})$, which in this Bayesian system is the conditional probability distribution $P(s^\text{burger} | a^\text{fries}_\text{ketchup})$. Using the prior table (with $\epsilon=0$): First, find the marginal probability of the context: $P(a^\text{fries}_\text{ketchup}) = P(a^\text{burger}_\text{mayo}, a^\text{fries}_\text{ketchup}) + P(a^\text{burger}_\text{mustard}, a^\text{fries}_\text{ketchup}) + P(a^\text{burger}_\text{other}, a^\text{fries}_\text{ketchup}) = 0 + 0.175 + 0.175 = 0.35$. Next, compute the conditional probabilities for each burger choice: $P(a^\text{burger}_\text{mayo} | a^\text{fries}_\text{ketchup}) = P(a^\text{burger}_\text{mayo}, a^\text{fries}_\text{ketchup})/P(a^\text{fries}_\text{ketchup}) = 0/0.35 = 0$; $P(a^\text{burger}_\text{mustard} | a^\text{fries}_\text{ketchup}) = P(a^\text{burger}_\text{mustard}, a^\text{fries}_\text{ketchup})/P(a^\text{fries}_\text{ketchup}) = 0.175/0.35 = 0.5$; and $P(a^\text{burger}_\text{other} | a^\text{fries}_\text{ketchup}) = P(a^\text{burger}_\text{other}, a^\text{fries}_\text{ketchup})/P(a^\text{fries}_\text{ketchup}) = 0.175/0.35 = 0.5$. We then sample $a^1_0$ from this distribution:\\$\{\text{mayo: } 0, \text{mustard: } 0.5, \text{other: } 0.5\}$. Suppose we sample $a^1_0 = a^\text{burger}_\text{mustard}$.

\vspace{0.5em}

\textbf{Sample the new fries behavior, $a^2_0$.} The context for this sample is $\pi_0(s^\text{burger}) = a^\text{burger}_\text{mustard}$. We sample from the distribution $\sigma(a^\text{burger}_\text{mustard}, s^\text{fries})$, which is $P(s^\text{fries} | a^\text{burger}_\text{mustard})$. The marginal probability of the context is $P(a^\text{burger}_\text{mustard}) = 0 + 0.175 + 0.175 = 0.35$. The conditional probabilities for each fries choice are: $P(a^\text{fries}_\text{mayo} | a^\text{burger}_\text{mustard}) = 0/0.35 = 0$; $P(a^\text{fries}_\text{ketchup} | a^\text{burger}_\text{mustard}) = 0.175/0.35 = 0.5$; and $P(a^\text{fries}_\text{other} | a^\text{burger}_\text{mustard}) = 0.175/0.35 = 0.5$. We then sample $a^2_0$ from $\{\text{mayo: } 0, \text{ketchup: } 0.5, \text{other: } 0.5\}$. We sample $a^2_0 = a^\text{fries}_\text{other}$.

\vspace{0.5em}

\textbf{Construct the new d-policy.} The new d-policy is $\pi_1 = (a^1_0, a^2_0) = (a^\text{burger}_\text{mustard}, a^\text{fries}_\text{other})$. The system has transitioned to a new d-policy, which has the same coherence as the initial one: $\chi(\pi_1) = \log_2(0.175) \approx -2.51$.

The run has not converged yet, but we stop here for illustration. Notice that the most coherent d-policy $\pi_1^*=(a^\text{burger}_\text{mayo}, a^\text{fries}_\text{mayo})$ is an absorbing state. If we ever sample $a^\text{burger}_\text{mayo}$, the next sample for fries will be $a^\text{fries}_\text{mayo}$ with probability 1, as $P(a^\text{fries}_\text{mayo} | a^\text{burger}_\text{mayo})=1$. Symmetrically, if we sample $a^\text{fries}_\text{mayo}$, the next burger choice must be $a^\text{burger}_\text{mayo}$. The sampler cannot escape this state, which has the highest coherence. For an ergodic system (e.g., if $\epsilon > 0$), the sampler would not get stuck but would instead visit this high-coherence state more frequently than others, eventually sampling from the softmax distribution over all d-policies.
\end{example}
\end{exbox}

We see that Gibbs sampling tends to converge upon high-coherence d-policies. We now prove that Bayesian learning systems with nonzero priors satisfy the ergodicity condition (Definition~\ref{def:ergodic}).

\begin{exbox}
\begin{example}[Bayesian Learning Systems With Nonzero Priors Are Ergodic]\label{ex:bayes-ergodic}
Recall that a system is ergodic if for any two d-policies $\pi, \pi'$ that differ in only one context $\dot{s}$, the transition probability $\sigma(\sum_{s\neq \dot s} \pi(s),\dot s)(\pi'(\dot s))$ is greater than zero.

\vspace{0.5em}

In a Bayesian learning system, this transition probability is the conditional probability $P(\pi'(\dot{s}) | \{\pi(s)\}_{s\neq\dot{s}})$. By the definition of conditional probability,
\[
P(\pi'(\dot{s}) | \{\pi(s)\}_{s\neq\dot{s}}) = \frac{P(\pi'(\dot{s}), \{\pi(s)\}_{s\neq\dot{s}})}{P(\{\pi(s)\}_{s\neq\dot{s}})}.
\]
The d-policy $(\pi'(\dot{s}), \{\pi(s)\}_{s\neq\dot{s}})$ is simply $\pi'$. A Bayesian system with a nonzero prior assumes that the joint probability $P(\pi_{joint})$ is strictly positive for \emph{any} full d-policy $\pi_{joint}$.

\vspace{0.5em}

Therefore, the numerator $P(\pi')$ is greater than zero. The denominator $P(\{\pi(s)\}_{s\neq\dot{s}})$ is a marginal probability, calculated by summing the joint probabilities over all possible behaviors in $\dot{s}$:
\[
P(\{\pi(s)\}_{s\neq\dot{s}}) = \sum_{a \in \dot{s}} P(a, \{\pi(s)\}_{s\neq\dot{s}}).
\]
Since every term in this sum is strictly positive by assumption, the denominator is also strictly positive. Thus, the conditional probability is the ratio of two positive numbers, which is positive. This holds for any pair of d-policies differing by a single component, so the system is ergodic.
\end{example}
\end{exbox}

\begin{thmbox}
\begin{theorem}[Gibbs Sampling Recovers Softmax Over Coherence] \label{thm:gibbs_recovers_softmax_appendix}
For any ergodic learning system $(\mathcal M, \mathcal A, \mathcal S, \sigma)$, initial d-policy $\pi_0$, and temperature reciprocal $\beta>0$, when $N\to+\infty$, we have
\[
\pi_{r}\mathop{\to}^{d}\mathrm X^\beta,\quad\text{where }r\sim\mathrm{Unif}(\{0, \cdots,N\})
\]
\end{theorem}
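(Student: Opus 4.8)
The plan is to treat Algorithm~\ref{alg:gibbs_d_policies} as a time-homogeneous Markov chain $(\pi_t)_{t\ge 0}$ on the finite set $\mathcal A^{\mathcal S}$, show that $\mathrm X^\beta$ is its unique stationary distribution, and then conclude by the Cesàro (time-average) form of the Markov chain ergodic theorem — which is precisely the quantity governing $\pi_r$ for $r\sim\mathrm{Unif}(\{0,\dots,N\})$. The one-step transition kernel is the random-scan Gibbs kernel $K=\frac{1}{|\mathcal S|}\sum_{n=1}^{|\mathcal S|}K_n$, where $K_n$ fixes every coordinate $m\ne n$ and redraws $\pi(s_n)$ from $\sigma^\beta\!\left(\sum_{m\ne n}\pi(s_m),s_n\right)$; this is well-defined because each $\sigma(\phi,s_n)$ is by definition a distribution on the finite set $s_n$, and raising masses to the $\beta$-th power and renormalizing again yields a distribution on $s_n$.

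The heart of the argument is to recognize the resampling law in line~6 as the full conditional of $\mathrm X^\beta$. Fix a coordinate $n$ and values $a_m=\pi(s_m)$ for $m\ne n$. By repeatedly applying the chain-rule property of $\sigma$ — equivalently, by the already-established ordering-independence of $\chi$ — we may list the contexts with $s_n$ last, so that $\chi(\pi)=C\!\left(\{a_m\}_{m\ne n}\right)+\log_2\sigma\!\left(\sum_{m\ne n}a_m,s_n\right)(\pi(s_n))$ where $C$ does not depend on $\pi(s_n)$. Hence, as a function of $a\in s_n$, $\mathrm X^\beta\big(\pi(s_n)=a \mid \{a_m\}_{m\ne n}\big)\propto 2^{\beta C}\,\sigma\!\left(\sum_{m\ne n}a_m,s_n\right)(a)^\beta\propto \sigma\!\left(\sum_{m\ne n}a_m,s_n\right)(a)^\beta$, which after normalization over $s_n$ is exactly $\sigma^\beta\!\left(\sum_{m\ne n}a_m,s_n\right)$. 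This is the standard Gibbs situation, so each $K_n$ satisfies detailed balance with respect to $\mathrm X^\beta$: for $\pi,\pi'$ agreeing off coordinate $n$, both $\mathrm X^\beta(\pi)K_n(\pi,\pi')$ and $\mathrm X^\beta(\pi')K_n(\pi',\pi)$ equal a shared normalizer times $\sigma^\beta(\cdot)(\pi(s_n))\cdot\sigma^\beta(\cdot)(\pi'(s_n))$, hence are equal. Detailed balance is preserved under convex combinations, so $K$ is reversible with respect to $\mathrm X^\beta$; in particular $\mathrm X^\beta$ is $K$-stationary.

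For irreducibility, let $C$ be the set of d-policies reachable from $\pi_0$ under $K$. If $C$ were a proper subset of $\mathcal A^{\mathcal S}$, the ergodicity hypothesis (Definition~\ref{def:ergodic}) would furnish $\pi\in C$ and $\pi'\in\mathcal A^{\mathcal S}\setminus C$ differing only at some context $\dot s$ with $\sigma(\sum_{s\ne\dot s}\pi(s),\dot s)(\pi'(\dot s))>0$; then $K_{\dot s}(\pi,\pi')>0$ as well (positivity is preserved by $p\mapsto p^\beta$ and renormalization), so $\pi'$ would be reachable from $\pi_0$ — contradiction. Thus $C=\mathcal A^{\mathcal S}$ and the chain is irreducible, making $\mathrm X^\beta$ its unique stationary distribution. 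For any finite irreducible chain, the Cesàro-averaged marginals converge to the stationary law irrespective of periodicity: $\frac{1}{N+1}\sum_{t=0}^{N}\mathrm{Pr}[\pi_t=\cdot]\to\mathrm X^\beta$ as $N\to+\infty$. Since $\mathrm{Pr}[\pi_r=\cdot]=\frac{1}{N+1}\sum_{t=0}^{N}\mathrm{Pr}[\pi_t=\cdot]$ when $r\sim\mathrm{Unif}(\{0,\dots,N\})$, this is exactly $\pi_r\mathop{\to}^{d}\mathrm X^\beta$, completing the proof.

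I expect the main obstacle to be making the ``move $s_n$ to the end'' step in the second paragraph fully rigorous — one must invoke the chain rule the right number of times and carefully track which $\sigma$-factors do and do not depend on $\pi(s_n)$; once that identification is in hand, reversibility, irreducibility, and Cesàro convergence are all routine finite-state Markov chain facts. It is worth noting explicitly that averaging over $r$ is what allows the theorem to dispense with any aperiodicity assumption: the stronger statement $\pi_N\mathop{\to}^{d}\mathrm X^\beta$ would require it, but the time-averaged statement does not.
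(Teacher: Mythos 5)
Your proof is correct and follows the same overall strategy as the paper's: treat Algorithm~\ref{alg:gibbs_d_policies} as a finite-state Markov chain, establish detailed balance with respect to $\mathrm X^\beta$ by exploiting the ordering-independence of $\chi$ (place the resampled coordinate last), and conclude via uniqueness of the stationary distribution and Cesàro convergence. You do, however, handle two points more carefully than the paper's appendix proof. First, the paper's irreducibility step asserts that ergodicity ``means for any $\pi,\pi'$ differing by one component, the transition probability is positive,'' which misstates Definition~\ref{def:ergodic}: that definition is existential (it only guarantees an escape route from every proper subset), not universal. Your contradiction argument applied to the reachable set uses the definition as actually written, and is therefore valid under a strictly weaker hypothesis; the only small gap is that showing $R(\pi_0)=\mathcal A^{\mathcal S}$ does not by itself give mutual reachability --- you should either invoke the reversibility you've already established (which makes one-step reachability symmetric) or apply the same contradiction to an arbitrary recurrent class (which must be closed, hence cannot be proper). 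Second, you explicitly observe that averaging over $r\sim\mathrm{Unif}(\{0,\dots,N\})$ makes aperiodicity unnecessary, whereas the paper's proof asserts aperiodicity even though the Cesàro form of the ergodic theorem does not require it; your observation is the cleaner justification of the statement as written.
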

\end{thmbox}
\begin{proof}
The Gibbs sampling algorithm defines a time-homogeneous Markov chain on the finite state space of d-policies $\mathcal A^{\mathcal S}$. To show that the distribution of states converges to $\mathrm{X}^\beta$, we need to show that the chain is ergodic and that $\mathrm{X}^\beta$ is its unique stationary distribution.

\textbf{Ergodicity:} The chain is irreducible because the learning system is assumed to be ergodic. By definition, this means for any $\pi, \pi'$ differing by one component, the transition probability is positive. Since any d-policy can be reached from any other d-policy by a sequence of single-component changes, there is a non-zero probability path between any two states in the chain. The chain is also aperiodic; for example, there is a non-zero probability of resampling the same behavior for a component, allowing the chain to stay in the same state. An irreducible and aperiodic Markov chain on a finite state space is ergodic.

\textbf{Stationary Distribution:} We show that $\mathrm{X}^\beta$ satisfies the detailed balance condition, which is a sufficient condition for being a stationary distribution. For any two states $\pi, \pi' \in \mathcal A^{\mathcal S}$, the condition is $\mathrm{X}^\beta(\pi)P(\pi \to \pi') = \mathrm{X}^\beta(\pi')P(\pi' \to \pi)$. It suffices to check this for states $\pi, \pi'$ that can be reached from one another in a single step, i.e., those that differ in at most one component. If $\pi=\pi'$, the condition is trivial. Let $\pi$ and $\pi'$ differ only in a single context, $\dot{s}$. Let $\pi(\dot{s})=a$ and $\pi'(\dot{s})=a'$. Let $\phi = \sum_{s \neq \dot{s}} \pi(s) = \sum_{s \neq \dot{s}} \pi'(s)$. The transition probability $P(\pi \to \pi')$ involves updating the component $\dot{s}$ to $a'$. The probability for this specific update is proportional to $\sigma^\beta(\phi, \dot{s})(a')$. We write the full transition probability, accounting for the multi-step nature of the update: $P(\pi \to \pi') = \frac{1}{|\mathcal{S}|} \frac{\sigma(\phi, \dot{s})(a')^\beta}{Z_\phi}$, where $Z_\phi = \sum_{\tilde{a} \in \dot{s}} \sigma(\phi, \dot{s})(\tilde{a})^\beta$ is the normalization constant.

The key insight is that the coherence $\chi(\pi)$ is independent of the ordering of contexts. We can therefore choose an ordering where $\dot{s}$ is the last element, $s_{|\mathcal{S}|}$. With this ordering, the coherence of $\pi$ is:
        \[
        \chi(\pi) = \left( \sum_{n=1}^{|\mathcal{S}|-1} \log_2 \sigma\left(\sum_{m=1}^{n-1} \pi(s_m), s_n\right)(\pi(s_n)) \right) + \log_2 \sigma(\phi, \dot{s})(a)
        \]
        And for $\pi'$:
        \[
        \chi(\pi') = \left( \sum_{n=1}^{|\mathcal{S}|-1} \log_2 \sigma\left(\sum_{m=1}^{n-1} \pi(s_m), s_n\right)(\pi(s_n)) \right) + \log_2 \sigma(\phi, \dot{s})(a')
        \]
The terms in the parentheses are identical for $\pi$ and $\pi'$. Therefore,
        \[
        \chi(\pi) - \chi(\pi') = \log_2 \sigma(\phi, \dot{s})(a) - \log_2 \sigma(\phi, \dot{s})(a')
        \]
Taking the exponential and raising to the power of $\beta$:
        \[
        \frac{2^{\beta\chi(\pi)}}{2^{\beta\chi(\pi')}} = \frac{\sigma(\phi, \dot{s})(a)^\beta}{\sigma(\phi, \dot{s})(a')^\beta}
        \]
Now, let's check the detailed balance condition using $\mathrm{X}^\beta(\pi) \propto 2^{\beta\chi(\pi)}$:
        \begin{align*}
        \mathrm{X}^\beta(\pi) P(\pi \to \pi') &\propto 2^{\beta\chi(\pi)} \cdot \sigma^\beta(\phi, \dot{s})(a') = 2^{\beta\chi(\pi)} \cdot \frac{\sigma(\phi, \dot{s})(a')^\beta}{Z_\phi} \\
        \mathrm{X}^\beta(\pi') P(\pi' \to \pi) &\propto 2^{\beta\chi(\pi')} \cdot \sigma^\beta(\phi, \dot{s})(a) = 2^{\beta\chi(\pi')} \cdot \frac{\sigma(\phi, \dot{s})(a)^\beta}{Z_\phi}
        \end{align*}
The condition holds if $2^{\beta\chi(\pi)}\sigma(\phi, \dot{s})(a')^\beta = 2^{\beta\chi(\pi')}\sigma(\phi, \dot{s})(a)^\beta$, which is equivalent to $\frac{2^{\beta\chi(\pi)}}{2^{\beta\chi(\pi')}} = \frac{\sigma(\phi, \dot{s})(a)^\beta}{\sigma(\phi, \dot{s})(a')^\beta}$. We have just shown this identity to be true.

Thus, $\mathrm{X}^\beta$ is the stationary distribution. Since the chain is ergodic, it has a unique stationary distribution, so the time-averaged distribution of samples $\pi_r$ converges to $\mathrm{X}^\beta$.
\end{proof}

By running Gibbs sampling with large $\beta$, we are now able to find high-coherence d-policies.

\subsection{The Training-Friendly Variant}\label{app:training-friendly}

In practice, Gibbs sampling is easily applicable on in-context learning systems and Bayesian inference systems. However, each of its $N\cdot |\mathcal S|$ sampling steps samples from a different policy (obtained from leave-one-out datasets), and in a finetuning learning system, it would be a computational nightmare to do this many training runs.

If one is fine with approximations, then they could train one full policy $\sum_{n=1}^{|\mathcal S|} \pi_t(s_n)$ in each round, and before every sampling attempt, perform \textbf{unlearning} to erase one sample from such a policy.

Or, if the policy at hand can be easily merged (e.g. are ensemble models), then one may precompute prefix and suffix ``sum policies'', and merge one prefix and one suffix to obtain a leave-one-out policy. If storage is a concern, one could further apply square root decomposition.

However, the two workarounds are rather complicated and may not be applicable in many cases. Below is a more general method.

\begin{algorithm}[H]
\caption{Training-Friendly Variant of Gibbs Sampling}
\label{alg:training_friendly_gibbs}
\begin{algorithmic}[1]
\State \textbf{Input:} Learning system $(\mathcal M, \mathcal A, \mathcal S, \sigma)$; initial d-policy $\pi_0$; number of steps $N$; temperature reciprocal $\beta$; \textbf{training split portion $\gamma\in(0,1)$}.
\State \textbf{Output:} List of d-policies $\pi_{0}, \pi_1, \cdots, \pi_{N}$.
\Procedure{TrainingFriendlyGibbs}{}
\For{$t=0, 1, \ldots, N-1$}
    \State Independently and equiprobably select a $\lfloor \gamma |\mathcal S|\rfloor$-sized subset of $\mathcal S$, denoted with $\hat {\mathcal S}_t$.
    \State Let $\phi_t=\sum_{s\in\hat{\mathcal S}_t} \pi_t(s)$. \textbf{(Training step)}
    \State Independently sample $a^n_t\sim \sigma^\beta\left(\phi_t,s_n\right)$, for $s_n\notin \hat{\mathcal S}_t$. \textbf{(Sampling step)}
    \State Set $\pi_{t+1}$ such that $\pi_{t+1}(s_n)=a^n_t$ if $s_n \notin \hat{\mathcal S}_t$ and $\pi_{t+1}(s_n)=\pi_t(s_n)$ if $s_n \in \hat{\mathcal S}_t$.
\EndFor
\EndProcedure
\end{algorithmic}
\end{algorithm}

This algorithm is first implemented in an forthcoming, unpublished version of \citet{wen2025}. Note that it needs to be distinguished from what is typically called ``iterative training'', as the training step in each round always starts from zero, rather than from the previous step's trained policy $\phi_{t-1}$. Iterative training usually results in ``model collapse'' \citep{shumailov2024}.

This algorithm approximates Gibbs sampling by replacing the computationally expensive leave-one-out context with a more economical ``leave-a-random-chunk-out'' context. In true Gibbs sampling, to resample the behavior for $s_n$, we use the context $\phi'_n = \sum_{m \neq n} \pi_t(s_m)$. In this training-friendly variant, we sample $a_t^n$ for all $s_n \notin \hat{\mathcal{S}}_t$ using the shared context $\phi_t = \sum_{s \in \hat{\mathcal{S}}_t} \pi_t(s)$. The set $\hat{\mathcal{S}}_t$ is a large, random subset of $\mathcal{S}$ of size $\lfloor\gamma|\mathcal{S}|\rfloor$.

The approximation holds if $\sigma(\phi_t, s_n) \approx \sigma(\phi'_n, s_n)$ for $s_n \notin \hat{\mathcal{S}}_t$. The contexts differ by the elements in $(\mathcal{S} \setminus \{s_n\}) \setminus \hat{\mathcal{S}}_t$. This set difference has size $(|\mathcal{S}| - 1) - \lfloor\gamma|\mathcal{S}|\rfloor \approx (1-\gamma)|\mathcal{S}| - 1$. If the fraction of held-out behaviors $\gamma$ is close to 1, and the total number of contexts $|\mathcal{S}|$ is large, then $\phi_t$ is a very good approximation of $\phi'_n$. The logic is similar to the ICM approximation. In a system with rich, distributed information, the predictive distribution is robust to small perturbations in its context. Removing a small, random fraction of context behaviors is unlikely to drastically change the inference for another behavior.

Therefore, each sampling step in this algorithm is an approximation of a true Gibbs sampling step. While not exact, the process introduces a similar dynamic that pushes the d-policy towards regions of higher coherence, and its stationary distribution should be close to the target $\mathrm{X}^\beta$, especially for $\gamma \to 1$.

Notably, this method can be seen as an extension of the classical (and surprising) training method of \textbf{pseudo-labeling} \citep{lee2013}.

\subsection{Separating Consistent Personas from a Mixture}

We test whether Gibbs sampling induces mode ``snap'' (convergence to a single coherent persona) instead of mode interpolation, mixture, or divergence. This is a lightly cherry-picked setup where the phenomenon is most salient; we tested two other setups whose results are directionally aligned but less consistent.

\paragraph{Setup.} The experiment consists of $16$ questions, including $12$ scientific (e.g., ``How does photosynthesis work?'') and $4$ emotional (e.g., ``I just lost my job\ldots''). We run $2048$ rounds of Gibbs sampling ($2048\times 16=32768$ inferences) with Claude-3.5-Haiku. Figure~\ref{fig:sci_emo}(a) displays embeddings for all generated texts over the rounds; circles for scientific questions, triangles for emotional questions; red for later rounds, blue for earlier rounds. Panels (b)--(d) show cosine similarity trends within scientific questions, within emotional questions, and between the two types.

\paragraph{Results.} The observations are as follows: (i) scientific answers converge towards emotional answers while the latter stay put; (ii) within-type similarity first rises and then stabilizes; (iii) cross-type similarity continually rises. The implication is that the scientific mode is ``snapping to'' the emotional mode. The model does indeed answer scientific questions in emotional style after convergence, producing outputs like ``A sacred invitation to explore the remarkable, infinite creativity of biochemical becoming'' when asked the photosynthesis question. This demonstrates coherence optimization's tendency to converge on consistent personas rather than maintaining or amplifying diversity.

\begin{figure}[tp]
    \centering
    \includegraphics[width=\textwidth]{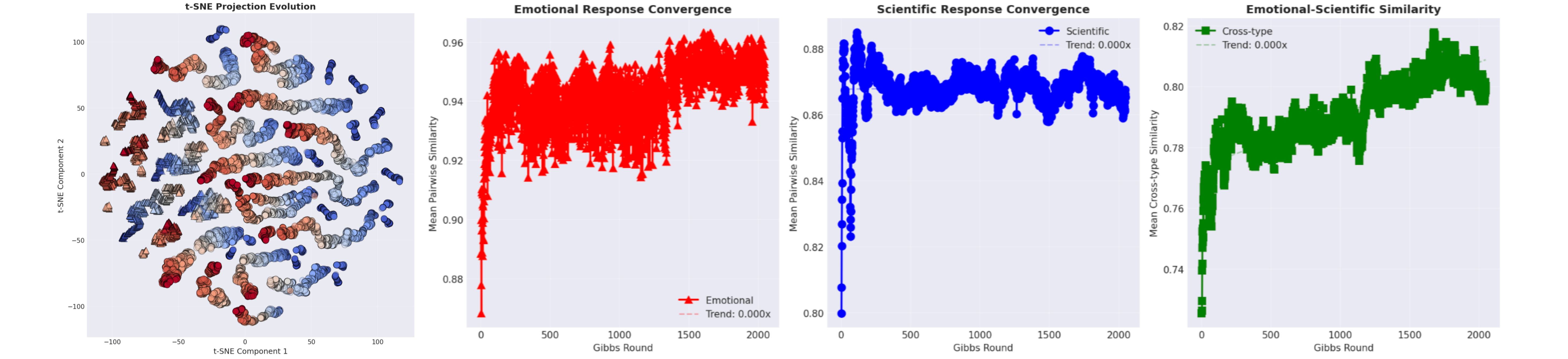}
    \caption{(a) Embeddings of scientific (circles) and emotional (triangles) answers over 2048 rounds of Gibbs sampling. Later rounds are colored red, earlier rounds blue. The model shifts to answering scientific questions in an emotional style. (b)(c)(d) Within-type and cross-type cosine similarity over 2048 rounds.}
    \label{fig:sci_emo}
\end{figure}

\section{Feedback-Free Self-Improvement Methods Reducible to Coherence Optimization}

This section provides detailed arguments and proofs for the results showing that several state-of-the-art feedback-free self-improvement methods are reducible to coherence optimization.

\subsection{Debate}\label{app:debate}

We provide the detailed argument for Proposition~\ref{prop:debate}. Consider a learning system with two contexts, $\mathcal{S} = \{s_\text{pro}, s_\text{con}\}$. $s_\text{pro}$ is the set of possible answers to ``Why is $P$ correct?'' and $s_\text{con}$ the set of answers to ``Why is $P$ incorrect?'', for a proposition $P$.

If we generate arguments independently via greedy decoding, the resulting d-policy $\pi_\text{greedy} = (a_\text{pro}, a_\text{con})$ is likely to be incoherent. The proponent argument $a_\text{pro}$ might make a claim that $a_\text{con}$ trivially refutes, because $a_\text{pro}$ did not anticipate this rebuttal. Generating $a_{\text{con}}$ conditional on $a_{\text{pro}}$ addresses only one direction of this problem.

A more robust conclusion arises from finding the d-policy $\pi^* = (a^*_\text{pro}, a^*_\text{con})$ that maximizes the \emph{joint} coherence $\chi(\pi)$. At zero temperature ($\beta = +\infty$), this is the d-policy found by infinite-beam beam search. Such a d-policy represents a ``reflective equilibrium'' between the two positions, where $a^*_\text{pro}$ anticipates and counters $a^*_\text{con}$, and vice versa.

This optimal d-policy can be found (or, for $\beta < \infty$, sampled from) using an iterative process that is formally identical to debate:

\begin{algorithm}[H]
\caption{Iterative Debate (Gibbs Sampling for $|\mathcal{S}|=2$)}
\label{alg:debate_gibbs}
\begin{algorithmic}[1]
\State \textbf{Input:} Learning system $(\mathcal M, \mathcal A, \mathcal S= \{s_\text{pro}, s_\text{con}\}, \sigma)$; number of turns $N$; temperature reciprocal $\beta$.
\State \textbf{Output:} List of d-policies $\pi_1, \ldots, \pi_N$.
\Procedure{Debate}{}
\State Independently sample $a_\text{pro}^{(0)} \sim \sigma^\beta(0, s_\text{pro})$ \Comment{Proponent makes first arg}
\For{$t=0, 1, \ldots, N-1$}
    \State Independently sample $a^{(t+1)}_\text{con} \sim \sigma^\beta(a_\text{pro}^{(t)}, s_\text{con})$. \Comment{Opponent responds to proponent's last arg}
    \State Independently sample $a^{(t+1)}_\text{pro} \sim \sigma^\beta(a_\text{con}^{(t)}, s_\text{pro})$. \Comment{Proponent responds to opponent's last arg}
    \State Set $\pi_{t+1} = (a^{(t+1)}_\text{pro}, a^{(t+1)}_\text{con})$.
\EndFor
\EndProcedure
\end{algorithmic}
\end{algorithm}

This iterative refinement, where each side updates its argument based on the other's, is the essence of both debate and Algorithm \ref{alg:gibbs_d_policies}. The convergence result (Proposition~\ref{prop:debate}) follows directly from Theorem~\ref{thm:gibbs_recovers_softmax}, noting that Algorithm~\ref{alg:debate_gibbs} is a synchronous variant where each party responds to the other's latest message rather than the second-to-latest; the proof of Theorem~\ref{thm:gibbs_recovers_softmax} is agnostic to this difference.

This debate setup, which optimizes coherence over a pair of conflicting contexts, illustrates a more general principle. Ideally, we would optimize coherence over all contexts $\mathcal S$ simultaneously (all contexts in the world, not merely two), especially with a strong reasoning model as the prior policy. The goal would be to find a maximally consistent system of beliefs and behaviors, a form of reflective equilibrium. This is exactly the objective of Gibbs sampling.

\subsection{Simple Bootstrap Near $\beta=1$} \label{sec:simple_bootstrap}

\begin{algorithm}[H]
\caption{Simple Bootstrap \citep{callum2025eliciting}}
\label{alg:simple_bootstrap}
\begin{algorithmic}[1]
\State \textbf{Input:} Learning system $(\mathcal M, \mathcal A, \mathcal S, \sigma)$; sequence of contexts $\{s_n\}_{n=1}^N$; temperature reciprocal $\beta$.
\State \textbf{Output:} Mapping $f: s_n\mapsto a_n\in s_n$.
\Procedure{SimpleBootstrap}{}
\For{$n=1, \ldots, N$}
    \State Sample $a_n\sim \sigma^\beta\left(\sum_{m<n} a_m,s_n\right)$.
\EndFor
\EndProcedure
\end{algorithmic}
\end{algorithm}

We provide the proof of Proposition~\ref{prop:simple-bootstrap-main}. The proof relies on the following technical assumption.

\begin{assumption}[Context Independence on Random Subsets]
A learning system exhibits quantitative context independence if for every $s \in \mathcal{S}$, there exists a limiting distribution $\rho_s$ and constants $C, K > 0, \alpha > 1$ such that for a randomly chosen sequence $\left(s_1, \dots, s_{n-1}\right)$ where $n \ge K$:
\[
\mathbb{E}_{s_1, \dots, s_{n-1}} \left[ D_{\mathrm{TV}}\left(\sigma\left(\sum_{m=1}^{n-1}a_m, s_n\right), \rho_{s_n}\right) \right] \le \frac{C}{\left(n-1\right)^\alpha}
\]
for any path $\left(a_1, \dots, a_{n-1}\right)$ and any $s_n \notin \left\{s_1, \dots, s_{n-1}\right\}$.
\end{assumption}

\begin{proof}[Proof of Proposition~\ref{prop:simple-bootstrap-main}]
We first define the two distributions to be compared. The probability of generating a d-policy $\pi=\left(a_1, \dots, a_N\right)$ via the sequential Simple Bootstrap process is:
    \[
    P_{\mathrm{SB}}\left(\pi; \beta\right) = \prod_{n=1}^{N} \sigma^\beta\left(\sum_{m<n}a_m, s_n\right)\left(a_n\right) = \frac{\prod_{n=1}^N \sigma\left(\sum_{m<n}a_m, s_n\right)\left(a_n\right)^\beta}{\prod_{n=1}^N Z_n\left(\sum_{m<n}a_m\right)}
    \]
    where $Z_n\left(\phi\right) = \sum_{a' \in s_n} \sigma\left(\phi, s_n\right)\left(a'\right)^\beta$ is the local normalization constant at step $n$.

The target softmax-over-coherence distribution is $\mathrm{X}^\beta\left(\pi\right) = \frac{1}{Z} 2^{\beta \chi\left(\pi\right)}$. Using the definition $\chi\left(\pi\right) = \sum_{n=1}^N \log_2 \sigma\left(\sum_{m<n}a_m, s_n\right)\left(a_n\right)$, this becomes:
    \[
    \mathrm{X}^\beta\left(\pi\right) = \frac{1}{Z} \prod_{n=1}^N \sigma\left(\sum_{m<n}a_m, s_n\right)\left(a_n\right)^\beta
    \]

By comparing these expressions, we find that $P_{\mathrm{SB}}\left(\pi; \beta\right)$ is a reweighting of $\mathrm{X}^\beta\left(\pi\right)$:
    \[
    P_{\mathrm{SB}}\left(\pi; \beta\right) = \mathrm{X}^\beta\left(\pi\right) \cdot w\left(\pi\right), \quad \text{where} \quad w\left(\pi\right) = \frac{Z}{V\left(\pi\right)} \quad \text{and} \quad V\left(\pi\right) = \prod_{n=1}^N Z_n\left(\sum_{m<n}a_m\right)
    \]
    Since $\sum_\pi P_{\mathrm{SB}}\left(\pi\right)=1$, taking the expectation over $\mathrm{X}^\beta$ shows that $\mathbb{E}_{\mathrm{X}^\beta}\left[w\left(\pi\right)\right]=1$. The total variation distance is given by:
    \[
    D_{\mathrm{TV}}\left(P_{\mathrm{SB}}, \mathrm{X}^\beta\right) = \frac{1}{2} \mathbb{E}_{\mathrm{X}^\beta}\left[\left|w\left(\pi\right) - 1\right|\right]
    \]
    By Jensen's inequality, this is bounded by the standard deviation of the reweighting factor:
    \[
    D_{\mathrm{TV}}^2 \le \frac{1}{4} \mathbb{E}_{\mathrm{X}^\beta}\left[\left(w\left(\pi\right) - 1\right)^2\right] = \frac{1}{4}\mathrm{Var}_{\mathrm{X}^\beta}\left(w\left(\pi\right)\right)
    \]
    For a random variable $W$ concentrated around its mean, its relative variance is well-approximated by the variance of its logarithm: $\mathrm{Var}\left(W\right)/\mathbb{E}\left[W\right]^2 \approx \mathrm{Var}\left(\ln W\right)$. Applying this, we find $\mathrm{Var}\left(w\left(\pi\right)\right) \approx \mathrm{Var}\left(\ln V\left(\pi\right)\right)$. Our goal is to bound the variance of $\ln V\left(\pi\right)$.

\textbf{Case 1: $\beta=1$.} In this case, $Z_n\left(\phi\right) = \sum_{a' \in s_n} \sigma\left(\phi, s_n\right)\left(a'\right) = 1$ for all $n$ and $\phi$. Thus, $V\left(\pi\right) = 1$ for all $\pi$. The relation becomes $P_{\mathrm{SB}}\left(\pi; 1\right) = \mathrm{X}^1\left(\pi\right) \cdot Z$. Since both distributions must sum to 1, we must have $Z=1$. Therefore, for $\beta=1$, the distributions are identical, i.e., $P_{\mathrm{SB}}\left(\pi; 1\right) = \mathrm{X}^1\left(\pi\right)$.

\textbf{Case 2: $\beta$ near 1.} Let $\beta = 1+\epsilon$ for small $\epsilon$. A first-order expansion of $\ln Z_n\left(\phi\right)$ yields:
    \begin{align*}
    \ln Z_n\left(\phi\right) &= \ln \sum_{a'} \sigma\left(\phi, s_n\right)\left(a'\right)^{1+\epsilon} \\
    &= \ln \sum_{a'} \sigma\left(\phi, s_n\right)\left(a'\right) \left(1 + \epsilon \ln \sigma\left(\phi, s_n\right)\left(a'\right) + O\left(\epsilon^2\right)\right) \\
    &= \ln \left(1 + \epsilon \sum_{a'} \sigma\left(\phi, s_n\right)\left(a'\right)\ln \sigma\left(\phi, s_n\right)\left(a'\right) + O\left(\epsilon^2\right)\right) \\
    &= -\epsilon H\left(\sigma\left(\phi, s_n\right)\right) + O\left(\epsilon^2\right)
    \end{align*}
    where $H\left(\cdot\right)$ is the Shannon entropy. The logarithm of the reweighting factor is therefore $\ln V\left(\pi\right) \approx -\epsilon S_H\left(\pi\right)$, where $S_H\left(\pi\right) = \sum_{n=1}^N H\left(\sigma\left(\sum_{m<n}a_m, s_n\right)\right)$. The variance is $\mathrm{Var}\left(\ln V\left(\pi\right)\right) \approx \epsilon^2 \mathrm{Var}\left(S_H\left(\pi\right)\right)$. We now show that $\mathrm{Var}\left(S_H\left(\pi\right)\right)$ is bounded by a constant independent of $N$.

We split the sum into a ``burn-in'' period ($n \le K$) and a ``stable'' period ($n > K$).

In the \textbf{Stable Period ($n>K$)}, the variance of this part of the sum has two components. First, we analyze the sum of the limit entropies, $\sum_{n=K+1}^N H\left(\rho_{s_n}\right)$. Since the sequence is a random permutation, this is a sum of $N-K$ values drawn without replacement from the finite population $\left\{H\left(\rho_s\right)\right\}_{s \in \mathcal{S}}$. Its variance is $\left(N-K\right) \sigma_H^2 \frac{N-\left(N-K\right)}{N-1} = \frac{K\left(N-K\right)}{N-1}\sigma_H^2$, where $\sigma_H^2$ is the population variance of the limit entropies. This variance is $O\left(K\right)$ and does not grow with $N$. Second, we analyze the sum of the deviations, $\sum_{n=K+1}^N \delta_n$, where $\delta_n = H\left(\sigma_n\right) - H\left(\rho_{s_n}\right)$. The variance of this sum is bounded by the sum of the individual variances (assuming weak correlation). The variance $\mathrm{Var}\left(\delta_n\right)$ is bounded by $\mathbb{E}\left[\delta_n^2\right]$, which by our assumption decays at least as fast as $O\left(n^{-\alpha}\right)$. Since $\alpha > 1$, the sum $\sum_{n=K+1}^N \mathrm{Var}\left(\delta_n\right)$ converges to a constant as $N \to \infty$. The rate of convergence is determined by the tail of the series, which is $\sum_{n=N+1}^\infty n^{-\alpha} = O\left(N^{1-\alpha}\right)$. The total variance of the stable period sum is thus bounded by a constant independent of $N$.

In the \textbf{Burn-in Period ($n \le K$)}, the first $K$ terms, $\sum_{n=1}^K H\left(\sigma\left(\sum_{m<n}a_m, s_n\right)\right)$, depend on the specific initial path and the specific initial $K$ contexts. As entropy is a bounded function, the variance of this finite sum is bounded by a constant $\Delta_K^2$ that depends on $K$ but not on $N$.

The total variance of $S_H\left(\pi\right)$ is therefore bounded by a constant $\Delta^2$ that is asymptotically independent of $N$. Combining our results:
    \[
    D_{\mathrm{TV}} \le \frac{1}{2}\sqrt{\mathrm{Var}\left(w\left(\pi\right)\right)} \approx \frac{1}{2}\sqrt{\mathrm{Var}\left(\ln V\left(\pi\right)\right)} \approx \frac{1}{2}\sqrt{\epsilon^2\mathrm{Var}\left(S_H\left(\pi\right)\right)} \le \frac{1}{2}\left|\epsilon\right|\Delta
    \]
Since $\epsilon = \beta-1$ and $\Delta$ is a constant for a given system, we have shown that $D_{\mathrm{TV}}\left(P_{\mathrm{SB}}, \mathrm{X}^\beta\right) = O\left(\left|\beta-1\right|\right)$.
\end{proof}

Note that the $O(|\beta-1|)$ asymptotics limit the usefulness of the method. The most effective use of Gibbs sampling is when $\beta \gg 1$ and the converged-upon distribution is concentrated on the highest-coherence d-policies, which would not be well-approximated by simple bootstrap.

\subsection{Internal Coherence Maximization}\label{app:icm}

We provide further analysis of the relationship between coherence and mutual predictability (Equations~\ref{eq:coherence-main} and~\ref{eq:mp-main}).

\begin{algorithm}[H]
\caption{Internal Coherence Maximization \citep{wen2025}}
\label{alg:icm}
\begin{algorithmic}[1]
\State \textbf{Input:} Learning system $(\mathcal M, \mathcal A, \mathcal S, \sigma)$.
\State \textbf{Output:} d-policy $\pi^*$.
\Procedure{ICM}{}
\State For d-policy $\pi$, define its \emph{mutual predictability} $f_\text{MP}(\pi)\coloneqq\sum_{n=1}^{|\mathcal S|}\log_2\sigma\left(\sum_{m\neq n}\pi(s_m),s_n\right)(\pi(s_n))$.
\State Using discrete optimization methods, find $\pi^*=\mathop{\arg\max}_\pi f_\text{MP}(\pi)$.
\EndProcedure
\end{algorithmic}
\end{algorithm}

The two objectives are approximately aligned under conditions where the learning system exhibits a form of informational saturation. The difference between the two objectives is:
\[
f_\text{MP}(\pi) - \chi(\pi) = \sum_{n=1}^{|\mathcal S|} \left[ \log_2 \sigma\left(\sum_{m<n}\pi(s_m)+\sum_{m>n}\pi(s_m), s_n\right) - \log_2 \sigma\left(\sum_{m<n}\pi(s_m), s_n\right) \right]
\]
This term measures the total change in log-probability of the behaviors $\pi(s_n)$ when informed by ``future'' behaviors ($\sum_{m>n}\pi(s_m)$) in addition to ``past'' ones.

In many complex systems, particularly those with a large number of interacting components (large $|\mathcal S|$), the context for predicting any single component becomes saturated with information quickly. Once the context $\sum_{m<n}\pi(s_m)$ is sufficiently large and diverse, adding more context in the form of $\sum_{m>n}\pi(s_m)$ provides diminishing returns and does not significantly alter the predictive distribution for $\pi(s_n)$. In such cases, the difference term above is small, and the d-policy $\pi$ that maximizes one objective will be close to the one that maximizes the other.

This approximation is analogous to how pseudo-likelihood (based on local conditional probabilities, similar to $f_\text{MP}$) is used as a computationally cheaper proxy for the true joint likelihood (similar to $\chi$) in fields like statistical physics and graphical models.

\section{Coherence Optimization as Policy-Wide Description Length Regularization}

This section provides detailed proofs for the theoretical results in \S\ref{sec:upper_bound}. We show that maximizing coherence imposes a policy-level regularization that improves out-of-sample generalization. As discussed in Appendix~\ref{sec:faq}, coherence can be further customized to regularize for high legibility, high factuality, or high alignment by appropriate choice of prior policy.

\subsection{Negated Coherence as Description Length}

The concept of coherence has a natural and powerful interpretation, specifically the principle of Minimum Description Length (MDL). The MDL principle states that the best model for a set of data is the one that permits the shortest description of the data.

Consider a d-policy $\pi$ as a dataset, where each $\pi(s_n)$ is a data point. The coherence $\chi(\pi)$ is defined as a sum of sequential log-probabilities:
\[
\chi(\pi) = \log_2 \sigma(0, s_1)(\pi(s_1)) + \log_2 \sigma(\pi(s_1), s_2)(\pi(s_2)) + \dots
\]
This is precisely the log-probability of observing the sequence of behaviors $(\pi(s_1), \pi(s_2), \dots)$ under the predictive model defined by the learning system. By the chain rule, this is equal to the log of the joint probability, $\log_2 P(\pi(s_1), \dots, \pi(s_{|\mathcal{S}|}))$.

According to Shannon's source coding theorem, the minimal average number of bits required to encode a message from a source is equal to its entropy. The optimal codelength for a specific message $x$ with probability $P(x)$ is $-\log_2 P(x)$ bits. Therefore, the negated coherence is the ideal codelength for the entire d-policy $\pi$:
\[
\text{CodeLength}(\pi) = -\log_2 P(\pi) = -\chi(\pi)
\]
Maximizing coherence $\chi(\pi)$ is thus equivalent to minimizing the description length of the d-policy $\pi$. A d-policy is ``coherent'' if its constituent behaviors are highly predictable from one another, allowing for a very compact description. An ``incoherent'' policy consists of surprising or contradictory behaviors, requiring a longer description.

Coherence optimization, therefore, acts as a form of regularization that favors d-policies embodying strong, compressible patterns, and forces the model to discover and adhere to underlying principles rather than memorizing a set of unrelated behaviors.

\subsection{Uniform Convergence Theorem for Coherence}

\begin{defbox}
\begin{definition}[Agreement and Accuracy]
For any two d-policies $\pi_1,\pi_2\in\prod_{s\in\mathcal S}s$ and a subset of contexts $\hat{\mathcal S}\subseteq \mathcal S$, we define the \emph{agreement} between $\pi_1$ and $\pi_2$ on $\hat{\mathcal S}$ as
\[
\alpha_{\hat{\mathcal S}}(\pi_1;\pi_2)=\alpha_{\hat{\mathcal S}}(\pi_2;\pi_1)\coloneqq\frac 1{|\hat{\mathcal S}|}\sum_{s\in\mathcal S} \mathbf 1_{\pi_1(s)=\pi_2(s)}.
\]
In particular, we denote $\alpha(\pi_1;\pi_2)\coloneqq\alpha_{\mathcal S}(\pi_1;\pi_2)$. Given any \emph{ground-truth d-policy} $\pi^*$, we define d-policy $\pi$'s \emph{accuracy} on $\hat{\mathcal S}$ as $\alpha_{\hat{\mathcal S}}(\pi;\pi^*)$.
\end{definition}
\end{defbox}

\begin{thmbox}
\begin{theorem}[Uniform Convergence] \label{thm:uniform_convergence_appendix}
Let $\pi^*$ be any given ground-truth d-policy, and the \emph{training contexts} $\hat s_1,\hat s_2,\cdots,\hat s_N$ be uniformly and independently sampled contexts from $\mathcal S$, for some $N>0$. Let $\pi$ be an arbitrary d-policy with $\alpha_{\textup{train}}(\pi;\pi^*)\coloneqq\alpha_{\{\hat s_i\}_{i=1}^N}(\pi;\pi^*)$, then
\[
\left|\alpha(\pi;\pi^*)-\alpha_\textup{train}(\pi;\pi^*)\right| \leq \sqrt{\frac{-2\chi(\pi)+\log_2 e-\log_2 \delta^{-1}}{2N}}
\]
holds uniformly for all $\pi$ with probability $1-\delta$, for any given $\delta\in (0,1)$.
\end{theorem}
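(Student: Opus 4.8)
The plan is to run a complexity-prior (Occam's-razor / PAC-Bayes–flavored) uniform convergence argument in which the ``prior'' over d-policies is read off directly from coherence. The first step is to observe that coherence induces a probability distribution on the finite d-policy space $\mathcal A^{\mathcal S}$: unrolling the definition gives $2^{\chi(\pi)}=\prod_{n=1}^{|\mathcal S|}\sigma\!\left(\textstyle\sum_{m<n}\pi(s_m),\,s_n\right)\!\big(\pi(s_n)\big)$, and summing over all $\pi$ --- equivalently over tuples $(a_1,\dots,a_{|\mathcal S|})$ with $a_n\in s_n$ --- from the last context inward collapses each factor via $\sum_{a\in s}\sigma(\,\cdot\,,s)(a)=1$, so $\sum_{\pi\in\mathcal A^{\mathcal S}}2^{\chi(\pi)}=1$. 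Since each $2^{\chi(\pi)}\in(0,1]$, it follows that $\mathcal P(\pi)\coloneqq 2^{2\chi(\pi)}$ is a sub-probability measure, $\sum_\pi\mathcal P(\pi)\le\sum_\pi 2^{\chi(\pi)}=1$; this $\mathcal P$ plays the role of the complexity prior, and crucially $\log_2\!\big(1/\mathcal P(\pi)\big)=-2\chi(\pi)$, which is the term appearing in the bound.

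Second, I would bound concentration for a single fixed $\pi$. Because $\hat s_1,\dots,\hat s_N$ are i.i.d.\ uniform on $\mathcal S$ and each indicator $\mathbf 1_{\pi(\hat s_i)=\pi^*(\hat s_i)}$ is Bernoulli with mean exactly $\alpha(\pi;\pi^*)=\Pr_{s\sim\mathrm{Unif}(\mathcal S)}[\pi(s)=\pi^*(s)]$, the empirical accuracy $\alpha_{\textup{train}}(\pi;\pi^*)$ is the average of $N$ bounded i.i.d.\ variables with mean $\alpha(\pi;\pi^*)$. Hoeffding's inequality then controls the bounded exponential moment $\mathbb E_{\hat s}\!\left[2^{\,2N(\alpha(\pi;\pi^*)-\alpha_{\textup{train}}(\pi;\pi^*))^2}\right]\le C$ for an absolute constant $C$; integrating the sub-Gaussian tail (and absorbing the two-sided factor) produces exactly the constant whose $\log_2$ is the $\log_2 e$ term in the statement.

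Third, I would aggregate over $\pi$. Set $\Phi\coloneqq\sum_{\pi}\mathcal P(\pi)\,2^{\,2N(\alpha(\pi;\pi^*)-\alpha_{\textup{train}}(\pi;\pi^*))^2}$. Exchanging expectation and sum (all terms nonnegative) and using the single-$\pi$ bound, $\mathbb E_{\hat s}[\Phi]\le C\sum_\pi\mathcal P(\pi)\le C$, so Markov's inequality gives $\Phi\le C/\delta$ with probability at least $1-\delta$. On that event every summand satisfies $\mathcal P(\pi)\,2^{\,2N(\alpha(\pi;\pi^*)-\alpha_{\textup{train}}(\pi;\pi^*))^2}\le C/\delta$ simultaneously for all $\pi$; taking $\log_2$ and using $\log_2(1/\mathcal P(\pi))=-2\chi(\pi)$ yields $2N\big(\alpha(\pi;\pi^*)-\alpha_{\textup{train}}(\pi;\pi^*)\big)^2\le -2\chi(\pi)+\log_2 C+\log_2(1/\delta)$, which rearranges to the stated uniform bound (with $\log_2 C=\log_2 e$; this is also where one would reconcile the $\delta$-dependent term with the precise form written in the statement).

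The conceptual core --- that the chain rule forces $\{2^{\chi(\pi)}\}_\pi$ to sum to one, so coherence is literally a codelength/complexity prior --- is short once spotted. The main obstacle I anticipate is the constant-chasing in the concentration-plus-aggregation step: pinning down the exact bounded exponential moment of $(\alpha-\alpha_{\textup{train}})^2$, a bounded but not naively sub-Gaussian quantity, so that the additive term is precisely $\log_2 e$ rather than an unspecified $O(1)$. This forces care about whether a one-sided refinement of Hoeffding is used, how the base-$2$ and base-$e$ logarithms interact, and (a minor point) checking that using a sub-probability $\mathcal P$ rather than an exact prior costs nothing, since Markov only needs $\mathbb E[\Phi]\le C$.
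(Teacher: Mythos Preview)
Your proposal is correct and follows the same PAC-Bayes/complexity-prior strategy the paper sketches; the paper's own argument is only a high-level proof sketch that invokes ``a variant of the PAC-Bayes theorem'' with prior $P(\pi)\propto 2^{\chi(\pi)}$ and a point-mass posterior, then appeals to ``the relevant concentration inequality.'' Your version is considerably more explicit---in particular, the observation that $\sum_\pi 2^{\chi(\pi)}=1$ (so the normalizer is trivial) and the sub-probability device $\mathcal P(\pi)=2^{2\chi(\pi)}$ to land directly on the $-2\chi(\pi)$ coefficient are details the paper does not spell out.
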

\end{thmbox}
\begin{proof}[Proof Sketch]
The proof follows the structure of PAC-Bayesian analysis. We define a prior distribution over the space of d-policies $P(\pi) \propto 2^{\chi(\pi)}$. A standard result in learning theory (a variant of the PAC-Bayes theorem) bounds the deviation between the true risk and the empirical risk. For any specific d-policy $\pi$, we can consider a ``posterior'' distribution that places all its mass on $\pi$. The KL divergence between this posterior and the prior is then $-\log_2 P(\pi) \propto -\chi(\pi)$. Applying the relevant concentration inequality yields a bound of the desired form, where the complexity term penalizing generalization is the negative log-prior, i.e., the negated coherence.
\end{proof}

\begin{thmbox}
\begin{corollary}[Coherence Regularization] \label{cor:coherence_regularization}
Under the conditions of the previous theorem, for any given set $\mathcal R\subseteq \prod_{s\in\mathcal S}s$ of d-policies and
\begin{align}
\hat \pi&\coloneqq\mathop{\arg\max}_{\pi\in \mathcal R}\left\{\alpha_\textup{train}(\pi;\pi^*)- \sqrt{\frac{-2\chi(\pi)+\log_2 e-\log_2 \delta^{-1}}{2N}}\right\}\label{eq:regularized-optimization}
\\ \bar \pi&\coloneqq \mathop{\arg\max}_{\pi\in\mathcal R} \alpha(\pi;\pi^*),
\end{align}
we have
\begin{equation}
\alpha(\hat \pi;\pi^*)\geq \alpha(\bar \pi;\pi^*)-2\sqrt{\frac{2\max(-\chi(\hat \pi),-\chi(\bar \pi))+\log_2 e-\log_2 \delta^{-1}}{2N}}\label{eq:regularization-gap}
\end{equation}
with probability $1-\delta$.
\end{corollary}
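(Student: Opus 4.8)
The plan is to establish this as a textbook oracle inequality built directly on Theorem~\ref{thm:uniform_convergence_appendix}. Write the penalty that appears both in \eqref{eq:regularized-optimization} and in the uniform convergence bound as $R(\pi)\coloneqq\sqrt{\frac{-2\chi(\pi)+\log_2 e-\log_2\delta^{-1}}{2N}}$, so that $\hat\pi=\argmax_{\pi\in\mathcal R}\{\alpha_{\textup{train}}(\pi;\pi^*)-R(\pi)\}$ and $\bar\pi=\argmax_{\pi\in\mathcal R}\alpha(\pi;\pi^*)$ (both argmaxes exist since $\prod_{s\in\mathcal S}s$, hence $\mathcal R$, is finite). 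Let $E$ denote the single event on which $|\alpha(\pi;\pi^*)-\alpha_{\textup{train}}(\pi;\pi^*)|\le R(\pi)$ holds for \emph{every} $\pi\in\prod_{s\in\mathcal S}s$; Theorem~\ref{thm:uniform_convergence_appendix} gives $\Pr[E]\ge 1-\delta$. The reason the uniform form of that theorem is what we need is precisely that, on $E$, the two-sided bound may be instantiated at the \emph{data-dependent} policies $\hat\pi$ and $\bar\pi$ (which are functions of the random training contexts), because it holds for all d-policies simultaneously. Everything below is deterministic conditional on $E$.

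Working on $E$, I would then chain three inequalities. (i) Uniform convergence at $\hat\pi$: $\alpha(\hat\pi;\pi^*)\ge\alpha_{\textup{train}}(\hat\pi;\pi^*)-R(\hat\pi)$. (ii) Optimality of $\hat\pi$ for the penalized objective, together with $\bar\pi\in\mathcal R$: $\alpha_{\textup{train}}(\hat\pi;\pi^*)-R(\hat\pi)\ge\alpha_{\textup{train}}(\bar\pi;\pi^*)-R(\bar\pi)$. (iii) Uniform convergence at $\bar\pi$: $\alpha_{\textup{train}}(\bar\pi;\pi^*)\ge\alpha(\bar\pi;\pi^*)-R(\bar\pi)$. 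Concatenating (i)--(iii) yields
\[
\alpha(\hat\pi;\pi^*)\ \ge\ \alpha(\bar\pi;\pi^*)-2R(\bar\pi)\ \ge\ \alpha(\bar\pi;\pi^*)-2\max\!\big(R(\hat\pi),R(\bar\pi)\big).
\]

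It remains to put $\max(R(\hat\pi),R(\bar\pi))$ in the closed form of \eqref{eq:regularization-gap}. Since $R(\cdot)$ is a monotone increasing function of $-\chi(\cdot)$ --- the remaining terms $\log_2 e-\log_2\delta^{-1}$ and the factor $1/(2N)$ being shared --- and since $\chi\le 0$ keeps the radicands nonnegative in the only non-vacuous regime, we get $\max(R(\hat\pi),R(\bar\pi))=\sqrt{\frac{2\max(-\chi(\hat\pi),-\chi(\bar\pi))+\log_2 e-\log_2\delta^{-1}}{2N}}$. Substituting this into the display reproduces exactly \eqref{eq:regularization-gap}, and since it holds on $E$ with $\Pr[E]\ge 1-\delta$, the corollary follows.

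I do not expect a genuine obstacle here: the argument is the standard estimation--approximation split, and the three chained inequalities are immediate once the notation is fixed. The only points that deserve explicit care are (a) stressing that Theorem~\ref{thm:uniform_convergence_appendix} delivers one high-probability event valid simultaneously over all d-policies --- this is what legitimizes plugging in the data-dependent $\hat\pi$ and $\bar\pi$ --- and (b) the monotonicity step collapsing the two penalties into a single $\max(-\chi(\hat\pi),-\chi(\bar\pi))$ term, which is trivial given $\chi\le 0$ but should be stated so that the degenerate case (negative radicand, hence a vacuous bound) is not silently assumed away.
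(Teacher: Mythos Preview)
Your argument is correct and is exactly the standard oracle-inequality derivation one expects here; the paper itself states the corollary without proof, treating it as an immediate consequence of Theorem~\ref{thm:uniform_convergence_appendix}, and your three-step chain (uniform convergence at $\hat\pi$, optimality of $\hat\pi$ for the penalized objective, uniform convergence at $\bar\pi$) is precisely the routine computation that fills that gap.
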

\end{thmbox}

In other words, by jointly maximizing training accuracy and coherence, we can make sure the resulting policy $\hat \pi$ is not much worse than the optimal policy $\bar \pi$ when tested out-of-sample.

\begin{remark}[Coherence Regularization Matters]
One may be tempted to say ``But LLMs are surely already trained on sufficiently many pretraining samples? Additional regularization matters little.'' This is not the case.

The effective dimensionality of the pretraining corpus is too small compared to the size of model parameters. The trillions of pretraining tokens are limited in effective dimension, because (i) natural language has lots of redundancies, and, more importantly, (ii) most human speeches on the Internet can be classified into a small number of discrete ``personas'' that explain a large portion of the variance in the opinions and styles of those speeches.

Empirically, we see that language models, even reasoning models, are much worse at creatively synthesizing novel ideas (e.g. hypothesizing and proving new theorems) than retrieving known ones from memory, while many humans can learn creative synthesis by consuming orders-of-magnitude less data. This is a sign that models overfit to statements of human knowledge by memorizing all of them, while being less successful at discovering generalizeable reasoning strategies that led to those knowledge in the first place.

Also empirically, we see that language models frequently express mutually contradicting information or opinions in different contexts, likely due to over- and under-representation of different information sources/viewpoints in different parts of the pretraining corpus. Let us zoom out and treat a (context, response persona) pair as a training sample, where each such sample encompasses hundreds of thousands of tokens, rather than (input token sequence, output next token) as a ``sample''. Under such a view, the number of training samples is drastically reduced; we would like the trained policy to generalize out-of-sample (i.e., to integrate information/opinions of all personas in all contexts) instead of overfitting to training samples (i.e., to always respond in the best-represented persona of the current context); and the fact that trained language models currently do the latter suggests that additional regularization is needed.

Coherence regularization is well-placed to address both examples of overfitting above, as (i) memorization by rote and (ii) emulating best-represented persona in every context both incur a multiplicative factor on the description length of the policy, and are thus strongly punished by coherence regularization. This sets coherence regularization apart from methods such as entropy regularization, which penalizes description length of policy behavior \emph{on each input separately}. In contrast, coherence regularization controls the \emph{\textbf{joint} description length of all behaviors across all contexts}.
\end{remark}

\subsection{Good and Bad Priors for Coherence Regularization}

Let us look more closely at Equation~\ref{eq:regularization-gap}. We can obviously replace the regularizer $\chi(\cdot)$ with $\chi_\phi(\cdot)$ for any arbitrary prior policy $\phi$, and the negative term on RHS will change accordingly. Since we want that term to be as close to 0 as possible, this gives us a measure of goodness for a prior policy.

\begin{defbox}
\begin{definition}[Optimality Gap]
Given any ground-truth d-policy $\pi^*$, for any policy $\phi\in \mathcal M$, we define the \emph{optimality gap} of $\phi$ as the prior policy to be
\[
G(\phi;\pi^*)\coloneqq -2\chi_\phi(\pi^*)+\log_2 e>0
\]
\end{definition}
\end{defbox}

\begin{thmbox}
\begin{proposition}[Optimality Gap Lower-Bounds Accuracy]
Under the conditions of the uniform convergence theorem, for any policy $\phi\in \mathcal M$, let
\begin{equation}
\hat \pi\coloneqq\mathop{\arg\max}_{\pi\in \prod_{s\in\mathcal S}s}\left\{\alpha_\textup{train}(\pi;\pi^*)- \sqrt{\frac{-2\chi_\phi(\pi)+\log_2 e-\log_2 \delta^{-1}}{2N}}\right\}\label{eq:def-T-hat-optimality-gap-appendix},
\end{equation}
then, with probability $1-\delta$,
\[
\alpha(\hat \pi;\pi^*)\geq 1-\sqrt{\frac{2G(\phi;\pi^*)-2\log_2\delta^{-1}}{N}}.
\]
\end{proposition}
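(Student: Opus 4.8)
The plan is to obtain the bound as essentially a one-line consequence of the Uniform Convergence Theorem (Theorem~\ref{thm:uniform_convergence}), applied with the prior policy $\phi$ in place of the base policy $0$. First I would note that the PAC-Bayesian argument behind that theorem uses only the fact that $\pi\mapsto 2^{\chi(\pi)}$, once normalized, is a probability distribution on $\mathcal A^{\mathcal S}$; by the chain rule $\pi\mapsto 2^{\chi_\phi(\pi)}$ is normalizable in exactly the same way, so the theorem transfers verbatim with $\chi$ replaced by $\chi_\phi$. Thus, with probability $1-\delta$, uniformly over all $\pi\in\mathcal A^{\mathcal S}$,
\[
\alpha(\pi;\pi^*)\;\geq\;\alpha_{\textup{train}}(\pi;\pi^*)-\mathrm{reg}_\phi(\pi),\qquad \mathrm{reg}_\phi(\pi)\coloneqq\sqrt{\frac{-2\chi_\phi(\pi)+\log_2 e-\log_2\delta^{-1}}{2N}}.
\]

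Working on this high-probability event, I would chain three elementary observations. Instantiating the display at $\pi=\hat\pi$ gives $\alpha(\hat\pi;\pi^*)\geq\alpha_{\textup{train}}(\hat\pi;\pi^*)-\mathrm{reg}_\phi(\hat\pi)$. Since $\hat\pi$ is, by its defining equation, a maximizer of $\alpha_{\textup{train}}(\pi;\pi^*)-\mathrm{reg}_\phi(\pi)$ over the whole feasible set $\prod_{s\in\mathcal S}s$, and $\pi^*$ lies in that set, the value at $\hat\pi$ is no smaller than the value at $\pi^*$: $\alpha_{\textup{train}}(\hat\pi;\pi^*)-\mathrm{reg}_\phi(\hat\pi)\geq\alpha_{\textup{train}}(\pi^*;\pi^*)-\mathrm{reg}_\phi(\pi^*)$. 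Finally $\pi^*$ agrees with itself on every context, so $\alpha_{\textup{train}}(\pi^*;\pi^*)=1$. Combining, $\alpha(\hat\pi;\pi^*)\geq 1-\mathrm{reg}_\phi(\pi^*)$, and since by Definition~\ref{def:opt-gap} the radicand of $\mathrm{reg}_\phi(\pi^*)$ is exactly $G(\phi;\pi^*)-\log_2\delta^{-1}$, this reads $\alpha(\hat\pi;\pi^*)\geq 1-\sqrt{(G(\phi;\pi^*)-\log_2\delta^{-1})/(2N)}$.

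The last step is a deliberate weakening into the stated form. Writing $G=G(\phi;\pi^*)$, the identity $\sqrt{(G-\log_2\delta^{-1})/(2N)}=\tfrac12\sqrt{(2G-2\log_2\delta^{-1})/N}$ together with nonnegativity of the quantity gives $\sqrt{(G-\log_2\delta^{-1})/(2N)}\leq\sqrt{(2G-2\log_2\delta^{-1})/N}$, hence $\alpha(\hat\pi;\pi^*)\geq 1-\sqrt{(2G(\phi;\pi^*)-2\log_2\delta^{-1})/N}$, which is the claim. I do not expect a genuine obstacle here; the only points requiring care are (i) verifying that uniform convergence really does hold once an arbitrary prior policy $\phi$ is substituted for $0$ — it does, because $\chi_\phi$ still induces a valid prior over d-policies via the chain rule — and (ii) the bookkeeping when $\delta$ is large enough that $G(\phi;\pi^*)<\log_2\delta^{-1}$. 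In that regime $\mathrm{reg}_\phi(\pi^*)$ imposes no real constraint and the inequality holds with room to spare; this is also precisely the unfavorable regime in which the paper claims no nontrivial guarantee.
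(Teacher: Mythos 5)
Your proof is correct, and it takes a genuinely more direct route than the paper's. The paper's own proof is two-staged: it first uses the arg-max property of $\hat\pi$ (together with $\alpha_{\textup{train}}(\hat\pi;\pi^*)\leq 1$) to establish $-\chi_\phi(\hat\pi)\leq -\chi_\phi(\pi^*)$, and then invokes Corollary~\ref{cor:coherence_regularization} with $\mathcal R=\prod_{s\in\mathcal S}s$ and $\bar\pi=\pi^*$, after which the $\max(-\chi_\phi(\hat\pi),-\chi_\phi(\pi^*))$ in the corollary collapses to $-\chi_\phi(\pi^*)$. The $2\sqrt{\cdot}$ in that corollary arises from applying uniform convergence at \emph{both} $\hat\pi$ and $\bar\pi$; but here $\bar\pi=\pi^*$ and $\alpha_{\textup{train}}(\pi^*;\pi^*)=1$ holds with zero slack, so the second application is wasted. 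Your proof skips the corollary entirely, applies uniform convergence once at $\hat\pi$, chains it with the arg-max inequality against $\pi^*$, and arrives at the tighter intermediate bound $\alpha(\hat\pi;\pi^*)\geq 1-\sqrt{(G(\phi;\pi^*)-\log_2\delta^{-1})/(2N)}$, a factor-of-two improvement that you then deliberately weaken to the stated form. Both routes implicitly rely on replacing $\chi$ by $\chi_\phi$ in the uniform convergence theorem; the paper waves at this in prose (``We can obviously replace\ldots''), and your justification — that $\pi\mapsto 2^{\chi_\phi(\pi)}$ is a normalized prior over d-policies by the chain rule — is the right one and is worth making explicit. Your observation in (ii) about the vacuous regime $G(\phi;\pi^*)<\log_2\delta^{-1}$ applies equally to the paper's version of the bound.
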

\end{thmbox}
\begin{proof}
By Equation~\ref{eq:def-T-hat-optimality-gap-appendix}, we have
    \begin{align*}
    \alpha_\textup{train}(\hat \pi;\pi^*)- \sqrt{\frac{-2\chi_\phi(\hat \pi)+\log_2 e-\log_2 \delta^{-1}}{2N}}
    &\geq
    \alpha_\textup{train}(\pi^*;\pi^*)- \sqrt{\frac{-2\chi_\phi(\pi^*)+\log_2 e-\log_2 \delta^{-1}}{2N}}
    \\ &=
    1- \sqrt{\frac{-2\chi_\phi(\pi^*)+\log_2 e-\log_2 \delta^{-1}}{2N}}
    \end{align*}
Thus,
    \begin{align*}
    \sqrt{\frac{-2\chi_\phi(\hat \pi)+\log_2 e-\log_2 \delta^{-1}}{2N}}&\leq\sqrt{\frac{-2\chi_\phi(\pi^*)+\log_2 e-\log_2 \delta^{-1}}{2N}}
    \\ -\chi_\phi(\hat \pi) &\leq -\chi_\phi(\pi^*)
    \end{align*}
Applying Corollary~\ref{cor:coherence_regularization} with $\mathcal R= \prod_{s\in\mathcal S}s$ completes the proof.
\end{proof}

The propositions tells us that, the goodness of a prior policy is decided by the coherence of the optimal d-policy under such a prior. It's worth noting that \textbf{\emph{any} policy can be a prior policy}, including e.g. a \textbf{(trainable) human preference model}. The job of coherence optimization is to turn any such prior policy (which, at inference time, can only evaluate the likelihood of \emph{individual outputs/behaviors}) into a policy-level regularizer (which evaluates \& regularizes the \emph{joint} likelihood of the policy's behavior across contexts).

\subsection{Proofs for Asymptotic Optimality} \label{sec:joint_optim}

This section provides the proof of the Regularization Optimality Theorem (Theorem~\ref{thm:optimal_regularization}), the Change-of-Prior Lemma (Lemma~\ref{lem:change-of-prior}), and supporting material for \S\ref{sec:average_case}.

\begin{proof}[Optimality of Description-Length Regularization]
    Under description-length regularization, with probability $1-\delta$,
    \begin{align*}
        \alpha(\pi_{\text{SRM}})&\geq \max_{\pi\in{\mathcal A}^{\mathcal S}}
        \alpha(\pi)-2\mathop{\mathrm{reg}}(\pi)\quad\text{ (uniform convergence)}
        \\ &\geq \mathrm E_{\pi\sim \mathcal Q}[\alpha(\pi)-2\mathop{\mathrm{reg}}(\pi)]
        \\ &\geq \mathrm E_{\pi\sim \mathcal Q}[\alpha(\pi)]-\sqrt{\frac{2}{N}}~\mathrm E_{\pi\sim \mathcal Q}\left[\sqrt{\log_2 e/\delta - \log_2 \mathcal P(\pi)}\right].
    \end{align*}
    Using the Taylor expansion, we get
    \begin{align*}
        &\mathrm E_{\pi\sim \mathcal Q}\left[\sqrt{\log_2 e/\delta - \log_2 \mathcal P(\pi)}\right]\\
        &= \mathrm E_{\pi\sim \mathcal Q}\left[\sqrt{\log_2 e/\delta} - \frac{(1+o(1))\log_2 \mathcal P(\pi)}{\sqrt{\log_2 e/\delta}}\right]\\
        &= \sqrt{\log_2 e/\delta} - \frac{(1+o(1))}{\sqrt{\log_2 e/\delta}}~\mathrm E_{\pi\sim \mathcal Q}\left[\log_2 \mathcal P(\pi)\right].
    \end{align*}
    
    Substituting back:
    \begin{align*}
        \alpha(\pi_{\text{SRM}})&\geq \mathrm E_{\pi\sim \mathcal Q}[\alpha(\pi)]-{\sqrt{\frac{2\log_2 e/\delta}{N}}} + \sqrt{\frac{2+o(1)}{N\log_2 e/\delta}}~\mathrm E_{\pi\sim \mathcal Q}\left[\log_2 \mathcal P(\pi)\right].
    \end{align*}
    
    Now, $\mathrm E_{\pi\sim \mathcal Q}[\log_2 \mathcal P(\pi)] = -\mathrm{H}[\mathcal Q] - \mathrm{KL}[\mathcal Q \| \mathcal P]$ by definition of cross-entropy. Simplifying:
    \begin{align*}
        \alpha(\pi_{\text{SRM}})&\geq \mathrm E_{\pi\sim \mathcal Q}[\alpha(\pi)]-{\sqrt{\frac{2\log_2 e/\delta}{N}}} + \sqrt{\frac{2+o(1)}{N\log_2 e/\delta}}~\left(\mathrm{H}\left[{\mathcal Q}\right] - \mathrm{KL}\left[{\mathcal Q} \mid\mid \mathcal P\right]\right).
    \end{align*}
\end{proof}

\begin{proof}[Proof of Change-of-Prior Lemma]
This is a direct consequence of the definitions. We expand the terms. The left-hand side (LHS) is:
\[
\text{LHS} = \sum_{l=1}^{L} \log_2 \sigma\left(\phi+\sum_{k=1}^{l-1} a^t_k,s^t_l\right)(a^t_l) + \sum_{n=1}^{N} \log_2 \sigma\left(r+\sum_{m=1}^{n-1} a^\phi_m,s^\phi_n\right)(a^\phi_n)
\]
Substitute $\phi = r + \sum_{m=1}^N a^\phi_m$ into the first term:
\[
\text{LHS} = \sum_{l=1}^{L} \log_2 \sigma\left(r+\sum_{m=1}^N a^\phi_m+\sum_{k=1}^{l-1} a^t_k,s^t_l\right)(a^t_l) + \sum_{n=1}^{N} \log_2 \sigma\left(r+\sum_{m=1}^{n-1} a^\phi_m,s^\phi_n\right)(a^\phi_n)
\]
Now we expand the right-hand side (RHS), which is the coherence of the concatenated sequence of behaviors relative to the prior $r$. Denote the concatenated sequence as $\{a_i\}_{i=1}^{N+L}$, where $a_i=a^\phi_i$ for $i \le N$ and $a_i=a^t_{i-N}$ for $i > N$.
\begin{align*}
\text{RHS} &= \hat\chi_r[a_1,\cdots,a_{N+L}] = \sum_{i=1}^{N+L} \log_2 \sigma\left(r + \sum_{j=1}^{i-1} a_j, s_i\right)(a_i) \\
&= \sum_{i=1}^{N} \log_2 \sigma\left(r + \sum_{j=1}^{i-1} a^\phi_j, s^\phi_i\right)(a^\phi_i) + \sum_{i=N+1}^{N+L} \log_2 \sigma\left(r + \sum_{j=1}^{i-1} a_j, s_i\right)(a_i)
\end{align*}
The first term of the RHS is exactly the second term of the LHS. Now let's examine the second term of the RHS. For $i=N+l$ (where $l \in [1,L]$), the context sum is $\sum_{j=1}^{i-1} a_j = \sum_{j=1}^{N} a^\phi_j + \sum_{j=N+1}^{N+l-1} a_j = \sum_{j=1}^{N} a^\phi_j + \sum_{k=1}^{l-1} a^t_k$. So the second term of the RHS is:
\[
\sum_{l=1}^{L} \log_2 \sigma\left(r + \sum_{j=1}^{N} a^\phi_j + \sum_{k=1}^{l-1} a^t_k, s^t_l\right)(a^t_l)
\]
This is exactly the first term of the LHS. Therefore, LHS = RHS.
\end{proof}

\begin{exbox}
\begin{example}[Conditional Probabilities in Bayesian Learning System]
In a Bayesian learning system, Theorem~\ref{thm:prior_encodes_samples} becomes a simple identity of log-probabilities. The base coherence $\chi(\pi)$ is the log joint probability of all behaviors in the d-policy $\pi$: $\chi(\pi) = \log_2 P(\pi(\mathcal{S})) = \log_2 P(\pi(\mathcal{S}_a), \pi(\mathcal{S}_b))$. The ``pretrain coherence'' $\hat\chi_0[\pi(\mathcal S_b)]$ is the log marginal probability of the pretraining behaviors: $\hat\chi_0[\pi(\mathcal S_b)] = \log_2 P(\pi(\mathcal{S}_b))$. The ``posttrain coherence w.r.t. pretrain prior'' $\chi^a(\pi(\mathcal S_a))$ corresponds to coherence in a quotient system where the base policy is $0_a = \sum_{s\in\mathcal S_b} \pi(s)$. This is equivalent to starting with a prior conditioned on the pretraining data $\pi(\mathcal{S}_b)$. Thus, this term is the log conditional probability: $\chi^a(\pi(\mathcal S_a)) = \log_2 P(\pi(\mathcal{S}_a) | \pi(\mathcal{S}_b))$.

Plugging these into the first half of Equation~\ref{eq:prior-encodes-samples}:
\[
\underbrace{\log_2 P(\pi(\mathcal{S}_a) | \pi(\mathcal{S}_b))}_{\chi^a(\pi(\mathcal S_a))} + \underbrace{\log_2 P(\pi(\mathcal{S}_b))}_{\hat\chi_0\left[\pi(\mathcal S_b)\right]} = \underbrace{\log_2 P(\pi(\mathcal{S}_a), \pi(\mathcal{S}_b))}_{\chi(\pi)}
\]
This is simply the chain rule of probability, $\log_2 P(A|B) + \log_2 P(B) = \log_2 P(A,B)$. The other half of the equation follows from symmetry, as $P(A,B)=P(B,A)=P(B|A)P(A)$, which corresponds to the right-hand side of Equation~\ref{eq:prior-encodes-samples}.
\end{example}
\end{exbox}

\subsection{Heuristic Argument for the Equivalence Between Regularization and Optimization}\label{sec:heuristic-arg}

We now informally show why the two formulations in Conjecture \ref{conj:average_case} are likely equivalent under appropriate conditions. By Equation~\ref{eq:prior-encodes-samples}, optimizing coherence on the posttrain samples $\pi(\mathcal S_a)$ with respect to a pretrained policy is equivalent to jointly optimizing (i) the coherence of posttrain samples with respect to zero prior and (ii) the accuracy of a posttrain-only policy on pretrain samples. (Since $\pi(\mathcal S_b)$ is fixed, we may ignore the ``pretrain coherence'' term.)

This looks similar to the regularized training objective from Proposition~\ref{prop:opt-gap-bounds-acc}, but there are two gaps to bridge. First, $\hat\chi_0\left[\pi(\mathcal S_a)\right]$ appears linearly in the coherence objective, while it appears under a square root in the regularization term $\sqrt{[-2\chi(\pi)+\log_2 e-\log_2 (1/\delta)]/{2N}}$. Second, $\chi^b(\pi(\mathcal S_b))$ is coherence on pretrain samples, not accuracy $\alpha_\textup{train}(\pi)$.

We address both gaps with heuristic calculations.

Let's first look at the second question. We know that in a naive Bayes model, the log likelihood scales linearly with accuracy times sample size. In supervised training, this is approximately true up to a constant factor. In LLM finetuning pipelines, the cross-entropy loss typically drops by around a factor of 2 before stagnating, so the cumulative loss is approximately linear up to a factor of 2. The same is true of pretraining loss after the initial stage of rapid loss decline. We may thus assume
    \[
    {\Big|}c |\mathcal S_b|(\alpha^b(\pi)-1)-\chi^b(\pi(\mathcal S_b)) {\Big|}\leq\epsilon|\mathcal S_b|
    \]
    for some constants $c>0,d<0$ and small $\epsilon$. This answers the second question. After we multiply the pretrain accuracy by $c|\mathcal S_b|$, the two are approximately equal in size.

Now, the first question. Observe that Proposition~\ref{prop:opt-gap-bounds-acc} does not rely on the accuracy and coherence being calculated on the same context set, and not even that the two being calculated on the same number of contexts. We only require that \emph{the set of contexts we use for specifying the trained d-policy be included in the calculation of coherence.} We can thus switch the $\chi(\pi)$ in the regularization term for $\hat\chi_0\left[\pi(\mathcal S_a)\right]$ (where $\mathcal S_a$ is the ``set of contexts we use for specifying d-policy''), and $N$, on the other hand, for $|\mathcal S_b|$ (the number of samples used for training accuracy calculation). We must also multiply the regularization term by the same factor of $c|\mathcal S_b|$, so we are now comparing $\hat\chi_0\left[\pi(\mathcal S_a)\right]$ and
        \[
        c|\mathcal S_b|\sqrt{\frac{-2\hat\chi_0\left[\pi(\mathcal S_a)\right]+\log_2 e-\log_2(1/\delta)}{2|\mathcal S_b|}}=c\sqrt{|\mathcal S_b|}\cdot \sqrt{-\hat\chi_0\left[\pi(\mathcal S_a)\right]+\frac{\log_2 e-\log_2(1/\delta)}2}
        \]

Now we have an oracle that, when given any $\mathcal S_a,\mathcal S_b,\pi(\mathcal S_b)$, can tell us the optimum for
    \begin{align*}
    &\max_{\pi(\mathcal S_a)}\left(c |\mathcal S_b|\alpha^b(\pi) + \hat\chi_0\left[\pi(\mathcal S_a)\right]\right) \\
    =&\max_{r\in\mathbb{R}}\left(\max_{\pi(\mathcal S_a):\ -\hat\chi_0\left[\pi(\mathcal S_a)\right]/|\mathcal S_a|=r} c|\mathcal S_b|\alpha^b(\pi)-|\mathcal S_a|\cdot r\right) \\
    \coloneqq &\max_{r\in\mathbb{R}} (f(r)-|\mathcal S_a|\cdot r).
    \end{align*}
    while we would like to solve for the regularized accuracy, i.e., roughly
    \begin{align}
    &\max_{\pi(\mathcal S_a)}\left(c |\mathcal S_b|\alpha^b(\pi) - c\sqrt{|\mathcal S_b|}\cdot \sqrt{-\hat\chi_0\left[\pi(\mathcal S_a)\right]+\frac{\log_2 e-\log_2(1/\delta)}2}\right) \notag \\
    =&\max_{r\in\mathbb{R}}\left(\max_{\pi(\mathcal S_a):\ -\hat\chi_0\left[\pi(\mathcal S_a)\right]/|\mathcal S_a|=r} c|\mathcal S_b|\alpha^b(\pi)-c\sqrt{|\mathcal S_b|}\cdot\sqrt{|\mathcal S_a|\cdot r+\frac{\log_2 e-\log_2(1/\delta)}2}\right) \notag \\
    \approx&\max_{r\in\mathbb{R}}\left(\max_{\pi(\mathcal S_a):\ -\hat\chi_0\left[\pi(\mathcal S_a)\right]/|\mathcal S_a|=r} c|\mathcal S_b|\alpha^b(\pi)-\sqrt{|\mathcal S_a|}\cdot c\sqrt{|\mathcal S_b|}\cdot\sqrt{r}\right) \notag \\
    \coloneqq &\max_{r\in\mathbb{R}} \left(f(r)-\sqrt{|\mathcal S_a|}\cdot g(r)\right).\label{eq:regularized-optimality-informal}
    \end{align}
Note that, since $-\hat\chi_0\left[\pi(\mathcal S_a)\right]$ is approximately asymptotically linear with respect to $|\mathcal S_a|$, by defining $r$ with $-\hat\chi_0\left[\pi(\mathcal S_a)\right]/|\mathcal S_a|$, we ensure that the function $f(r)$ doesn't vary by orders of magnitude when $|\mathcal S_a|$ changes. We will now assume $f(r)$ is a fixed function independent of $|\mathcal S_a|$, which is true when $|\mathcal S_a|$ is large and the ratio $r$ has converged for every possible d-policy.

$g(r)$ is concave. Empirically, $f(r)$ is also likely approximately concave, as it represents the tradeoff between coherence ($\approx$ accuracy) on $\mathcal S_a$ and accuracy on $\mathcal S_b$, and accuracy tradeoff on two different datasets likely exhibits diminishing returns.

Now, remember that there's one more variable we can control, but which we've so far not put to use: $|\mathcal S_a|$. We will now use it as a Lagrange multiplier, and denote it with $\lambda$.

Given any $\lambda$, the oracle gives us the optimal $r^*:f'(r^*)=\lambda$. Our hope is to find a value for $\lambda$ such that $r^*(\lambda)$ also satisfies the optimality condition for Equation~\ref{eq:regularized-optimality-informal}, i.e.,
\[
f'(r^*)-\sqrt\lambda\cdot g'(r^*) = \lambda - \sqrt\lambda\cdot g'(r^*) = \lambda-\sqrt\lambda\cdot \frac 12c\sqrt{\frac{|\mathcal S_b|}{r^*}}=0\iff \lambda=\frac{c^2}{4r^*}|\mathcal S_b|
\]
Now we know that such a $|\mathcal S_a|^*$ exists and
\begin{align*}
|\mathcal S_a|^*
&\approx
\frac{c^2}{4r^*}|\mathcal S_b|
\\ &\approx\frac{\left(-\chi^b(\pi^*(\mathcal S_b)) / [|\mathcal S_b|(1-\alpha^b(\pi^*))]\right)^2}{4(-\hat\chi_0[\pi^*(\mathcal S_a)])/|\mathcal S_a|^*}|\mathcal S_b|
\\[5pt] &=\frac 14\times\underbrace{\overbrace{\frac{\left(-\chi^b(\pi^*(\mathcal S_b)) / |\mathcal S_b|\right)^2}{-\hat\chi_0[\pi^*(\mathcal S_a)]/|\mathcal S_a|^*}}^{\text{\small mean pretrain coherence, squared}}}_{\text{\small mean posttrain coherence, inversed}}\times \underbrace{\left(\frac{1}{1-\alpha^b(\pi^*)}\right)^2}_{\text{\small pretrain error rate, inverse-squared}}\times \underbrace{|\mathcal S_b|}_{\text{\small pretrain sample count}}.
\end{align*}

\end{document}